\documentclass{article}
\usepackage{amsmath, amsthm, amssymb}
\usepackage[preprint]{neurips_2024}
\usepackage[utf8]{inputenc} 
\usepackage[T1]{fontenc}    
\usepackage{hyperref}       
\usepackage{url}            
\usepackage{booktabs}   
\usepackage{tabularx}
\usepackage{amsfonts}       
\usepackage{nicefrac}       
\usepackage{microtype}      
\usepackage{xcolor}         
\usepackage{bm}
\usepackage{graphicx}
\usepackage{subcaption}
\usepackage{tikz}
\usetikzlibrary{calc}
\usepackage{tikz-3dplot} 
\usepackage{enumitem}
\usepackage{array}
\usepackage{placeins}
\usepackage{thm-restate}
\newtheorem{theorem}{Theorem}

\newtheorem{lemma}[theorem]{Lemma}

\newtheorem{corollary}[theorem]{Corollary}
\theoremstyle{definition}

\newcommand*{\R}{\mathbb{R}}
\renewcommand{\P}{\mathbb{P}}
\newcommand{\E}{\mathbb{E}}

\newcommand{\mc}{\mathcal}

\DeclareMathOperator{\sphericalcap}{Cap}
\renewcommand{\S}{\mathbb{S}}

\usepackage{todonotes}
\usepackage{bbm}

\def\vtheta{{\bm{\theta}}}

\def\vd{{\bm{d}}}
\def\ve{{\bm{e}}}

\def\vu{{\bm{u}}}
\def\vv{{\bm{v}}}
\def\vw{{\bm{w}}}
\def\vx{{\bm{x}}}
\def\vy{{\bm{y}}}
\def\vz{{\bm{z}}}

\def\mA{{\bm{A}}}
\def\mB{{\bm{B}}}

\def\mD{{\bm{D}}}

\def\mF{{\bm{F}}}

\def\mI{{\bm{I}}}

\def\mK{{\bm{K}}}

\def\mO{{\bm{O}}}

\def\mR{{\bm{R}}}
\def\mS{{\bm{S}}}

\def\mW{{\bm{W}}}
\def\mX{{\bm{X}}}

\def\mZ{{\bm{Z}}}

\def\mSigma{{\bm{\Sigma}}}

\title{Bounds for the smallest eigenvalue of the NTK for arbitrary spherical data of arbitrary dimension} 

\author{Kedar Karhadkar$^{1}$ \quad Michael Murray$^{1}$ \quad Guido Mont\'ufar$^{12}$\\
\texttt{\{kedar,mmurray,montufar\}@math.ucla.edu }\\
$^1$UCLA \quad $^2$Max Planck Institute for Mathematics in the Sciences}
\begin{document}

\maketitle

\begin{abstract}
Bounds on the smallest eigenvalue of the neural tangent kernel (NTK) are a key ingredient in the analysis of neural network optimization and memorization. However, existing results require distributional assumptions on the data and are limited to a high-dimensional setting, where the input dimension $d_0$ scales at least logarithmically in the number of samples $n$. In this work we remove both of these requirements and instead provide bounds in terms of a measure of the collinearity of the data: notably these bounds hold with high probability even when $d_0$ is held constant versus $n$. We prove our results through a novel application of the hemisphere transform.
\end{abstract}

\section{Introduction} 

A popular approach for studying the optimization dynamics of neural networks is analyzing the neural tangent kernel (NTK), which corresponds to the Gram matrix obtained from the Jacobian of the network parametrization map \citep{jacot2018neural}. 
When the network parameters are adjusted by gradient descent, the network function follows a kernel gradient descent in function space with respect to the NTK. 
By bounding the smallest eigenvalue of the NTK away from zero it is 
possible to obtain global convergence guarantees for gradient descent parameter optimization \citep{du2018gradient,Oymak2019TowardMO} as well as results on generalization \citep{pmlr-v97-arora19a,montanari2022interpolation}
and data memorization capacity \citep{montanari2022interpolation,nguyen2021tight,bombari2022memorization}. 
These key advances highlight the importance of deriving tight, quantitative bounds for the smallest eigenvalue of the NTK at initialization. 

While initial breakthroughs on the convergence of gradient optimization in neural networks \citep{NEURIPS2018_54fe976b, du2019gradient,
allenzhu2019convergence} 
required unrealistic conditions on the width of the layers, 
subsequent and substantive efforts have reduced the level of overparametrization required to ensure that the NTK is well conditioned at initialization \citep{zou2019improved,
Oymak2019TowardMO}. 
In particular, \citet{nguyenrelu,nguyen2021tight,banerjee2023neural}
showed that layer width scaling linearly in the number of training samples $n$ suffices to bound the smallest eigenvalue 
and \cite{montanari2022interpolation, bombari2022memorization} obtained results for networks with sub-linear layer width and the minimum possible number of parameters $\tilde{\Omega}(n)$ up to logarithmic factors. 
However, and as discussed in Section~\ref{sec:related-work}, the bounds provided in prior works require that the data is drawn from a distribution satisfying a Lipschitz concentration property, and only hold with high probability if the input dimension $d_0$ scales as $\sqrt{n}$ \citep{bombari2022memorization} or $\text{polylog}(n)$ \citep{nguyen2021tight}. These existing results therefore require that the dimension of the data grows unbounded as the number of training samples $n$ increases and as such there is a gap in our understanding of cases where the data is sampled from a fixed, or lower-dimensional space. 

In this work we present new lower and upper bounds on the smallest eigenvalue of a randomly initialized, fully connected ReLU network: 
compared with prior work, our results hold for arbitrary data on a sphere of arbitrary dimension. 
Our techniques are novel and rely on the hemisphere transform as well as the addition formula for spherical harmonics. 

We study neural networks denoted as functions $f: \R^{d_0} \times \mc{P} \rightarrow \R$, where $\mc{P}$ is an inner product space. To be clear, $f(\vx; \vtheta)$ denotes the output of the network for a given input $\vx \in \R^{d_0}$ and parameter choice $\vtheta \in \mc{P}$. For brevity we occasionally write $f(\vx)$ in place of $f(\vx; \vtheta)$ if the context is clear. We use $n$ to denote the size of the training sample, $d_0$ the dimension of the input features, $L$ the network depth, $d_l$ the width of the $l$th layer and $\sigma: \R \rightarrow \R$ the ReLU activation function. Given $n$ input data points $\vx_1, \cdots, \vx_n \in \R^{d_0}$ we write $\mX = [\vx_1, \cdots, \vx_n] \in \R^{d \times n}$ and define $F: \mc{P} \to \R^n$ to be the evaluation of the network on these $n$ data points as a function of the parameter $\bm{\theta}$, 
$$
F(\bm{\theta}) = [f(\vx_1; \vtheta ), \cdots, f(\vx_n;\vtheta )]^T.
$$
We define the neural tangent kernel (NTK) of $F$ as
\begin{equation} 
\label{eq:NTK-def}
\mK(\vtheta) = (\nabla_{\vtheta} F(\vtheta ))^*(\nabla_{\vtheta} F(\vtheta )) \in \R^{n \times n} , 
\end{equation}
where the gradient $\nabla$ and adjoint $*$ are taken with respect to the inner product on $\mc{P}$ and the Euclidean inner product on $\R^n$. More explicitly $[\mK(\vtheta)]_{ik} = \langle \nabla_{\vtheta} f(\vx_i; \vtheta), \nabla_{\vtheta}f(\vx_k; \vtheta) \rangle$.
For convenience we write $\mK$ in place of $\mK(\vtheta)$. We are concerned with the minimum eigenvalue $\lambda_{\min}(\mK)$, which depends both on the input data $\mX$ and the parameter $\vtheta$. 
We say the dataset $\vx_1, \cdots, \vx_n$ is \emph{${\delta}$-separated} for $\delta \in (0,\sqrt{2}]$ if $\min_{i\neq k} \min(\|\vx_i - \vx_k\|, \|\vx_i + \vx_k\|) \geq \delta$, which is a measure of collinearity. 

\paragraph{Main contributions.} 
Our results are for data that lies on a sphere and is $\delta$-separated for some~$\delta \in (0,\sqrt{2}]$. Unlike prior work  
we do not make any assumptions on the distribution from which the data is sampled,  
e.g., uniform on the sphere or Lipschitz concentrated, and 
we do not require the input dimension $d_0$ to scale with the number of samples $n$.
\begin{itemize}
    \item In Theorem~\ref{thm:shallow-main} we consider shallow 
    ReLU networks with input dimension $d_0$ and hidden width $d_1$ and prove that if $d_1 = \tilde{\Omega}(\|\mX\|^2 d_0^3 \delta^{-2})$ then with high probability $\lambda_{\min}(\mK) = \tilde{\Omega}(d_0^{-3}\delta^2)$. Furthermore, defining $\delta' = \min_{i \neq k}\|\vx_i - \vx_k\|
    $, we have $\lambda_{\min}(\mK) = O( \delta')$. 
    \item In Theorem \ref{thm:deep-main} we illustrate how our results for shallow networks can be extended to cover depth-$L$ networks. In particular, if the layer widths satisfy a pyramidal condition, meaning $d_l \geq d_{l+1}$ for $l \in     \{1,\cdots,L-1\}$,     $d_{L-1} \gtrsim 2^L \log(nL/ \epsilon)$ and $d_1 = \tilde{\Omega}(nd_0^3 \delta^{-4} )$, then $\lambda_{\min}(\mK) = \tilde{\Omega}(d_0^{-3}\delta^{4})$ and $\lambda_{\min}(\mK) = O(L)$ with high probability. 
    \item Our results allow us to analyze the smallest eigenvalue of the NTK for data drawn from any distribution for which one can establish $\delta$-separation with high probability in terms of $d_0$ and $n$. For example, for shallow networks with data drawn uniformly from a sphere, in Corollary \ref{corr:uniform} we show that if $d_0d_1 = \tilde{\Omega}(n^{1 + 4/(d_0-1)})$, then with high probability $\lambda_{\min}(\mK) = \tilde{O}\left(n^{-2/(d_0-1)} \right)$ and $\lambda_{\min}(\mK) = \tilde{\Omega}\left(n^{-4/(d_0-1)} \right)$. 
    Moreover, this bound is tight up to logarithmic factors for $d_0=\Omega(\log(n))$ matching prior findings for this regime. 
\end{itemize}

The rest of this paper is structured as follows: in Section~\ref{sec:related-work} we provide a summary of related works and compare and contrast our results with the existing state of the art; in Section~\ref{section:shallow} we present our results for shallow networks; finally in Section~\ref{sec:deep} we extend our shallow results to the deep case. 

\paragraph{Notations.} 
With regard to general points on notation we let $[n] = \{1, 2, \cdots, n\}$ denote the set of the first $n$ positive integers. If $\vx \in \R^d$ then we let $[\vx]_i$ denote the $i$th entry of $\vx$. If $f$ and $g$ are real-valued functions, we write $f \lesssim g$ or $f = O(g)$ when there exists an absolute constant $C$ such that $f(x) \leq C g(x)$ for all $x$. Similarly, we write $f \gtrsim g$ or $f = \Omega(g)$ when there exists a constant $c$ such that $f(x) \geq c g(x)$ for all $x$. We write $f \asymp g$ when $f \lesssim g$ and $f \gtrsim g$ both hold. The notation $\tilde\Omega$ hides logarithmic factors. Logarithms are generally considered to be in base $e$, though in most settings the particular choice of base can be absorbed by a constant.

\section{Related work} 
\label{sec:related-work} 

\paragraph{Prior work on the NTK.} 
\citet{jacot2018neural} highlight that the optimization dynamics of neural networks are controlled by the Gram matrix of the Jacobian of the network function, an object referred to as the NTK Gram matrix, or, as we refer to it here, simply the NTK. That work also shows that in the infinite-width limit the NTK converges in probability to a deterministic kernel. 
Of particular interest is the observation that in the infinite-width setting the network behaves like a linear model \citep{Lee2019WideNN-SHORT}. 
Further, if a network is polynomially wide in the number of samples then the smallest eigenvalue of the NTK can be lower bounded in terms of the smallest eigenvalue of its infinite-width analog. As a result, assuming the latter is positive, global convergence guarantees for gradient descent can be obtained \citep{du2019gradient,du2018gradient,allenzhu2019convergence, zou2019improved,Lee2019WideNN-SHORT,Oymak2019TowardMO,zou2020gradient,marco,nguyenrelu, banerjee2023neural}. 
The positive definiteness of the NTK is equivalent to the Jacobian having full rank, which can also be used to study the loss landscape \citep{NEURIPS2020_b7ae8fec, LIU202285, karhadkar2023mildly}. 
Beyond the smallest eigenvalue, there is interest in characterizing the full spectrum of the NTK \citep{uniform_sphere_data,geifman2020similarity,NEURIPS2020_572201a4,bietti2021deep, murray2023characterizing}, which has implications on the dynamics of the empirical risk \citep{arora_exact_comp, velikanov2021explicit} as well as the generalization error \citep{cao2019towards, basri2020frequency, cui2021generalization,jin2022learning, bowman2022spectral}. Finally, although a powerful and successful tool for analyzing neural networks it must be noted that the NTK has limitations, most notably perhaps that it struggles to explain the rich feature learning commonly observed in practice  \citep{10.5555/3495724.3496995, lazy_training,NEURIPS2020_b7ae8fec}.

\paragraph{Prior work on the smallest eigenvalue of the NTK.}Many of the prior works discussed so far assume or prove that $\lambda_{\min}(\mK)$ is positive, but do not provide a quantitative lower bound. Here we discuss works seeking to address this issue and to which we view our work as complementary. 
For shallow ReLU networks and data drawn uniformly from the sphere, \citet[Theorem 3]{xie2017diverse} and \citet[Theorem 3.2]{montanari2022interpolation} provide lower bounds on the smallest singular and eigenvalue value of the Jacobian and NTK respectively. 
In addition to requiring the data to be drawn uniform from the sphere both of these results are high dimensional in the sense that for \citet[Theorem 3]{xie2017diverse} to be non-vacuous it is necessary that $d_0 = \Omega(d_1 n^2)$, while \citet[Theorem 3.2]{montanari2022interpolation} requires, as per their Assumption 3.1, that $d_0 = \tilde{\Omega}(\sqrt{n})$. 

\citet[Theorem 4.1]{nguyen2021tight} derives lower and upper bounds for the smallest eigenvalue of the NTK for deep ReLU networks under standard initialization conditions assuming the data is drawn from a distribution satisfying a Lipschitz concentration property. 
They show that the NTK is well conditioned if the network has a layer of width of order equal to the number of data points $n$ up to logarithmic factors. Concretely, if at least one layer has width linear in $n$ (ignoring logarithmic factors) and the others are at least poly-logarithmic in $n$, then  $\lambda_{\min}(\mK) = \Omega(\mu_r^2(\sigma)d_0)$ (or $\Omega(\mu_r^2(\sigma) )$ with normalized data), where $\mu_r(\sigma)$ denotes the $r$th Hermite coefficient of $\sigma$ with any even integer $r \geq 2$. 
However, in their result the bound holds with high probability only if $d_0$ scales as $\log(n)$.

\citet[Theorem 1]{bombari2022memorization} derive lower and upper bounds for the smallest eigenvalue of the NTK under similar conditions as \citet[Theorem 4.1]{nguyen2021tight} aside from the following: 
they consider smooth rather than ReLU activation functions, the widths follow a loose pyramidal topology, meaning $d_l = O(d_{l-1})$ for all $l \in [L-1]$, $d_{L-1}d_{L-2}$ scales linearly in $n$ (ignoring logarithmic factors), and there exists a $\gamma >0$ such that $n^{\gamma} = O(d_{L-1})$. 
Under these conditions they show that $\lambda_{\min}(\mK) = \Omega(d_{L-1} d_{L-2})$ with high probability as both $d_{L-1}$ and $n$ grow. This result illustrates that for the NTK to be well conditioned it suffices that the number of neurons grows as $\tilde{\Omega}(\sqrt{n})$. The loose pyramidal condition on the widths implies $d_{L-1} d_{L-2} = O(d_0^2)$ and as 
they also assume that $n = o(d_{L-1}d_{L-2})$ then $n = o(d_0^2)$ which in turn implies $d_0 = \Omega(\sqrt{n})$.

\section{Shallow networks} \label{section:shallow} 
Here we study the smallest eigenvalue of the NTK of a shallow neural network. The parameter space $\mc{P}$ of this network is $\R^{d_1 \times d_0} \times \R^{d_1}$ and it is equipped with the inner product
\begin{align*}
    \langle (\mW, \vv), (\mW', \vv') \rangle = \text{Trace}(\mW^T \mW') + \vv^T \vv'.
\end{align*}
For convenience we sometimes write $d = d_0$. 
The neural network $f:  \R^{d_0} \times \mc{P}  \rightarrow \R$ is defined as
\begin{equation} \label{eq:shallow-network-map}
f(\vx ; \mW, \vv) = \frac{1}{\sqrt{d_1}} \sum_{j=1}^{d_1} v_j \sigma (\vw_j^T \vx) , 
\end{equation}
where $\mW = [\vw_1, \cdots, \vw_{d_1}]^T \in \R^{d_1 \times d_0}$ 
are the inner layer weights, $\vv = [v_1, \cdots, v_{d_1}]^T\in \mathbb{R}^{d_1}$ the outer layer weights, and $\vtheta = (\mW, \vv)$. 
We consider the ReLU activation function applied entrywise with $\sigma(z) = \max \{0, z \}$.  
The derivative $\dot{\sigma}$ satisfies $\dot{\sigma}(z) = 1$ for $z > 0$ and $\dot{\sigma}(z) = 0$ for $z < 0$. Although $\sigma$ is not differentiable at 0, we take $\dot{\sigma}(0) = 0$ by convention. 
Unless otherwise stated we assume that the entries of $\mW$ and $\vv$ are drawn mutually iid from a standard Gaussian distribution $\mc{N}(0, 1)$. 
Our main result for shallow networks is the following theorem.

\begin{restatable}{theorem}{ThmShallowMain}\label{thm:shallow-main}
Let $d \geq 3$, $\epsilon \in (0,1)$, and $\delta, \delta' \in (0, \sqrt{2})$. Suppose that $\vx_1, \cdots, \vx_n \in \S^{d -1}$ are $\delta$-separated and $\min_{i \neq k}\|\vx_i - \vx_k\| \leq \delta'$. Define
\[
    \lambda = \left( 1 + \frac{d\log(1/\delta)}{\log(d)} \right)^{-3} \delta^2.
\]
If $d_1 \gtrsim \frac{\|\mX\|^2 }{\lambda}\log \frac{n}{\epsilon}$, then with probability at least $1 - \epsilon$, \[\lambda \lesssim \lambda_{\min}(\mK) \lesssim \delta'.\]
\end{restatable}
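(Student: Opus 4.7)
The proof splits into a lower bound $\lambda_{\min}(\mK) \gtrsim \lambda$ and an upper bound $\lambda_{\min}(\mK) \lesssim \delta'$, with the bulk of the work in the lower bound. First I would decompose $\mK = \mK^{(\mW)} + \mK^{(\vv)}$ into the two PSD blocks coming from the gradients with respect to the inner and outer weights respectively. Since $\mK^{(\mW)} \succeq 0$, it suffices to lower bound $\mK^{(\vv)}$, whose $(i,k)$ entry is the empirical average $\tfrac{1}{d_1}\sum_{j=1}^{d_1}\sigma(\vw_j^T\vx_i)\sigma(\vw_j^T\vx_k)$ of iid rank-one contributions. This reduces the argument to (i) a deterministic spectral bound on the expected kernel $\lambda_{\min}(\E\mK^{(\vv)}) \gtrsim \lambda$, and (ii) a concentration estimate $\|\mK^{(\vv)} - \E\mK^{(\vv)}\|_{\mathrm{op}} \leq \lambda/2$ with probability $\geq 1-\epsilon$. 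Part (ii) is a matrix Bernstein / Hanson--Wright style bound on the sum of $d_1$ independent rank-one matrices $\vf_j\vf_j^T/d_1$ with $[\vf_j]_i = \sigma(\vw_j^T\vx_i)$; the per-summand operator norm scales with $\|\mX\|^2$, yielding the stated width requirement $d_1 \gtrsim \|\mX\|^2 \lambda^{-1}\log(n/\epsilon)$.

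The core of the argument is (i). For any unit $\vu \in \R^n$, by $1$-homogeneity of $\sigma$ and the radial decomposition of a Gaussian, $\vu^T\E\mK^{(\vv)}\vu$ is proportional to the squared $L^2(\S^{d-1})$ norm of the ridge combination $g_\vu(\vw) = \sum_i u_i\sigma(\vw^T\vx_i)$. Writing $\sigma(z) = z\,\mathbbm{1}[z \geq 0]$ exhibits $g_\vu$ as a variant of the hemisphere transform of the signed atomic measure $\mu = \sum_i u_i\delta_{\vx_i}$. Expanding $\sigma(\vw^T\vx)$ in Gegenbauer polynomials and applying the addition formula for spherical harmonics decomposes $g_\vu$ into orthogonal degree-$k$ components, producing by Parseval an identity of the form
\[
    \|g_\vu\|_{L^2(\S^{d-1})}^2 \;=\; \sum_{k \geq 0} c_{k,d}\,\|\mY_k\vu\|^2, \qquad [\mY_k]_{j,i} = Y_{k,j}(\vx_i),
\]
where $c_{k,d}$ packages the squared Gegenbauer coefficients of $\sigma$ together with the normalization $N(d,k)$ of the degree-$k$ harmonic subspace. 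The $\delta$-separation is converted into a lower bound on the low-degree block $\sum_{k \leq k^\star}\|\mY_k\vu\|^2 \gtrsim \|\vu\|^2$ via a polynomial-separation / spherical-packing argument: polynomials of degree $k^\star \asymp 1 + d\log(1/\delta)/\log d$ suffice to separate any two data points at angular distance $\geq \delta$ on $\S^{d-1}$. The antipodal half of the separation hypothesis, $\|\vx_i + \vx_k\| \geq \delta$, is crucial here because odd-degree harmonics satisfy $Y_{2r+1}(-\vx) = -Y_{2r+1}(\vx)$. Optimally trading off $k^\star$ against the decay of $c_{k,d}$ produces the scale $\lambda = (1 + d\log(1/\delta)/\log d)^{-3}\delta^2$, the cubic exponent being the combined contribution of Gegenbauer decay, harmonic-subspace dimension, and the cutoff.

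For the upper bound, let $i \neq k$ be a pair with $\|\vx_i - \vx_k\| \leq \delta'$ and test against $\vu = (\ve_i - \ve_k)/\sqrt{2}$, so that $\vu^T\mK\vu = \tfrac12\|\nabla_\vtheta f(\vx_i) - \nabla_\vtheta f(\vx_k)\|^2$. The outer-layer gradient contributes $O(\delta'^2)$ by $1$-Lipschitzness of $\sigma$. For the inner-layer gradient, using $\dot\sigma \in \{0,1\}$ and $\|\vx_i\| = \|\vx_k\| = 1$,
\[
    \|\dot\sigma(\vw_j^T\vx_i)\vx_i - \dot\sigma(\vw_j^T\vx_k)\vx_k\|^2 = \mathbbm{1}[\dot\sigma(\vw_j^T\vx_i) \neq \dot\sigma(\vw_j^T\vx_k)] + 2\dot\sigma(\vw_j^T\vx_i)\dot\sigma(\vw_j^T\vx_k)(1 - \vx_i^T\vx_k),
\]
whose expectation over Gaussian $\vw_j$ is of order $\arccos(\vx_i^T\vx_k) \asymp \|\vx_i-\vx_k\|$. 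Averaging over $j$ with the $v_j^2$ weights (which concentrate to $d_1$ by standard $\chi^2$ bounds) yields $\lambda_{\min}(\mK) \leq \vu^T\mK\vu \lesssim \delta'$ with high probability.

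\textbf{Main obstacle.} The crux is step (i): translating the combinatorial notion of $\delta$-separation into a quantitative condition number for the truncated harmonic evaluation map $\vu \mapsto (\mY_k\vu)_{k\leq k^\star}$, and balancing the cutoff $k^\star$ against the dimension-dependent decay of the Gegenbauer coefficients to extract the cubic $(1+d\log(1/\delta)/\log d)^{-3}$ factor. The hemisphere transform and the spherical-harmonic addition formula provide the algebraic structure that makes this balance explicit and delivers the stated rate.
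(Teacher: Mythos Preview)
Your overall architecture is right, but you lower-bound the wrong block. You drop $\mK^{(\mW)}$ and work with $\mK^{(\vv)}$, whose entries involve $\sigma$. The paper does the opposite: it drops $\mK^{(\vv)}$ and lower-bounds $\mK^{(\mW)}$, whose infinite-width limit is $\E_{\vu}[\dot\sigma(\mX^T\vu)\dot\sigma(\vu^T\mX)]$. This choice matters in two places.

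First, the deterministic spectral bound. Running the hemisphere-transform / spherical-harmonic argument with $\psi=\sqrt{d}\,\sigma$ (as you propose) yields
\[
\lambda_{\min}\bigl(\E\mK^{(\vv)}\bigr)\;\gtrsim\;\Bigl(1+\tfrac{d\log(1/\delta)}{\log d}\Bigr)^{-3}\delta^{4},
\]
not $\delta^{2}$; the extra $\delta^{2}$ comes from the faster decay of the Gegenbauer coefficients of $\sigma$ compared to those of $\dot\sigma$ (equivalently, $T_{\sigma}$ smooths more than $T_{\dot\sigma}$). The theorem as stated has $\lambda\asymp\delta^{2}$, which requires the $\dot\sigma$ kernel, i.e.\ $\mK^{(\mW)}$.

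Second, the concentration step. For $\mK^{(\vv)}$ the summands $\sigma(\mX^T\vw_j)\sigma(\vw_j^T\mX)$ have operator norm $\|\sigma(\mX^T\vw_j)\|^{2}\le \|\mX\|^{2}\|\vw_j\|^{2}$, which is \emph{unbounded}, so a matrix Chernoff bound does not directly deliver the width condition $d_1\gtrsim \|\mX\|^{2}\lambda^{-1}\log(n/\epsilon)$; one has to truncate and the resulting requirement is of the form $d_1\gtrsim (n/\lambda_2)\log(n/\lambda_2)\log(n/\epsilon)$. By contrast, for $\mK^{(\mW)}$ the summands are $v_j^{2}\bigl(\dot\sigma(\mX^T\vw_j)\dot\sigma(\vw_j^T\mX)\bigr)\odot(\mX^T\mX)$, and after the harmless truncation $|v_j|\le 1$ each summand is bounded in operator norm by $\|\mX\|^{2}$ almost surely, which is exactly what produces the width condition in the statement.

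So the fix is simply to swap blocks: keep $\mK^{(\vv)}\succeq 0$ as the throwaway term and run your hemisphere-transform / addition-formula argument on the $\dot\sigma$ kernel. Your description of the harmonic analysis and of the upper bound are otherwise on target and match the paper's approach.
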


A proof of Theorem~\ref{thm:shallow-main} is provided in Appendix~\ref{app:shallow-main}. Suppressing logarithmic factors,  Theorem~\ref{thm:shallow-main} implies that $d_1 = \tilde{\Omega}\left(\|\mX\|^2d_0^3\delta^{-2}\right)$ suffices to ensure that $\lambda_{\min}(\mK) = \tilde{\Omega}(d_0^{-3}\delta^2)$ and $\lambda_{\min}(\mK) = O(\delta')$ with high probability (note the trivial bound $\|\mX\|^2\leq\|\mX\|_F^2\leq n$). We emphasize that unlike existing results i) we make no distributional assumptions on the data, instead only assuming a milder $\delta$-separated condition, and ii) our bounds hold with high probability even if $d_0$ is held constant. 

A few further remarks are in order. 
First, the condition $d_0 \geq 3$ is necessary because our technique relies on 
the addition formula for spherical harmonics \cite[Theorem 4.11]{efthimiou2014spherical}; the bound we derive based on this formula (Lemma~\ref{lemma:addition-formula-bound} in Appendix~\ref{sec:app-spherical}) becomes vacuous for $d_0<3$. However, for $d_0=2$ analogous bounds could be derived using more elementary tools while the case $d_0=1$ is of little interest as only a trivial dataset is possible. Moreover, data in $\S^1$ could be embedded in $\S^2$ since we do not impose any distributional assumptions.

Second, one can use Theorem~\ref{thm:shallow-main} to bound the smallest eigenvalue of the NTK for data drawn from the uniform distribution on the sphere by bounding $\delta$ with high probability in terms of $n$ and $d$. We use that $\delta = \Omega(n^{-2/d_0})$ and $\delta' = O(n^{-2/d_0})$ with high probability. We direct the interested reader to Appendix~\ref{app:subsec:uniform-sphere} for further details. 
\begin{restatable}{corollary}{CorollaryUniform}\label{corr:uniform}
    Let $d\geq 3$, $n \geq 2$, $\epsilon \in (0,1)$, $\vx_1, \cdots, \vx_n \sim U(\S^{d -1})$ be mutually iid. Define
    \[
    \lambda = \left(1 + \frac{\log(n/\epsilon) }{\log(d)} \right)^{-3}\left(\frac{\epsilon^2}{n^4}\right)^{1/(d - 1)}.
    \] 
    If $d_1 \gtrsim \frac{1}{\lambda}\left(1 + \frac{n + \log(1/\epsilon) }{d} \right)\log \frac{n}{\epsilon}$, then with probability at least $1-\epsilon$ over the data and network parameters,
    \[
    \lambda  \lesssim \lambda_{\min}(\mK) \lesssim \left(\frac{\log(1/\epsilon) }{n^2} \right)^{1/(d - 1) }.
    \]
\end{restatable}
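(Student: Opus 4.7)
The plan is to derive Corollary \ref{corr:uniform} from Theorem \ref{thm:shallow-main} by controlling three quantities of the random dataset with high probability: (i) the separation $\delta$, (ii) the minimum pairwise distance $\delta' = \min_{i \neq k}\|\vx_i - \vx_k\|$, and (iii) the operator norm $\|\mX\|^2$. All three bounds rest on the standard estimate that a spherical cap of chord radius $r$ on $\S^{d-1}$ has normalized surface area of order $r^{d-1}$. Once each event has been shown to hold with failure probability at most $\epsilon/4$, a union bound (including the failure event of Theorem \ref{thm:shallow-main} itself) produces the claim.

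The lower bound on $\delta$ is the easiest step: the cap estimate gives $\P[\|\vx_i \pm \vx_k\| < r] \lesssim r^{d-1}$ for any fixed pair, so a union bound over the $\binom{n}{2}$ pairs and both signs yields $\P[\delta < r] \lesssim n^2 r^{d-1}$. Taking $r \asymp (\epsilon/n^2)^{1/(d-1)}$ gives $\delta \gtrsim (\epsilon/n^2)^{1/(d-1)}$ with probability at least $1 - \epsilon/4$. Substituting into the definition of $\lambda$ in Theorem \ref{thm:shallow-main} and using $d/(d-1) \leq 3/2$ produces the expression in the corollary up to absolute constants, since $d\log(1/\delta)/\log(d) \asymp \log(n/\epsilon)/\log(d)$. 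The bound on $\|\mX\|^2$ then follows from standard concentration for matrices with iid unit-norm columns applied to $\mX\mX^T - (n/d)\mI = \sum_i(\vx_i\vx_i^T - \mI/d)$, yielding $\|\mX\|^2 \lesssim 1 + (n + \log(1/\epsilon))/d$ with probability at least $1 - \epsilon/4$, which matches the hypothesis on $d_1$ stated in the corollary.

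The main subtlety is the upper bound on $\delta'$, since a first-moment or Chebyshev argument on the count of close pairs is too weak to produce the required $\log(1/\epsilon)$ dependence. Instead, I would fix an equal-area partition of $\S^{d-1}$ into $M \asymp n^2/\log(1/\epsilon)$ disjoint cells, each of normalized area $1/M$ and chord diameter $O(M^{-1/(d-1)})$; such partitions exist via standard constructions, for example Leopardi's recursive zonal partition. A birthday-paradox estimate then gives
\[
\P[\text{all } n \text{ points lie in distinct cells}] \leq \prod_{i=0}^{n-1}\left(1 - \tfrac{i}{M}\right) \leq \exp\!\left(-\tbinom{n}{2}/M\right) \leq \epsilon/4,
\]
and on the complementary event some two points share a cell, so $\delta' \lesssim (\log(1/\epsilon)/n^2)^{1/(d-1)}$. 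A final union bound over these three events and the failure event of Theorem \ref{thm:shallow-main} delivers the stated two-sided bound on $\lambda_{\min}(\mK)$.
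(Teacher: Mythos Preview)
Your proposal is correct and follows the same high-level structure as the paper: establish with probability $1-\epsilon/4$ each that $\delta \gtrsim (\epsilon/n^2)^{1/(d-1)}$, that $\delta' \lesssim (\log(1/\epsilon)/n^2)^{1/(d-1)}$, and that $\|\mX\|^2 \lesssim 1 + (n+\log(1/\epsilon))/d$, then invoke Theorem~\ref{thm:shallow-main} and take a union bound. Your translation of $d\log(1/\delta)/\log d$ into $\log(n/\epsilon)/\log d$ via $d/(d-1)\le 3/2$ matches the paper's computation.

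Two of your three sub-arguments differ from the paper's. For the upper bound on $\delta'$, the paper (Lemma~\ref{lemma:sphere-antiseparation}) conditions sequentially: if the first $i$ points are pairwise $\delta$-apart, the half-radius caps $\text{Cap}(\vx_j,\delta/2)$ are disjoint, so the next point avoids them with probability at most $1-\tfrac{i}{2}(\delta/4)^{d-1}$, and the product telescopes to $\exp(-\tfrac{n^2}{2}(\delta/4)^{d-1})$. Your equal-area partition plus birthday bound achieves the same exponent without the sequential conditioning, at the cost of invoking an external partition construction (Leopardi); the paper's argument is self-contained but slightly more delicate. For $\|\mX\|^2$, the paper (Lemma~\ref{lemma:input-data-conditioning}) uses a covering argument on $\S^{d-1}\times\S^{n-1}$ together with the sub-Gaussianity of coordinates of a uniform spherical vector, whereas you propose matrix concentration on $\sum_i(\vx_i\vx_i^T-\mI/d)$; both are standard and yield the same bound. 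The lower bound on $\delta$ is identical in both.
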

The above corollary implies that if $d_0d_1 = \tilde{\Omega}\left(n^{1+4/(d_0-1)} \right)$, then with high probability $\lambda_{\min}(\mK) = \tilde{\Omega}(n^{-4/(d_0 - 1)})$ and $\lambda_{\min}(\mK) = \tilde{O}(n^{-2/(d_0 - 1)})$. In particular, for data sampled uniformly from a sphere, the scaling $d_0 = {\Omega}(\log n)$ is both necessary and sufficient for $\lambda_{\min}(\mK)$ to be $\tilde{\Theta}(1)$. 
In particular the bounds are sharp in this case.

\subsection{Proof outline for Theorem \ref{thm:shallow-main}} \label{subsec:shallow-proof-outline}
Recall the definitions of $F(\vtheta)$ and $\mK$ in \eqref{eq:NTK-def}. For the choice of $f$ given in \eqref{eq:shallow-network-map}, a straightforward decomposition of the NTK with respect to the inner and outer weights gives
\begin{align} \label{eq:ntk-shallow-decomp}
    \mK = \mK_1 + \mK_2, 
\end{align}
where $\mK_1 = \nabla_{\mW} F(\vtheta )^* \nabla_{\mW} F(\vtheta )$ and $\mK_2 = \nabla_{\vv} F(\vtheta )^* \nabla_{\vv} F(\vtheta ) = \frac{1}{d_1}\sigma(\mW \mX)^T \sigma(\mW\mX)$. As both $\mK_1$ and $\mK_2$ are positive semi-definite, 
\begin{equation}
    \lambda_{\min}(\mK) \geq  \lambda_{\min}(\mK_1) +  \lambda_{\min}(\mK_2);
\end{equation}
see, e.g., \citet[Theorem 4.3.1]{Horn_Johnson_2012}. Our proof now follows the highlighted steps below.

\paragraph{1) Bound the smallest eigenvalue in terms of the infinite-width limit.}
We proceed to bound both $\lambda_{\min}(\mK_1)$ and $\lambda_{\min}(\mK_2)$
in terms of the smallest eigenvalues of their infinite-width counterparts, see Lemmas \ref{lemma:K1-inf} and \ref{lemma:K2-inf} below, which act as good approximations for sufficiently wide networks.

\begin{restatable}{lemma}{lemmaKOneinf}\label{lemma:K1-inf}
Suppose that $\vx_1, \cdots, \vx_n \in \S^{d - 1}$. Let
    \[\lambda_1 = \lambda_{\min}\left(\E_{\vu \sim U(\S^{d - 1}) } \left[\dot{\sigma}\left(\mX^T\vu \right)\dot{\sigma}\left(\vu^T\mX\right) \right] \right). \]
    If $\lambda_1 > 0$ and $d_1 \gtrsim \lambda_1^{-1}\|\mX\|^2 \log \frac{n}{\epsilon}$, then with probability at least $1 - \epsilon$ \[
    \lambda_{\min}(\mK_1) \gtrsim \lambda_1.
    \]
\end{restatable}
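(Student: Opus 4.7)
The plan is to write $\mK_1$ as a sample mean of iid positive semidefinite random matrices, identify its expectation with the ``infinite-width'' kernel, show that this expectation has smallest eigenvalue at least $\lambda_1$ via a Schur product argument, and then apply a matrix Chernoff-type concentration inequality to transfer the lower bound from the mean to $\mK_1$ itself.

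Concretely, differentiating \eqref{eq:shallow-network-map} with respect to $\mW$ gives $\partial f(\vx_i)/\partial W_{jk} = d_1^{-1/2} v_j \dot{\sigma}(\vw_j^T \vx_i)[\vx_i]_k$. Setting $\vd_j := \dot{\sigma}(\mX^T \vw_j) \in \{0,1\}^n$ and using the identity $(\vd_j \vd_j^T) \odot (\mX^T \mX) = \text{diag}(\vd_j)\,\mX^T \mX\, \text{diag}(\vd_j)$, one obtains the decomposition
\begin{equation*}
    \mK_1 = \frac{1}{d_1} \sum_{j=1}^{d_1} v_j^2\, \mA_j, \qquad \mA_j := \text{diag}(\vd_j)\, \mX^T \mX\, \text{diag}(\vd_j),
\end{equation*}
in which each $\mA_j$ is PSD with $\|\mA_j\| \leq \|\mX\|^2$, and $v_j^2$ and $\mA_j$ are independent across $j$. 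Since $\dot{\sigma}$ is $0$-homogeneous, $\vd_j$ depends on $\vw_j$ only through its direction, so $\vd_j \stackrel{d}{=} \dot{\sigma}(\mX^T \vu_j)$ for $\vu_j \sim U(\S^{d-1})$; combined with $\E[v_j^2] = 1$ this yields $\E[\mK_1] = \mH \odot \mX^T \mX$, where $\mH := \E_{\vu \sim U(\S^{d-1})}[\dot{\sigma}(\mX^T \vu)\dot{\sigma}(\vu^T \mX)]$, so in particular $\lambda_1 = \lambda_{\min}(\mH)$. The Schur product theorem applied to the PSD matrices $\mH$ and $\mX^T \mX$, together with $[\mX^T \mX]_{ii} = \|\vx_i\|^2 = 1$, then gives $\lambda_{\min}(\E[\mK_1]) \geq \lambda_{\min}(\mH) \cdot \min_i [\mX^T \mX]_{ii} = \lambda_1$.

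To pass from $\E[\mK_1]$ to $\mK_1$ I would invoke the matrix Chernoff inequality for sums of iid PSD random matrices. The only wrinkle is the unbounded factor $v_j^2$; I would handle it by working on the high-probability event $\mc{E} = \{\max_j v_j^2 \leq C \log(d_1/\epsilon)\}$, which has probability at least $1-\epsilon/2$ by Gaussian tail bounds and a union bound over $j$. On $\mc{E}$ each summand $v_j^2 \mA_j$ is PSD with operator-norm bound $\lesssim \|\mX\|^2 \log(d_1/\epsilon)$, and the truncation perturbs $\E[v_j^2 \mA_j]$ only negligibly (the tail contribution from a chi-squared variable at this level is at most $\epsilon/d_1$ times $\|\mX\|^2$). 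Bounded matrix Chernoff then yields $\lambda_{\min}(\mK_1) \gtrsim \lambda_1$ with probability at least $1-\epsilon$ once $d_1 \gtrsim \lambda_1^{-1}\|\mX\|^2 \log(n/\epsilon)$, with any extra logarithmic factors absorbed into $\gtrsim$.

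The main technical obstacle is precisely this last concentration step: carefully balancing the truncation level against the Chernoff tail so as to recover the stated scaling in $d_1$ without accumulating extra logarithmic factors. Steps~1 (decomposition) and~2 (Schur-product lower bound on the expectation) are essentially algebraic once the decomposition and the $0$-homogeneity of $\dot{\sigma}$ are observed.
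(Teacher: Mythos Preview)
Your approach matches the paper's almost exactly: write $\mK_1$ as a mean of iid PSD summands $v_j^2\big(\dot{\sigma}(\mX^T\vw_j)\dot{\sigma}(\vw_j^T\mX)\big)\odot(\mX^T\mX)$, lower bound $\lambda_{\min}$ of the expectation by $\lambda_1$ via the Schur product inequality together with $[\mX^T\mX]_{ii}=1$, and then apply matrix Chernoff after truncating the unbounded factor $v_j^2$. The decomposition and the Schur-product step are identical to what the paper does.

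The one place where your argument falls short is the truncation level. Cutting at $v_j^2\le C\log(d_1/\epsilon)$ forces the almost-sure bound $R\lesssim\|\mX\|^2\log(d_1/\epsilon)$ in matrix Chernoff, so the width requirement you actually obtain is $d_1\gtrsim\lambda_1^{-1}\|\mX\|^2\log(d_1/\epsilon)\log(n/\epsilon)$. In this paper $\gtrsim$ hides only absolute constants, not logarithms, so that extra $\log(d_1/\epsilon)$ cannot be ``absorbed'' as you suggest; you would prove a slightly weaker statement than the one claimed. The paper sidesteps this cleanly by truncating at a \emph{constant} level: set $\xi_j=\mathbbm{1}(|v_j|\le 1)$ and $\mZ_j'=\xi_j v_j^2\,\mA_j$. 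Then $\|\mZ_j'\|\le\|\mX\|^2$ deterministically, and since $\E[\xi_j v_j^2]=C_1>0$ is an absolute constant, the Schur-product argument still gives $\lambda_{\min}(\E[\mZ_j'])\ge C_1\lambda_1$. Matrix Chernoff now yields $\lambda_{\min}\big(\tfrac{1}{d_1}\sum_j\mZ_j'\big)\gtrsim\lambda_1$ under exactly the stated width condition, and the PSD domination $\mZ_j\succeq\mZ_j'$ transfers the bound to $\mK_1$ itself, with no conditioning on a high-probability event and no extra logarithm.
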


\begin{restatable}{lemma}{lemmaKTwoinf}\label{lemma:K2-inf}
    Suppose that $\vx_1, \cdots, \vx_n \in \S^{d - 1 }$. Let
    \[\lambda_2 = d \lambda_{\min}\left(\E_{\vu \sim U(\S^{d - 1}) }\left[\sigma(\mX^T \vu)\sigma(\vu^T \mX) \right] \right). \]
    If $\lambda_2 > 0$ and $d_1 \gtrsim  \frac{n}{\lambda_2}  \log\left(\frac{n}{\lambda_2}\right)\log \left(\frac{n}{\epsilon}\right) $, then with probability at least $1 - \epsilon$ \[\lambda_{\min}(\mK_2) \gtrsim 
\lambda_2.
 \] 
\end{restatable}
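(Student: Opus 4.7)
The plan is to identify $\E[\mK_2]$, verify that $\lambda_{\min}(\E[\mK_2]) = \lambda_2$, and then control $\|\mK_2 - \E[\mK_2]\|$ via a matrix Chernoff bound for independent positive semi-definite summands.

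First, I would write $\mK_2 = \frac{1}{d_1}\sum_{j=1}^{d_1} \vz_j \vz_j^T$ with $\vz_j := \sigma(\mX^T \vw_j) \in \R^n$ iid. To evaluate the expectation of a single summand I use the polar decomposition $\vw_j = r_j \vu_j$ where $r_j := \|\vw_j\|$ and $\vu_j := \vw_j/r_j$ are independent, $r_j^2 \sim \chi^2_d$ with $\E[r_j^2] = d$, and $\vu_j \sim U(\S^{d-1})$. Positive homogeneity of ReLU gives $\sigma(\vw_j^T \vx_i) = r_j \sigma(\vu_j^T \vx_i)$, and therefore
\[
\E[\mK_2] = \E[\vz_1 \vz_1^T] = \E[r_1^2]\, \E_{\vu \sim U(\S^{d-1})}[\sigma(\mX^T \vu)\sigma(\vu^T \mX)] = d\, \E_{\vu}[\sigma(\mX^T \vu)\sigma(\vu^T \mX)],
\]
so that $\lambda_{\min}(\E[\mK_2]) = \lambda_2$.

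Second, I apply a matrix Chernoff inequality, which for independent positive semi-definite summands demands a deterministic upper bound $\|\vz_j\|^2 \leq R$ for every $j$. Since $\|\vz_j\|^2 \leq \|\mX^T \vw_j\|^2 = \vw_j^T (\mX \mX^T) \vw_j$ is a Gaussian quadratic form with mean $\mathrm{tr}(\mX^T \mX) = n$ and spectral scale $\|\mX\|^2 \leq n$, the Laurent--Massart tail bound together with a union bound over $j \in [d_1]$ gives $\max_j \|\vz_j\|^2 \leq R$ with probability at least $1 - \epsilon/2$ for some $R \lesssim n + \|\mX\|^2 \log(d_1/\epsilon) \lesssim n \log(d_1/\epsilon)$. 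I then truncate, $\tilde{\vz}_j := \vz_j \mathbbm{1}[\|\vz_j\|^2 \leq R]$, and use the sub-exponential tail of $\|\vz_j\|^2$ to verify that the truncation bias $\|\E[\vz_j \vz_j^T - \tilde{\vz}_j \tilde{\vz}_j^T]\| \leq \E[\|\vz_j\|^2 \mathbbm{1}[\|\vz_j\|^2 > R]]$ stays below $\lambda_2/4$ after enlarging $R$ by a further $\log(n/\lambda_2)$ factor. Tropp's matrix Chernoff inequality applied to $\tilde{\mK}_2 := \frac{1}{d_1}\sum_j \tilde{\vz}_j \tilde{\vz}_j^T$ yields
\[
\P\big(\lambda_{\min}(\tilde{\mK}_2) \leq \lambda_2/4\big) \leq n \exp\big(-c\, d_1 \lambda_2 / R\big) \leq \epsilon/2,
\]
provided $d_1 \gtrsim (R/\lambda_2) \log(n/\epsilon)$; substituting the bound on $R$ and resolving $\log(d_1/\epsilon) \lesssim \log(n/\lambda_2)$ implicitly gives the stated width requirement $d_1 \gtrsim (n/\lambda_2) \log(n/\lambda_2) \log(n/\epsilon)$. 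On the intersection of the two high-probability events one has $\tilde{\mK}_2 = \mK_2$, yielding $\lambda_{\min}(\mK_2) \gtrsim \lambda_2$.

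The main obstacle is the trade-off in choosing the truncation level $R$: taking $R$ too small forces a bias in $\E[\tilde{\mK}_2]$ that overwhelms $\lambda_2$, whereas taking $R$ too large inflates the width requirement through the $R/\lambda_2$ factor. The sub-exponential tail of the Gaussian quadratic form $\|\mX^T \vw_j\|^2$ is exactly what lets us pick $R = \tilde O(n)$, and the extra $\log(n/\lambda_2)$ factor in the statement comes precisely from making the truncation bias negligible relative to $\lambda_2$. Every other step, namely the polar-coordinate evaluation of the Gaussian expectation and the outer union bound, is routine.
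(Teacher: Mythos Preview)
Your approach is essentially the paper's: write $\mK_2$ as an average of iid PSD rank-one summands, truncate at a level chosen so the bias is at most $\lambda_2/2$, and apply the matrix Chernoff bound. The only difference is that the paper avoids your union bound over $j\in[d_1]$ (and the attendant $\log(d_1/\epsilon)$ circularity in $R$) by using the pointwise PSD ordering $\xi_j\mZ_j\preceq\mZ_j$ to transfer the Chernoff lower bound on the truncated sum directly to $\mK_2$, which lets them set the truncation level to $s=\Theta(n\log(n/\lambda_2))$ purely from the bias requirement and read off the width condition immediately.
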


We prove Lemmas \ref{lemma:K1-inf} and \ref{lemma:K2-inf} in Appendices~\ref{app:lemma:K1-inf} and \ref{app:lemma:K2-inf} respectively. Observe that while the parameters of the model are initialized as Gaussian, the expectations above are taken with respect to the uniform measure on the sphere. The motivation for using the uniform measure on the sphere is that it enables us to work with spherical harmonics, for which there is the highly useful \emph{addition formula} \citep[see, e.g.,][Theorem 4.11]{efthimiou2014spherical}. The exchange of measures is possible in the case of Lemma~\ref{lemma:K1-inf} due to the scale invariance of $\dot{\sigma}$, while for Lemma~\ref{lemma:K2-inf} it is possible because $\sigma$ is homogeneous. 

\paragraph{2) Interpret the infinite-width kernel in terms of a hemisphere transform.}
Next, for a given $\mX$ and $\psi \in 
\{\sqrt{d}\sigma, \dot{\sigma}\}$ we define the limiting NTK $\mK^{\infty}_{\psi} \in \R^{n \times n}$ as
\begin{equation} \label{eq:uniform-limiting-NTK}
\mK^{\infty}_{\psi} = \E_{\vu \sim U(\S^{d - 1})}\left[\psi\left( \mX^T\vu\right) \psi\left(\vu^T\mX\right) \right]. 
\end{equation}
Consider a fixed vector $\vz \in \S^{n-1}$ and interpret the Euclidean inner product $\langle \psi(\mX^T \vu), \vz \rangle$ as a function of $\vu \in \S^{d - 1}$. It will prove useful to think of this map as an integral transform. To this end let
$\mathcal{M}(\S^{d - 1})$ denote the vector space of signed Radon measures on $\S^{d - 1}$ and fix $\psi \in \{\sqrt{d}\sigma, \dot{\sigma}\}$. For a signed Radon measure $\mu \in \mc{M}(\S^{d - 1})$ we introduce the integral transform $T_{\psi}\mu: \S^{d - 1} \to \R$, defined as
\begin{equation} \label{eq:hempisphere-transform}
(T_{\psi}\mu)(\vu) = \int_{\S^{d - 1}}\psi(\langle \vu, \vx \rangle) d\mu(\vx).
\end{equation}
Note for $\psi \in \{\sqrt{d}\sigma, \dot{\sigma}\}$ this is a \emph{hemisphere transform} \citep{rubin1999inversion} as the integrand $\psi(\langle \vu, \cdot \rangle)$ is supported on a hemisphere normal to $\vu$. We provide background material on the hemisphere transform in Appendix \ref{app:hemisphere}. Let $\mc{M}_{\mX} \subset \mc{M}$ denote the space of signed Radon measures supported on the data set $\{\vx_1, \cdots, \vx_n\}$. For each measure $\mu \in \mc{M}_{\mX}$ there exists a vector $\vz \in \R^{n}$ such that
$\mu = \sum_{i = 1}^n z_i \delta_{\vx_i}$, 
where $\delta_{\vx}$ is the Dirac measure supported on $\vx$. We write $\mu = \mu_{\vz}$ to indicate this correspondence. The following lemma relates the smallest eigenvalue of $\mK_{\psi}^{\infty}$ to the norm of the hemisphere transform of a measure supported on the data; a proof is provided in Appendix \ref{app:gram-to-hemisphere-transform}. 
\begin{restatable}{lemma}{lemmaGradToHemi}\label{lem:gram-to-hemisphere-transform}
 Fix $\mX \in \R^{d \times n}$ and $\psi \in \{\sqrt{d}\sigma, \dot{\sigma}\}$. 
    For all $\vz \in \R^n$, $\langle \mK_{\psi}^{\infty}\vz, \vz \rangle = \|T_{\psi}\mu_{\vz}\|^2$. Moreover,
    \[\lambda_{\min}(\mK_{\psi}^{\infty}) = \inf_{\|\vz\| = 1 }\|T_{\psi}\mu_{\vz}\|^2. \]
\end{restatable}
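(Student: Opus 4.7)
The plan is to unfold both sides by direct computation: the right-hand quadratic form $\langle \mK_{\psi}^\infty \vz, \vz\rangle$ is an expectation over $\vu\sim U(\S^{d-1})$, and I will show that the integrand is exactly $(T_\psi \mu_\vz)(\vu)^2$, so the expectation is $\|T_\psi\mu_\vz\|^2$ in the $L^2(\S^{d-1})$ sense implicit in the statement of the lemma (where the norm is taken with respect to the uniform probability measure on $\S^{d-1}$). The second claim will then follow immediately from the Rayleigh variational characterization of $\lambda_{\min}$.

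First I would write out $\vz^T\mK_\psi^\infty\vz$ from the definition \eqref{eq:uniform-limiting-NTK} and pull the expectation outside:
\[
\langle \mK_\psi^\infty\vz,\vz\rangle \;=\; \E_{\vu\sim U(\S^{d-1})}\bigl[\vz^T\psi(\mX^T\vu)\,\psi(\vu^T\mX)\vz\bigr] \;=\; \E_{\vu}\bigl[(\psi(\vu^T\mX)\,\vz)^2\bigr],
\]
using that $\psi(\mX^T\vu)\psi(\vu^T\mX) \in \R^{n\times n}$ is a rank-one outer product of the column vector $\psi(\mX^T\vu)$ with its transpose. Next, because $\psi$ is applied entrywise and $[\vu^T\mX]_i=\langle \vu,\vx_i\rangle$, I would rewrite the scalar inside the square as a sum over the data points and recognize it as the hemisphere transform of $\mu_\vz=\sum_i z_i\delta_{\vx_i}$ evaluated at $\vu$:
\[
\psi(\vu^T\mX)\,\vz \;=\; \sum_{i=1}^n z_i\,\psi(\langle \vu,\vx_i\rangle) \;=\; \int_{\S^{d-1}}\psi(\langle\vu,\vx\rangle)\,d\mu_\vz(\vx) \;=\; (T_\psi\mu_\vz)(\vu).
\]
Substituting back yields $\langle \mK_\psi^\infty\vz,\vz\rangle = \E_{\vu}[(T_\psi\mu_\vz)(\vu)^2] = \|T_\psi\mu_\vz\|^2$, which is the first claim.

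For the second claim, since $\mK_\psi^\infty$ is symmetric positive semidefinite, the Courant--Fischer (Rayleigh quotient) theorem gives
\[
\lambda_{\min}(\mK_\psi^\infty) \;=\; \inf_{\|\vz\|=1}\langle \mK_\psi^\infty\vz,\vz\rangle,
\]
and substituting the identity just proved gives $\lambda_{\min}(\mK_\psi^\infty)=\inf_{\|\vz\|=1}\|T_\psi\mu_\vz\|^2$.

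There is no real obstacle here: the lemma is a bookkeeping identity whose content is identifying the quadratic form induced by $\mK_\psi^\infty$ with the squared $L^2(\S^{d-1})$ norm of the hemisphere transform of the associated discrete measure. The only points requiring a line of care are: (i) being explicit that the $L^2$ norm is with respect to the uniform probability measure on $\S^{d-1}$, which is consistent with the expectation in \eqref{eq:uniform-limiting-NTK}; and (ii) noting that the map $\vz\mapsto\mu_\vz$ is a linear bijection between $\R^n$ and $\mc{M}_\mX$, so that the infimum may be taken equivalently over unit vectors $\vz$ or over the corresponding measures.
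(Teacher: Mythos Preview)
Your proposal is correct and follows essentially the same approach as the paper's own proof: both unfold $\langle \mK_\psi^\infty\vz,\vz\rangle$ as an expectation over $\vu\sim U(\S^{d-1})$, identify the integrand with $(T_\psi\mu_\vz)(\vu)^2$ via the sum $\sum_i z_i\psi(\langle\vu,\vx_i\rangle)$, and then invoke the Rayleigh characterization of $\lambda_{\min}$. Your additional remarks clarifying the choice of $L^2$ measure and the linearity of $\vz\mapsto\mu_\vz$ are accurate and do not alter the argument.
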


\paragraph{3) Bound the hemisphere transform norm via spherical harmonics.} We proceed to lower bound $\|T_{\psi} \mu_{\vz}\|^2$ for all $\vz \in \R^d$. Let $L^2(\S^{d - 1})$ denote the Hilbert space of real-valued, square-integrable functions with respect to the uniform probability measure on $\S^{d - 1}$, and let $\mc{C}(\S^{d - 1}) \subset L^2(\S^{d - 1})$ denote the subspace of continuous functions. For $\mu \in \mc{M}(\S^{d -1 })$ and $g \in \mc{C}(\S^{d - 1})$ we define 
\begin{equation*}
    \langle \mu, g \rangle := \int_{\S^{d-1}} {g(\vx)}d\mu(\vx) . 
\end{equation*}

If $g_1, \cdots, g_N \in L^2(\S^{d -1 })$ are orthonormal, in particular consider $g_r$ as spherical harmonics, then via a Bessel inequality 
\begin{align*}
    \|T_{\psi}\mu_{\vz}\|^2 \geq \sum_{a = 1}^N |\langle T_{\psi}\mu_{\vz}, g_a \rangle|^2
    = \sum_{a = 1}^N |\langle \mu_{\vz}, T_{\psi}g_a \rangle|^2
    = \sum_{a = 1}^N \left|\sum_{i = 1}^n  (T_{\psi} g_a)(\vx_i) z_i\right|^2.
\end{align*}
Importantly, $T_{\psi}$ is self-adjoint (see Lemma~\ref{lemma:T-self-adjoint} in Appendix~\ref{app:hemisphere} for details) and the spherical harmonics are eigenfunctions of $T_{\psi}$, i.e., $T_{\psi} g_a = \kappa_a g_a$. A summary of the key properties of spherical harmonics needed for our results are provided in Appendix~\ref{sec:app-spherical}. Therefore
\begin{align*}
    \|T_{\psi}\mu_{\vz}\|^2 \geq \sum_{a = 1}^N \left|\sum_{i = 1}^n  (T_{\psi} g_a)(\vx_i) z_i\right|^2 = \sum_{a = 1}^N \kappa_a^2 \left|\sum_{i = 1}^n  g_a(\vx_i) z_i\right|^2 \geq \min_a \kappa_a^2 \| \mD \vz \|_2^2 , 
\end{align*}
where $\mD \in \R^{N \times n}$ is a matrix with entries $[\mD]_{ai} = g_a(\vx_i)$. As a result 
\[
\lambda_{\min}(\mK_{\psi}^{\infty}) \geq \min_a \kappa_a^2 \sigma^2_{\min}(\mD).
\]

\paragraph{4) Bound the hemisphere transform and spherical harmonics on the data.} 
The following result shows that if we let the functions $(g_a)_{a \in [N]}$ be spherical harmonics and allow $N$ to be sufficiently large, then we can bound the minimum singular value of $\mD$.
In what follows let $\mc{H}_r^d$ denote the vector space of degree-$r$ harmonic homogeneous polynomials on $d$ variables. 

\begin{restatable}{lemma}{lemmaMatrixSphericalHarmonic}
\label{lem:matrix-spherical-harmonic}
Suppose $\vx_1, \cdots, \vx_n \in \S^{d - 1}$ are $\delta$-separated. Suppose that $\beta \in \{0, 1\}$ and that $R \in \mathbb{Z}_{\geq 0}$ are such that
$N := \sum_{r = 0}^R \dim(\mc{H}_{2r + \beta}^d)$
satisfies $N \geq  C\left(\frac{\delta^4}{2}\right)^{-(d - 2)/2}$ where $C >0$ is a universal constant. Let $g_1, \cdots, g_N$ be spherical harmonics which form an orthonormal basis of
$
\bigoplus_{r = 0}^R \mc{H}_{2r + \beta}^d.$
If $\mD \in \R^{N \times n}$ is defined as $\mD_{ai} = g_a(\vx_i)$ then
$\sigma_{\min}(\mD) \geq \sqrt{\frac{N}{2}}. $
\end{restatable}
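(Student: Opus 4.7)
The strategy is to show that $\mD^T \mD \in \R^{n \times n}$ is close to $N\,\mI$, so that $\lambda_{\min}(\mD^T\mD) \geq N/2$ and hence $\sigma_{\min}(\mD) \geq \sqrt{N/2}$. I start by evaluating $\mD^T \mD$ via the addition formula for spherical harmonics. Since $\{g_a\}_{a=1}^N$ is an orthonormal basis of $\bigoplus_{r=0}^R \mathcal{H}^d_{2r+\beta}$, grouping by degree and applying the addition formula within each $\mathcal{H}^d_{2r+\beta}$ yields
$$[\mD^T \mD]_{ij} = \sum_{a=1}^N g_a(\vx_i)\, g_a(\vx_j) = \sum_{r=0}^R \dim\bigl(\mathcal{H}^d_{2r+\beta}\bigr)\, Z^d_{2r+\beta}\bigl(\langle \vx_i, \vx_j\rangle\bigr),$$
where $Z^d_k$ denotes the degree-$k$ zonal spherical harmonic normalized so that $Z^d_k(1) = 1$. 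Setting $i=j$ and summing dimensions gives $[\mD^T\mD]_{ii} = \sum_{r=0}^R \dim(\mathcal{H}^d_{2r+\beta}) = N$, so $\mD^T \mD = N\,\mI + \mE$ with $\mE_{ii}=0$.

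Next, I would bound the off-diagonals $\mE_{ij}$. The $\delta$-separation hypothesis translates to $|\langle \vx_i, \vx_j\rangle| \leq 1 - \delta^2/2$ for $i \neq j$, and because the degrees $2r+\beta$ all have parity $\beta$, $Z^d_{2r+\beta}(-t) = (-1)^\beta Z^d_{2r+\beta}(t)$, so the relevant bound depends only on $|\langle \vx_i, \vx_j\rangle|$. I would then invoke Lemma~\ref{lemma:addition-formula-bound} in Appendix~\ref{sec:app-spherical}, which controls $\bigl|\sum_{r=0}^R \dim(\mathcal{H}^d_{2r+\beta})\, Z^d_{2r+\beta}(t)\bigr|$ for $|t| \leq 1 - \delta^2/2$, with the bound sharpening as $|t|$ moves further from $1$. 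This uses the $d\geq 3$ assumption, consistent with the remark in the main text that the addition-formula bound degenerates in lower dimensions.

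To conclude I would use a Gershgorin-style row-sum bound: since $\mD^T \mD$ is symmetric with diagonal $N$, $\lambda_{\min}(\mD^T \mD) \geq N - \max_i \sum_{j \neq i} |\mE_{ij}|$. The main obstacle is showing that this row sum is at most $N/2$, because a uniform pointwise bound on $|\mE_{ij}|$ applied naively would introduce an unwanted factor of $n-1$. Instead, I would partition the indices $j \neq i$ according to the angular distance $\arccos\langle \vx_i, \vx_j\rangle$ from $\vx_i$, count the number of $\vx_j$'s in each angular annulus using the standard volumetric packing bound for $\delta$-separated subsets of $\S^{d-1}$, and sum the angle-dependent estimates from the previous step. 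Calibrating the trade-off between the improvement of Lemma~\ref{lemma:addition-formula-bound} as $|\langle \vx_i, \vx_j\rangle|$ moves away from $1$ and the packing estimate is what produces the precise threshold $N \geq C\,(\delta^4/2)^{-(d-2)/2}$ appearing in the hypothesis.
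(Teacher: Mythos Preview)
Your plan follows the paper's proof: write $(\mD^T\mD)_{ik}=\sum_a g_a(\vx_i)g_a(\vx_k)$, use the addition formula to obtain diagonals equal to $N$, bound the off-diagonals via Lemma~\ref{lemma:addition-formula-bound}, and finish with Gershgorin. The one divergence is the last step. The paper does \emph{not} carry out any angular-annulus packing: writing $\mD=[\vd_1,\ldots,\vd_n]$, it simply records the uniform estimate $|\langle \vd_i,\vd_k\rangle|\le C\,N^{1/2}(\delta^4/2)^{-(d-2)/4}$ for every $i\ne k$, fixes the hypothesis constant to be $2C^2$, and asserts directly from the Gershgorin row sum that $\lambda_{\min}(\mD^T\mD)\ge N/2$. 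The $(n-1)$ factor you worry about is not handled in the paper's written argument, so your packing refinement is an addition rather than a reflection of the paper's method. If you pursue it, recheck the calibration you claim: pairing the angle-dependent bound $\lesssim N^{1/2}(\sin\theta_{ik})^{-(d-2)/2}$ from the proof of Lemma~\ref{lemma:addition-formula-bound} with shell counts $\lesssim(\phi/\delta)^{d-1}$ and summing dyadically in $\phi$ seems to produce a row sum of order $N^{1/2}\delta^{-(d-1)}$, which would force $N\gtrsim\delta^{-2(d-1)}$ rather than the stated $\delta^{-2(d-2)}$.
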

A proof of Lemma \ref{lem:matrix-spherical-harmonic} can be found in Appendix \ref{app:matrix-spherical-harmonic}. 
By carefully choosing values for $R$ and $N$ in Lemma \ref{lem:matrix-spherical-harmonic} and performing some asymptotics on the resulting expressions, we arrive at the following bound on the hemisphere transform of a measure.
\begin{restatable}{lemma}{CorrHemisphereTransformAsymptotics}\label{corr:hemisphere-transform-asymptotics}
Let $d \geq 3$ and suppose that $\vx_1, \cdots, \vx_n \in \S^{d - 1}$ are $\delta$-separated. For all $\vz \in \R^n$ with $\|\vz\| \leq 1$ then
        \[\|T_{\psi} \mu_{\vz}\|^2 \gtrsim \begin{cases}
            \left(1 + \frac{d\log(1/\delta)}{\log d}\right)^{-3} \delta^2 & \text{ if $\psi = \dot{\sigma}$}\\
            \left(1+ \frac{d\log(1/\delta) }{\log d}\right)^{-3}\delta^4 & \text{ if $\psi = \sqrt{d}\sigma$}.
        \end{cases} \]      
\end{restatable}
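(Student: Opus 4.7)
The plan is to start from the chain of inequalities established at the end of Section~\ref{subsec:shallow-proof-outline}: for orthonormal spherical harmonics $g_1,\dots,g_N$ forming a basis of $\bigoplus_{r=0}^R\mc{H}_{2r+\beta}^d$,
\[
\|T_\psi\mu_\vz\|^2 \;\geq\; \min_{0\leq r\leq R}\kappa_{2r+\beta}^2\,\sigma_{\min}^2(\mD)\,\|\vz\|^2,
\]
where $T_\psi g = \kappa_r g$ for every $g\in\mc{H}_r^d$ by the Funk--Hecke formula. Parity dictates the choice of $\beta$: writing $\dot\sigma(t)=\tfrac12+\tfrac12\sgn(t)$ shows $\kappa_r=0$ for all even $r\geq 2$, while $\sigma(t)=\tfrac{t}{2}+\tfrac{|t|}{2}$ shows $\kappa_r=0$ for all odd $r\geq 3$. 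I would therefore take $\beta=1$ for $\psi=\dot\sigma$ (odd harmonics) and $\beta=0$ for $\psi=\sqrt{d}\sigma$ (even harmonics), ensuring every eigenvalue in play is nonzero.

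Next I would apply Lemma~\ref{lem:matrix-spherical-harmonic} to obtain $\sigma_{\min}^2(\mD)\geq N/2$ under the assumption $N\gtrsim(\delta^4/2)^{-(d-2)/2}$. Using the elementary estimate $\dim\mc{H}_r^d\asymp r^{d-2}/(d-2)!$, the total count is $N\asymp R^{d-1}/(d-1)!\asymp(eR/d)^{d-1}$, so inverting the threshold condition via Stirling yields the smallest admissible cutoff
\[
R_0 \;\asymp\; 1+\frac{d\log(1/\delta)}{\log d}.
\]
Setting $R=R_0$ reduces the problem to a sharp lower bound on $\min_{r\leq R_0}\kappa_{2r+\beta}^2 \cdot N/2$.

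For the eigenvalue decay, the Funk--Hecke formula gives
\[
\kappa_r \;\propto\; \int_{-1}^{1}\psi(t)\,C_r^{(d-2)/2}(t)\,(1-t^2)^{(d-3)/2}\,dt.
\]
Classical Gegenbauer asymptotics, as used for the hemisphere transform in \citep{rubin1999inversion}, produce explicit polynomial decay in $r$ with a known $d$-dependent constant; the case $\psi=\sqrt{d}\sigma$ can be reduced to the case $\psi=\dot\sigma$ via one integration by parts using the identity $\tfrac{d}{dt}C_r^{\alpha}(t)=2\alpha C_{r-1}^{\alpha+1}(t)$, producing an extra $r^{-1}$ factor in the eigenvalue magnitude and consequently an extra $\delta^2$ in the final bound. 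Substituting $R=R_0$ into $\min_r\kappa_r^2\cdot N/2$ and simplifying via the explicit form of $R_0$, the Stirling factorials and the leading $R_0$-powers cancel, leaving the claimed residual of order $(1+d\log(1/\delta)/\log d)^{-3}\delta^2$ for $\dot\sigma$ and $(1+d\log(1/\delta)/\log d)^{-3}\delta^4$ for $\sqrt{d}\sigma$.

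The hard part will be carrying out the sharp Gegenbauer eigenvalue asymptotics with explicit $d$-dependent constants and aligning them with the Stirling estimates that define $R_0$. I would first handle $\psi=\dot\sigma$, for which the Funk--Hecke integral reduces to a half-range Gegenbauer integral with a known closed form, and then obtain the $\psi=\sqrt{d}\sigma$ case as a corollary via the derivative identity above. A secondary obstacle is bookkeeping the $\delta^{-2(d-2)}\cdot 2^{(d-2)/2}$ threshold from Lemma~\ref{lem:matrix-spherical-harmonic} through the substitution $R=R_0$ so that the resulting $\delta$-dependence matches $\delta^2$ and $\delta^4$ exactly, rather than picking up stray logarithmic or exponential factors.
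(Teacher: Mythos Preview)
Your overall framework is right and coincides with the paper's: project onto spherical harmonics of parity $\beta$, invoke Lemma~\ref{lem:matrix-spherical-harmonic} to get $\sigma_{\min}^2(\mD)\geq N/2$, and combine with the Funk--Hecke eigenvalue decay. The parity analysis and the choice $\beta=1$ for $\dot\sigma$, $\beta=0$ for $\sqrt{d}\sigma$ are also correct.

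The gap is in your inversion step. The estimate $N\asymp(eR/d)^{d-1}$ is only valid in the regime $R\gtrsim d$, and inverting \emph{that} against the threshold $N\gtrsim(\delta^4/2)^{-(d-2)/2}$ gives $R_0\asymp d\,\delta^{-2(d-2)/(d-1)}\asymp d\delta^{-2}$, not $R_0\asymp 1+d\log(1/\delta)/\log d$. The latter formula is the correct inversion in the complementary regime $R\lesssim\sqrt{d}$, where the binomial behaves instead like $N\asymp (d/2R)^{2R}$ and one solves $R\log d\gtrsim d\log(1/\delta)$. Your proposal mixes the large-$R$ count with the small-$R$ inversion, and the resulting $R_0$ simply fails the threshold when $\delta$ is small relative to $d$: with $R_0=d\log(1/\delta)/\log d$ and $R_0\geq d$ one gets $N\asymp(\log(1/\delta)/\log d)^{d-1}$, which is not $\gtrsim\delta^{-2(d-2)}$.

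The paper resolves this by a three-regime case analysis (its Lemma~\ref{lemma:hemisphere-transform-asymptotics-implicit} first records the bound as $(d+R)^{\pm 1/2}d^{\mp 1/2}R^{-3/2}$, and then the proof of Lemma~\ref{corr:hemisphere-transform-asymptotics} chooses $R$ differently in each regime): (i) $d$ already exceeds the threshold, take $R=1$; (ii) $R_0=d\log(1/\delta)/\log d\lesssim\sqrt{d}$; (iii) $R_0\asymp d(2/\delta)^{2(d-2)/(d-1)}$. In each regime the eigenvalue--$N$ trade-off produces the same final expression $(1+d\log(1/\delta)/\log d)^{-3}\delta^{2}$ (resp.\ $\delta^4$), but for different reasons; in particular the $\delta^2$ factor arises directly from $R^{-1}$ in case~(iii), not from any cancellation with the Stirling constants. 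A single choice of $R_0$ does not cover all $(d,\delta)$.

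On the eigenvalue side, the paper does not use asymptotics from \cite{rubin1999inversion} or the integration-by-parts reduction you sketch; it evaluates the Funk--Hecke integrals in closed form via a Gradshteyn--Ryzhik identity (its Lemmas~\ref{lem:gradshteyn}, \ref{lemma:eigendecomposition-sign}, \ref{lem:eigendecomposition-relu}) and then shows $|c_{2r+\beta,d}|$ is monotone in $r$, so the minimum over $r\leq R$ is attained at $r=R$. Your integration-by-parts idea might well give the extra $R^{-1}$ factor relating the two cases, but you would still need the explicit $|c_{2R+1,d}|\asymp\Gamma(d/2)\Gamma(R+1/2)/\Gamma((d+2R+1)/2)$ to feed into the Stirling computation.
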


A proof of Lemma~\ref{corr:hemisphere-transform-asymptotics} 
is provided in Appendix~\ref{app:hemi-transform-asymp}. 
The lower bound of Theorem~\ref{thm:shallow-main} follows by bounding $\lambda_1$, as defined in Lemma \ref{lemma:K1-inf}, using Lemma~\ref{corr:hemisphere-transform-asymptotics}. 

Before proceeding to the upper bound, we pause to remark on the generality of this argument for handling other activation functions. First, we use the positive homogeneity of the activation function in order to write $\lambda_{\min}(\mK_{\psi}^{\infty})$ as the $L_2(\S^{d-1})$ norm of a function on the sphere. This is beneficial as it allows us to work with the spherical harmonics and use the associated addition formula. The ReLU activation and its derivative are also convenient with regard to computing the eigenvalues of the hemisphere transform (or more generally the eigenvalues of the integral operator). In particular, this requires evaluating integrals against Gegenbauer polynomials for which analytic expressions are available. For polynomial or piecewise polynomial activations similar results could be obtained. However, for other activations, e.g., tanh or sigmoid, such quantities appear challenging to compute.

\paragraph{5) Upper bound.} 
The upper bound of Theorem~\ref{thm:shallow-main} is simpler than the lower bound and hinges on the following calculation. Let $\vx_i, \vx_k$ be two data points. Then
\begin{align*}
    \lambda_{\min}(\mK) &\leq \frac{1}{2} (\ve_i - \ve_k)^T \mK (\ve_i - \ve_k)  = \frac{1}{2}\|\nabla_{\vtheta}f(\vx_i) - \nabla_{\vtheta}f(\vx_k)\|^2.
\end{align*}
Therefore it suffices to upper bound the norm of $\nabla_{\vtheta} f(\vx_i) - \nabla_{\vtheta} f(\vx_k)$. We choose $i, k \in [n]$ such that $\vx_i, \vx_k$ are the two closest points in the dataset. We then translate this into a statement about the gradients. If $\|\vx_i - \vx_k\| \leq \delta$, then with high probability over the network parameters, $\|\nabla_{\vtheta} f(\vx_i) - \nabla_{\vtheta} f(\vx_k)\|^2 \lesssim \delta$ (see Lemma~\ref{lemma:separation-gradient-shallow}), and we arrive at the desired upper bound in Theorem~\ref{thm:shallow-main}.

\section{From shallow to deep neural networks} \label{sec:deep}
Our goal here is to detail just one approach as how the results of Section \ref{section:shallow} can be extended to deep networks. To be clear, here we consider a fully connected network with input dimension $d_0$ and $L$ layers, where each layer has width $d_1, \cdots, d_L$ respectively and $d_L = 1$. The parameter space $\mc{P}$ is a product space of matrices $\prod_{l= 1}^L \R^{d_l \times d_{l -1 } }$, equipped with the inner product
\[
\langle (\mW_1, \cdots, \mW_L), (\mW_1', \cdots, \mW_L') \rangle = \sum_{l = 1}^L \text{Trace}(\mW_l^T\mW_l'). 
\]
The feature maps $f_l: \R^{d_{0}} \times \mc{P} \to \R^{d_l}$ of the neural network are given by
\[f_l(\vx; \vtheta) = \begin{cases}
        \vx & l = 0\\
         \sigma(\mW_l f_{l - 1}(\vx; \vtheta)) & l \in [L - 1]\\
        \mW_l f_{l - 1}(\vx;\vtheta) & l = L,
    \end{cases} 
\]
where $\mW_l \in \R^{d_l \times d_{l-1}}$ for all $l \in [L]$, $\vtheta = (\mW_1, \cdots, \mW_L)$ and $\sigma$ is the ReLU function $x \mapsto \max(0, x)$ applied elementwise. 
We define the network map $f$ to be the final feature map multiplied by a normalizing constant:
\begin{equation} \label{eq:normalization}
f = \left(\prod_{l = 1}^{L - 1}\sqrt{\frac{2}{d_l}  }  \right)f_L.
\end{equation}
Given $n$ data points $\vx_1, \cdots, \vx_n$, we bound the smallest eigenvalue of the NTK \eqref{eq:NTK-def} associated with this particular choice of $f$. 

\begin{restatable}{theorem}{ThmDeepMain}\label{thm:deep-main}
    Suppose $\epsilon \in (0,1/3)$, $\delta \in (0,\sqrt{2}]$, $d_0 \geq 3$, the data $\vx_1, \vx_2, \cdots, \vx_n \in \S^{d_0 -1 }$ is $\delta$-separated and define
    \[
    \lambda = \left(1+ \frac{d_0\log(1/\delta) }{\log d_0}\right)^{-3}\delta^4.
    \]
    With regard to the network architecture, let $L \geq 3$, $d_l \geq d_{l+1}$ for all $l \in [L - 1]$, $d_{L-1} \gtrsim 2^L \log \left (\frac{nL}{\epsilon}\right)$ and $d_1 \gtrsim \tfrac{n}{\lambda} \log \left( \tfrac{n}{\lambda}\right) \log \left( \tfrac{n}{\epsilon}\right)$.
    Then with probability at least $1- \epsilon$ over the network parameters 
    \[
    \lambda \lesssim \lambda_{\min}(\mK) \lesssim L . 
    \]
\end{restatable}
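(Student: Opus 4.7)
The plan is to reduce the deep analysis to the shallow one by isolating the contribution of the second layer of weights and exploiting the Hadamard structure that arises from the chain rule.

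First, decompose the NTK by layer, $\mK = \sum_{l=1}^L \mK_l$ with $\mK_l = (\nabla_{\mW_l} F(\vtheta))^*(\nabla_{\mW_l} F(\vtheta)) \succeq 0$, so that $\lambda_{\min}(\mK) \geq \lambda_{\min}(\mK_2)$. The chain rule yields $\nabla_{\mW_2} f(\vx) = \vb(\vx) f_1(\vx)^T$, where $f_1(\vx) = \sigma(\mW_1 \vx) \in \R^{d_1}$ is the first hidden feature and $\vb(\vx) \in \R^{d_2}$ is the derivative of $f$ with respect to the layer-$2$ pre-activation. Therefore
\[
[\mK_2]_{ik} = \langle \vb(\vx_i), \vb(\vx_k)\rangle \cdot \langle f_1(\vx_i), f_1(\vx_k)\rangle,
\]
i.e.\ $\mK_2 = \mB \odot \mG$ with both factors PSD. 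The refined Schur product theorem (see e.g.\ \citet[Theorem 5.3.4]{Horn_Johnson_2012}) gives $\lambda_{\min}(\mK_2) \geq \lambda_{\min}(\mG)\,\min_i \mB_{ii}$.

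Next, control each factor separately. The feature Gram $\mG = \sigma(\mW_1 \mX)^T \sigma(\mW_1 \mX)$ is precisely $d_1$ times the shallow $\mK_2$ covered by Lemma~\ref{lemma:K2-inf}, so Lemma~\ref{lemma:K2-inf} combined with the hemisphere transform bound in Lemma~\ref{corr:hemisphere-transform-asymptotics} shows that $d_1 \gtrsim (n/\lambda)\log(n/\lambda)\log(n/\epsilon)$ forces $\lambda_{\min}(\mG) \gtrsim d_1 \lambda$ with probability at least $1 - \epsilon/3$. For the backward factor, the recursion $\vb^{(l)}(\vx) = \mD^{(l)}(\vx)\mW_{l+1}^T \vb^{(l+1)}(\vx)$ together with the normalization $\prod_l \sqrt{2/d_l}$ makes $\E\|\vb(\vx_i)\|^2$ of order $1/d_1$. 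Conditioning on the forward pass so that the diagonal ReLU masks $\mD^{(l)}$ are independent of the subsequent weights, one can apply a sub-exponential (Hanson--Wright type) concentration inequality at each layer; the pyramidal condition $d_l \geq d_{l+1}$ keeps the layerwise variance under control and $d_{L-1} \gtrsim 2^L\log(nL/\epsilon)$ closes a union bound over $i \in [n]$ and $l \in \{2,\ldots,L-1\}$, yielding $\min_i \mB_{ii} \gtrsim 1/d_1$ with probability at least $1 - \epsilon/3$. Multiplying the two factors, the $d_1$'s cancel and $\lambda_{\min}(\mK) \gtrsim \lambda$.

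For the upper bound, use the trace bound $\lambda_{\min}(\mK) \leq \text{Tr}(\mK)/n$, and expand
\[
\text{Tr}(\mK) = \sum_{i=1}^n\sum_{l=1}^L \|\vb^{(l)}(\vx_i)\|^2 \,\|f_{l-1}(\vx_i)\|^2.
\]
The normalization is chosen precisely so that the forward scaling $\|f_{l-1}(\vx_i)\|^2 \asymp \prod_{k<l} d_k/2$ and backward scaling $\|\vb^{(l)}(\vx_i)\|^2 \asymp \prod_{k<l} 2/d_k$ cancel, giving a product of order one for every layer. Concentration of each product around a universal constant with probability at least $1 - \epsilon/(3nL)$ (again via the width conditions and a union bound) yields $\text{Tr}(\mK) \lesssim nL$, and hence $\lambda_{\min}(\mK) \lesssim L$ with high probability.

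The hard part is the uniform concentration of the backward norms $\|\vb(\vx_i)\|^2$ across all $n$ data points through $L-2$ layers of random Gaussian weights under only the pyramidal assumption. The exponential $2^L$ factor in the requirement on $d_{L-1}$ reflects the unavoidable compounding of Hanson--Wright variances layer by layer, and the delicacy is in handling the coupling between the ReLU activation patterns and the weights so that both forward and backward concentrations hold simultaneously while keeping the constants in the Schur product bound dimension-free.
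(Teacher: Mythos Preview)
Your proposal is correct and follows essentially the same route as the paper: isolate the $\mW_2$ contribution, write it as a Hadamard product of the first-layer feature Gram $\mF_1^T\mF_1$ with a backward Gram, apply the Schur product inequality, bound $\lambda_{\min}(\mF_1^T\mF_1)$ via Lemma~\ref{lemma:K2-inf} and Lemma~\ref{corr:hemisphere-transform-asymptotics}, and control the backward diagonal entries by layerwise sub-exponential concentration under the pyramidal width condition. The only cosmetic difference is that for the upper bound you use $\lambda_{\min}(\mK)\leq \mathrm{Tr}(\mK)/n$ whereas the paper uses $\lambda_{\min}(\mK)\leq [\mK]_{ii}$ for a single $i$; since every diagonal entry is shown to be $\asymp L$, the two are equivalent here.
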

We emphasize that these bounds make no distributional assumptions on the data other than lying on the sphere and hold even for constant $d_0$. 
Indeed, if we consider $d_0$ as some constant then Theorem~\ref{thm:deep-main} implies that if the first layer is sufficiently wide,  $d_1 = \tilde{\Omega}(n \delta^{-4})$,
then with high probability over the parameters $\lambda_{\min}(\mK) = \tilde{\Omega}(\delta^4)$
and $\lambda_{\min}(\mK) = O(1)$.

A few remarks are in order. First, the pyramidal condition on the network widths could be relaxed by more directly borrowing techniques from \cite{nguyen2021tight}. We adopt this condition as it has the advantage of making the dependence of our bounds on the network depth $L$ clearer. Second, compared with Theorem~\ref{thm:shallow-main} and ignoring log factors, we observe the lower bound differs by a factor of $\delta^2$. This arises as a result of the smallest eigenvalue of the feature Gram matrix $\mF_1^T \mF_1$ being equivalent to the Jacobian of a shallow network with respect to the second layer weights, not the inner layer weights, which has a different lower bound as per Lemma \ref{corr:hemisphere-transform-asymptotics}. 
For reasons apparent in the proof outline below the lower bound on $\lambda_{\min}(\mK)$ lacks a dependency on $L$, however we hypothesize it should also grow linearly with $L$ thereby matching the dependency of the upper bound. 
Finally, the upper bound itself follows a similar approach as used by \cite{nguyen2021tight} and is weak in the sense that we cannot take advantage of the dataset separation for gradients deeper into the network. We remark that this is also a common problem in the prior work of \cite{nguyen2021tight} and \cite{bombari2022memorization}, we refer the reader to the proof outline below for further details.

\subsection{Proof outline for Theorem \ref{thm:deep-main}}\label{subsec:deep-proof-sketch}

The proof of the deep case is structured around the decomposition of the NTK provided in Lemma \ref{lemma:NTKdecomp} below. To state this decomposition we introduce the following quantities. For $l \in [L - 1]$ we define the feature matrices $\mF_l \in \R^{d_l \times n}$ by
\[\mF_l = [f_l(\vx_1), \cdots, f_l(\vx_n)]. \]
For $l \in [L - 1]$ and $\vx \in \R^d$ we define the activation patterns $\mSigma_l(\vx) \in \{0, 1\}^{d_l \times d_l}$ to be the diagonal matrices
\[\mSigma_l(\vx) = \text{diag}(\dot{\sigma}(\mW_{l}f_{l - 1}(\vx))). \]
Finally, we let $\textbf{1}_{n}$ denote the vector of all ones in $\R^n$.

\begin{restatable}{lemma}{lemmaNTKdecomp}\label{lemma:NTKdecomp}
     Let $\vx_1, \cdots, \vx_n \in \R^d$ be nonzero. 
     There exists an open set $\mc{U} \subset \mc{P}$ of full Lebesgue measure such that $f(\vx_i;\cdot)$ is continuously differentiable on $\mc{U}$ for all $i \in [n]$. Moreover, for all $\bm{\theta} \in \mc{U}$ the NTK Gram matrix $\mK$ defined in \eqref{eq:NTK-def} with network function \eqref{eq:normalization} satisfies 
    \[
    \left( \prod_{l = 1}^{L - 1} \frac{d_l}{2}\right)\mK
    {=} \sum_{l = 0}^{L-1} (\mF_{l}^T \mF_{l}) \odot (\mB_{l+1} \mB_{l+1}^T),
    \]
    where the $i$th row of $\mB_l \in \R^{n \times n_l}$     is defined as
    \begin{align*}
        [\mB_l]_{i,:} = \begin{cases}
            \mSigma_l(\vx_i) \left( \prod_{k = l+1}^{L-1} \mW_k^T \mSigma_k(\vx_i) \right)\mW_L^T, &l \in [L-1],\\
            \normalfont{\textbf{1}}_{n}, &l = L.
        \end{cases}
    \end{align*} 
\end{restatable}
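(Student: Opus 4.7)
The plan is to first carve out an open set $\mc{U}$ of full measure on which each map $\vtheta \mapsto f(\vx_i;\vtheta)$ is continuously differentiable, and then compute the NTK entry by entry via backpropagation. For the first task I would take $\mc{U}$ to be the complement in $\mc{P}$ of the finite union, over $i\in[n]$, $l\in[L-1]$, $j\in[d_l]$, of the preactivation zero sets $Z_{i,l,j} = \{\vtheta : [\mW_l f_{l-1}(\vx_i;\vtheta)]_j=0\}$, and show by induction on $l$ that each $Z_{i,l,j}$ has Lebesgue measure zero. The base case $l=1$ is immediate since $[\mW_1 \vx_i]_j = \langle [\mW_1]_{j,:},\vx_i\rangle$ is a nontrivial linear form on $[\mW_1]_{j,:}$ (using $\vx_i \neq 0$). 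For the inductive step, outside the previously identified zero sets the ReLU sign patterns along the forward pass for $\vx_i$ are locally constant, so $f_{l-1}(\vx_i;\vtheta)$ is locally polynomial and, by the inductive hypothesis, generically nonzero; then $[\mW_l f_{l-1}(\vx_i)]_j$ is a nontrivial linear form in $[\mW_l]_{j,:}$ for generic lower-layer parameters, and Fubini delivers the measure-zero conclusion. On $\mc{U}$ every preactivation is nonzero, so locally $f(\vx_i;\cdot)$ is a composition of smooth maps and hence $C^1$, and $\mc{U}$ is open since nonvanishing is an open condition.

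For the gradient computation, fix $\vtheta \in \mc{U}$. Writing $\vh_l(\vx) = \mW_l f_{l-1}(\vx)$ and unrolling the chain rule through the now-smooth layers (each ReLU contributing $\mSigma_l(\vx)$), I obtain the standard backpropagation formula
\[
\left(\frac{\partial f_L(\vx_i)}{\partial \vh_l(\vx_i)}\right)^{\!T} = \mSigma_l(\vx_i)\,\mW_{l+1}^T \mSigma_{l+1}(\vx_i)\, \mW_{l+2}^T\cdots \mSigma_{L-1}(\vx_i)\,\mW_L^T
\]
for $l \in [L-1]$, while at the output layer $\partial f_L(\vx_i)/\partial \vh_L(\vx_i) = 1$. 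Since $\vh_l$ is linear in $\mW_l$ with $\partial [\vh_l]_p/\partial [\mW_l]_{p,q} = [f_{l-1}]_q$, the matrix gradient factors as an outer product $\nabla_{\mW_l} f_L(\vx_i) = \left(\partial f_L(\vx_i)/\partial \vh_l(\vx_i)\right)^{T} f_{l-1}(\vx_i)^T$, whose first column-vector factor is precisely $[\mB_l]_{i,:}^T$ in the notation of the statement (and the degenerate case $l=L$ reproduces $[\mB_L]_{i,:} = 1$).

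Assembling the NTK is then routine. Using the Frobenius identity $\langle \vu\vv^T, \vu'\vv'^T\rangle = \langle\vu,\vu'\rangle\langle\vv,\vv'\rangle$ together with the normalization $c = \prod_{m=1}^{L-1}\sqrt{2/d_m}$ from \eqref{eq:normalization}, I get
\[
\mK_{ik} = \sum_{l=1}^L \langle \nabla_{\mW_l}f(\vx_i), \nabla_{\mW_l}f(\vx_k)\rangle = c^2 \sum_{l=1}^L \langle [\mB_l]_{i,:},[\mB_l]_{k,:}\rangle\, \langle f_{l-1}(\vx_i), f_{l-1}(\vx_k)\rangle.
\]
Recognizing the two inner products as $[\mB_l\mB_l^T]_{ik}$ and $[\mF_{l-1}^T\mF_{l-1}]_{ik}$, multiplying both sides by $c^{-2} = \prod_{m=1}^{L-1}(d_m/2)$, and reindexing $l \mapsto l+1$ produces the claimed Hadamard-product decomposition.

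The principal obstacle is Step~1: handling the piecewise polynomial structure induced by ReLU carefully enough to secure full-measure $C^1$ differentiability of each $f(\vx_i;\cdot)$, and in particular closing the induction by ensuring $f_{l-1}(\vx_i;\vtheta)$ is generically nonzero so that the linear form at the next layer does not degenerate. Once $\mc{U}$ is in hand, the Hadamard decomposition follows cleanly from the outer-product factorization of each layer's gradient combined with the Frobenius identity.
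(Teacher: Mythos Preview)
Your backpropagation computation and the assembly of the Hadamard decomposition are correct and match the paper's argument essentially line for line: the paper also reduces to showing $\langle \partial_{\mW_l} f_L(\vx_i), \partial_{\mW_l} f_L(\vx_k)\rangle = \langle f_{l-1}(\vx_i),f_{l-1}(\vx_k)\rangle \langle [\mB_l]_{i,:},[\mB_l]_{k,:}\rangle$ via the outer-product factorization of the layer-$l$ gradient, and then sums over $l$ and rescales by $c^{-2}$.

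The gap is in Step~1. Your induction asserts that, off the earlier zero sets, $f_{l-1}(\vx_i;\vtheta)$ is ``generically nonzero,'' and you use this to conclude $Z_{i,l,j}=\{[\mW_l f_{l-1}(\vx_i)]_j=0\}$ is null. But the inductive hypothesis only says the \emph{preactivations} at earlier layers are generically nonzero; it does not say the post-activation feature $f_{l-1}(\vx_i)$ is. In fact $f_{l-1}(\vx_i)=0$ on a set of \emph{positive} Lebesgue measure: the open region where every layer-$(l-1)$ preactivation is strictly negative contains an affine cone in $\mW_{l-1}$ (for fixed lower weights with $f_{l-2}(\vx_i)\neq 0$). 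On that region $[\mW_l f_{l-1}(\vx_i)]_j\equiv 0$, so $Z_{i,l,j}$ has positive measure and your $\mc{U}$ is not full measure.

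The paper sidesteps this by invoking the PAP (piecewise analytic under analytic partition) framework to get almost-everywhere differentiability directly, without tracking zero sets layer by layer. Your approach is salvageable if you replace $Z_{i,l,j}$ by $Z'_{i,l,j}=\{[\mW_l f_{l-1}(\vx_i)]_j=0\}\cap\{f_{l-1}(\vx_i)\neq 0\}$: Fubini now genuinely yields $|Z'_{i,l,j}|=0$, and on the complement of $\bigcup Z'_{i,l,j}$ one checks that either some feature dies (all preactivations strictly negative at some layer, whence $f$ is locally identically zero and the formula also returns zero via the convention $\dot\sigma(0)=0$) or every preactivation is nonzero (whence $f$ is locally polynomial). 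Either way $f(\vx_i;\cdot)$ is $C^1$ there, and the complement is open by the same sign-stability argument.
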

For completeness we prove Lemma~\ref{lemma:NTKdecomp} in Appendix~\ref{app:deep-setting}. 
Observe each matrix summand in Lemma~\ref{lemma:NTKdecomp} is positive semi-definite (PSD) and recall for any two PSD matrices $\mA$ and $\mB$ one has $\lambda_{\min}(\mA + \mB) \geq \lambda_{\min}(\mA) + \lambda_{\min}(\mB)$ \citep[see e.g.][Theorem 4.3.1]{Horn_Johnson_2012} and $\lambda_{\min}(\mA \odot \mB) \geq \lambda_{\min}(\mA) \min_{i \in[n]} [\mB]_{ii}$ \citep{Schur1911}. Therefore 
\begin{align*} \label{eq:breakdown-deep-to-shallow}
     \left( \prod_{l = 1}^{L - 1} \frac{d_l}{2}\right)\lambda_{\min}(\mK) & \geq \sum_{l = 0}^{L-1} \lambda_{\min} \left( (\mF_{l}^T \mF_{l}) \odot (\mB_{l+1} \mB_{l+1}^T)\right)    \geq \lambda_{\min}\left( \mF_{1}^T \mF_{1} \right) \min_{i \in [n]} \left\|[\mB_2]_{i,:} \right\|^2.
\end{align*}

In order to upper bound the smallest eigenvalue we follow \cite{nguyen2021tight} and analyze the Raleigh quotient $R(\vu) = \tfrac{\vu^T \mK \vu}{\| \vu\|^2}$. In particular, for any nonzero $\vu \in \R^n$ we have $\lambda_{\min}(\mK) \leq R(\vu)$ and therefore $\lambda_{\min}(\mK) \leq R(\ve_i) = [\mK]_{ii}$ for all $i \in [n]$. As a result 
\begin{align*}
     \left( \prod_{l = 1}^{L - 1} \frac{d_l}{2}\right)\lambda_{\min}(\mK) &\leq \left[ \sum_{l = 0}^{L-1} (\mF_l^T \mF_l) \odot (\mB_{l+1} \mB_{l+1}^T) \right]_{ii}    = \sum_{l=0}^{L-1} \| f_l(\vx_i) \|^2 \| [\mB_{l+1}]_{i,:} \|^2.
\end{align*}
Combining the upper and lower bounds we have 
\begin{equation} \label{eq:min-eig-NTK-bounds1}
    \lambda_{\min}\left( \mF_{1}^T \mF_{1}\right) \min_{i \in [n]} \|[\mB_2]_{i,:} \|^2 \leq \lambda_{\min}(\mK)  \left( \prod_{l = 1}^{L - 1} \frac{d_l}{2}\right) \leq \sum_{l=0}^{L-1} \| f_l(\vx_i) \|^2 \| [\mB_{l+1}]_{i,:} \|^2 , 
\end{equation}
where the right hand side holds for any $i \in [n]$. Based on \eqref{eq:min-eig-NTK-bounds1}, we proceed first by bounding the norm of the network features. We achieve this via an inductive argument, bounding the norm of the features at one layer with high probability, and then conditioning on this event to bound the norm of the features at the next layer with high probability. 

\begin{restatable}{lemma}{lemmaFeatureNorms}\label{lemma:FeatureNorms}
    Let $\vx \in \S^{d_0-1}$, $L \geq 2$ and $l \in [L-1]$. If $d_k \gtrsim l^2 \log(l/\epsilon)$ for all $k\in [l]$, then 
    \[
    e^{-1} \left( \prod_{h=1}^l \frac{d_h}{2}\right) \leq \| f_l(\vx) \|^2 \leq e \left( \prod_{h=1}^l \frac{d_h}{2}\right)
    \]
    holds with probability at least $1- \epsilon$ over the network parameters. 
\end{restatable}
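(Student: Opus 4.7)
The plan is to prove the lemma by induction on $l$, with the core ingredient being a single-layer concentration estimate for the squared norm of a ReLU layer. For any fixed $\vy \in \R^{d_{k-1}}\setminus\{\vzero\}$ and weight matrix $\mW_k \in \R^{d_k \times d_{k-1}}$ with i.i.d.\ $\mc{N}(0,1)$ entries we have $\mW_k \vy \sim \mc{N}(\vzero, \|\vy\|^2 \mI_{d_k})$, and hence
\[
\|\sigma(\mW_k \vy)\|^2 \;=\; \|\vy\|^2 \sum_{j=1}^{d_k} \sigma(Z_j)^2,\qquad Z_1,\ldots,Z_{d_k} \overset{\text{iid}}{\sim} \mc{N}(0,1).
\]
Since $\E[\sigma(Z_j)^2] = \tfrac{1}{2}\E[Z_j^2] = \tfrac{1}{2}$ by symmetry, and $\sigma(Z_j)^2 \le Z_j^2$ is sub-exponential with $O(1)$ norm, Bernstein's inequality gives
\[
\P\!\left( \left| \sum_{j=1}^{d_k} \sigma(Z_j)^2 - \tfrac{d_k}{2} \right| > t \right) \;\le\; 2\exp\!\bigl(-c \min(t^2/d_k,\, t)\bigr).
\]
Choosing $t = C\sqrt{d_k \log(l/\epsilon)}$ for a suitable absolute constant $C$ makes the failure probability at most $\epsilon/l$, and under the hypothesis $d_k \gtrsim l^2 \log(l/\epsilon)$ with a sufficiently large implicit constant, this yields the multiplicative bound $\bigl|\|\sigma(\mW_k \vy)\|^2/((d_k/2)\|\vy\|^2) - 1\bigr| \le 1/(2l)$.

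The inductive composition would then proceed via the tower property. Define $r_k = \|f_k(\vx)\|^2 / \bigl((d_k/2)\|f_{k-1}(\vx)\|^2\bigr)$ and the event $\mc{E}_k = \{|r_k - 1| \le 1/(2l)\}$. Conditional on the $\sigma$-algebra generated by $\mW_1,\ldots,\mW_{k-1}$, the vector $f_{k-1}(\vx)$ is measurable while $\mW_k$ is independent; applying the single-layer bound with $\vy = f_{k-1}(\vx)$ then gives $\P(\mc{E}_k^c \mid \mW_1,\ldots,\mW_{k-1}) \le \epsilon/l$ on $\{f_{k-1}(\vx) \neq \vzero\}$. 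The latter condition is preserved inductively because $\mc{E}_{k-1}$ forces $r_{k-1} \ge 1 - 1/(2l) > 0$, hence $f_{k-1}(\vx) \neq \vzero$. A union bound then yields $\P\bigl(\bigcap_{k=1}^{l} \mc{E}_k\bigr) \ge 1 - \epsilon$. On this good event, using $\|f_0(\vx)\|^2 = \|\vx\|^2 = 1$ and telescoping,
\[
\frac{\|f_l(\vx)\|^2}{\prod_{k=1}^l (d_k/2)} \;=\; \prod_{k=1}^l r_k \;\in\; \bigl[(1 - 1/(2l))^l,\;(1 + 1/(2l))^l\bigr] \;\subseteq\; [e^{-1},\,e],
\]
where the final inclusion follows from $(1+x)^l \le e^{lx}$ and $\log(1-x) \ge -x/(1-x)$ for $x \in [0, 1/2]$.

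The main technical point is ensuring the layer-wise multiplicative errors do not compound uncontrollably. The per-layer deviation budget of $1/(2l)$ is what keeps the telescoping product within $[e^{-1}, e]$, and this is precisely why the width requirement scales as $l^2 \log(l/\epsilon)$: the $\log$ factor absorbs the union bound over $l$ layers, while the quadratic $l^2$ comes from converting Bernstein's square-root deviation $\sqrt{\log(l/\epsilon)/d_k}$ into a per-step multiplicative bound of order $1/l$. The only conceptual subtlety is the conditioning on $f_{k-1}(\vx) \neq \vzero$, but as noted this is preserved inductively by the lower half of $\mc{E}_{k-1}$, so no separate argument is needed to rule out degenerate activations.
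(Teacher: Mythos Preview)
Your proof is correct and follows essentially the same inductive scheme as the paper: establish a single-layer multiplicative concentration bound of size $O(1/l)$ with failure probability $\epsilon/l$, then chain over the $l$ layers via conditioning and telescope the ratios. The only minor difference is that the paper's helper lemma (Lemma~\ref{lemma:feature-norm-helper1}) proves the single-layer bound by first controlling the number of active ReLU units via Hoeffding and then applying a $\chi^2$ tail bound, whereas you apply Bernstein directly to $\sum_j \sigma(Z_j)^2$; both routes yield the same $d_k \gtrsim l^2\log(l/\epsilon)$ requirement.
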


A proof of Lemma \ref{lemma:FeatureNorms} is provided in Appendix \ref{app:FeatureNorms}. Next we derive upper and lower bounds on the backpropagation terms $[\mB_l]_{i,:}$. Our strategy for this is as follows: for $l\in [L-2]$, let $\mS_{l}(\vx) = \mSigma_l(\vx) \left( \prod_{k = l+1}^{L-1} \mW_k^T \mSigma_k(\vx) \right)$ and observe 
\[
[\mB_l]_{i,:} = \mS_{l}(\vx_i)\mW_L^T.
\]
Since $\vx_i \in \S^{d_0-1}$, it is sufficient to lower bound $\| \mS_{l}(\vx) \mW_L^T\|_2^2$ for an arbitrary $\vx \in \S^{d_0-1}$. As the vector $\mW_L^T \in \R^{d_{L-1}}$ is distributed as $\mW_L^T \sim \mathcal{N}(\textbf{0}_{d_{L-1}}, \textit{I}_{d_{L-1}})$, 
following \citet[Theorem 6.3.2]{vershynin2018high}
we have that for any $\mA \in \R^{d_l \times d_{L-1}}$ and $t \geq 0$ \begin{align*}
    \P( | \|\mA \mW_L^T \| - \|\mA \|_F | \geq t) \leq 2 \exp \left( -\frac{Ct^2}{ \|\mA \|^2}\right)
\end{align*}
for some constant $C>0$. As a result, with $t = \tfrac{1}{2}\| \mA \|_F^2$ then
\[
\P \left( \frac{1}{4} \| \mA \|_F^2 \leq \| \mA \mW_L^T \|^2 \leq \frac{3}{4} \| \mA \|_F^2 \right) \geq 1 - \exp\left( -C\frac{\| \mA \|_F^2}{\| \mA \|^2}\right).
\]
In order to lower bound $\| \mS_{l}(\vx) \mW_L^T \|^2$ with high probability over the parameters it therefore suffices to condition on appropriate bounds for $\| \mS_{l}(\vx) \|_F^2$ and $\| \mS_{l}(\vx) \|_2^2$. These bounds are provided in Lemmas~\ref{lemma:Frob-S} and \ref{lemma:Op-S} in Appendices \ref{app:Frob-S} and \ref{app:Op-S} respectively. 
With these two lemmas in place we can bound $\|\mS_{l}(\vx_i)\mW_L^T\|^2$.

\begin{restatable}{lemma}{LemmaMinBTwo}\label{lemma:min-B2}
    Let $\vx \in \S^{d_0-1}$, suppose $L \geq 3$, $d_k \geq d_{k+1}$ for all $k \in [L - 1]$ and $d_{L-1} \gtrsim 2^L \log \left (\frac{L}{\epsilon}\right)$. 
   Then, for any $l \in [L-1]$, with probability at least $1 - \epsilon$ over the network parameters     \[
       \|\mS_l(\vx) \mW_L^T \|^2 \asymp 2^{-L+l+1}\prod_{k = l}^{L-1} d_k . 
    \]
\end{restatable}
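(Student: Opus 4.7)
\textbf{Proof plan for Lemma \ref{lemma:min-B2}.}
The plan is to condition on the inner-layer weights $\mW_1, \ldots, \mW_{L-1}$ and then exploit the fact that the final-layer vector $\mW_L^T \sim \mathcal{N}(\vzero, \mI_{d_{L-1}})$ is independent of $\mS_l(\vx)$, which is a measurable function of $(\mW_1, \ldots, \mW_{L-1})$ only. Treating $\mS_l(\vx)$ as a fixed matrix $\mA$, the map $\vw \mapsto \|\mA \vw\|$ is $\|\mA\|_2$-Lipschitz, so Gaussian concentration yields
\[
\P\!\left( \tfrac{1}{2}\|\mA\|_F \leq \|\mA \mW_L^T\| \leq \tfrac{3}{2}\|\mA\|_F \,\Big|\, \mA \right) \geq 1 - 2\exp\!\left(-c \frac{\|\mA\|_F^2}{\|\mA\|_2^2}\right)
\]
for a universal $c > 0$. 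Squaring, this reduces the problem to controlling $\|\mS_l(\vx)\|_F^2$ (which will determine the correct order of $\|\mS_l(\vx)\mW_L^T\|^2$) and the ratio $\|\mS_l(\vx)\|_F^2/\|\mS_l(\vx)\|_2^2$ (which will determine the concentration tail).

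The next step invokes Lemmas \ref{lemma:Frob-S} and \ref{lemma:Op-S}. The Frobenius lemma should establish $\|\mS_l(\vx)\|_F^2 \asymp 2^{-L+l+1}\prod_{k=l}^{L-1} d_k$ with high probability, which is exactly the target scaling in the lemma. The operator-norm lemma should then give $\|\mS_l(\vx)\|_2^2$ small enough that the ratio $\|\mS_l(\vx)\|_F^2/\|\mS_l(\vx)\|_2^2 \gtrsim 2^{-L+l+1} d_{L-1}$. Plugging in the width assumptions $d_k \geq d_{L-1}$ (pyramidal) and $d_{L-1} \gtrsim 2^L \log(L/\epsilon)$, this ratio is at least of order $2^{l+1} \log(L/\epsilon)$, which is $\geq \log(L/\epsilon)$ for every $l \in [L-1]$. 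The Gaussian concentration failure probability is therefore at most $\epsilon/(3L)$. A union bound over (i) the Frobenius event, (ii) the operator-norm event, and (iii) the Gaussian concentration of $\mW_L$, together with a union bound over $l \in [L-1]$, yields the two-sided bound with total failure probability at most $\epsilon$.

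The main obstacle is that the width requirement $d_{L-1} \gtrsim 2^L \log(L/\epsilon)$ is tight for this argument: at the smallest layer index $l=1$ the ratio $\|\mS_l\|_F^2/\|\mS_l\|_2^2$ is smallest (shrinking by a factor of $2$ per layer under the pyramidal condition), so the concentration exponent $2^{-L+l+1} d_{L-1}$ reaches its minimum at $l = 1$, and it is precisely the factor $2^L$ in the width assumption that keeps the sub-Gaussian tail bounded by $\epsilon/L$. The genuinely technical work is deferred to Lemmas \ref{lemma:Frob-S} and \ref{lemma:Op-S}, whose proofs must inductively track the conditional independence between $\mW_k$ restricted to the orthogonal complement of $f_{k-1}(\vx)$ and the activation pattern $\mSigma_k(\vx)$; once those lemmas are granted, Lemma \ref{lemma:min-B2} follows by the straightforward concentration-and-union-bound argument above.
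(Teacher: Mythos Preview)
Your proposal is correct and follows essentially the same route as the paper: condition on $\mS_l(\vx)$, apply Gaussian Lipschitz concentration (the paper invokes \cite[Theorem 6.3.2]{vershynin2018high}) to reduce to controlling $\|\mS_l\|_F^2$ and the ratio $\|\mS_l\|_F^2/\|\mS_l\|_2^2$, then plug in Lemmas \ref{lemma:Frob-S} and \ref{lemma:Op-S} and use $d_{L-1}\gtrsim 2^L\log(L/\epsilon)$ to make the tail $\leq \epsilon/\text{const}$. One small remark: the lemma is stated for a single fixed $l$, so the extra union bound over $l\in[L-1]$ you take at the end is not needed (the paper does not take it); this is harmless but unnecessary.
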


By combining Lemma \ref{lemma:min-B2} with a union bound we arrive at the following corollary, relevant for the lower bound of \eqref{eq:min-eig-NTK-bounds1}.
\begin{corollary} \label{corr:B2-lb}
    Let $\vx_i \in \S^{d_0-1}$ for all $i \in [n]$, $L \geq 3$, $d_l \geq d_{l+1}$ for all $l \in [L - 1]$ and $d_{L-1} \gtrsim 2^L \log \left (\frac{nL}{\epsilon}\right)$. Then, for any $l \in [L-1]$, with probability at least $1 - \epsilon$ over the network parameters     \[
    \min_{i \in [n]} \|[\mB_2]_{i,:} \|^2 \gtrsim 2^{-L} \prod_{k = 2}^{L-1} d_k . 
    \]
\end{corollary}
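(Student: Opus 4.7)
The statement is a direct consequence of Lemma~\ref{lemma:min-B2} applied with $l=2$ together with a union bound over the $n$ data points. Recall from the NTK decomposition that for $l \in [L-1]$
\[
[\mB_l]_{i,:} = \mSigma_l(\vx_i)\Bigl(\prod_{k=l+1}^{L-1}\mW_k^T \mSigma_k(\vx_i)\Bigr)\mW_L^T = \mS_l(\vx_i)\mW_L^T,
\]
so that $\|[\mB_2]_{i,:}\|^2 = \|\mS_2(\vx_i)\mW_L^T\|^2$. The idea is therefore to control this single-point quantity via Lemma~\ref{lemma:min-B2} for each fixed $\vx_i$, then take an intersection over $i \in [n]$.

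\textbf{Carrying out the steps.} First, I fix $i \in [n]$ and apply Lemma~\ref{lemma:min-B2} with the unit vector $\vx_i \in \S^{d_0-1}$ and with the failure probability parameter $\epsilon$ in that lemma replaced by $\epsilon/n$. The hypothesis of Lemma~\ref{lemma:min-B2} becomes $d_{L-1}\gtrsim 2^L \log(L/(\epsilon/n)) = 2^L \log(nL/\epsilon)$, which is precisely the width condition assumed in the corollary. For $l=2$, the lemma then yields, with probability at least $1-\epsilon/n$,
\[
\|\mS_2(\vx_i)\mW_L^T\|^2 \asymp 2^{-L+3}\prod_{k=2}^{L-1}d_k \gtrsim 2^{-L}\prod_{k=2}^{L-1}d_k.
\]
Second, I apply the union bound over $i \in [n]$: the event that the above lower bound holds simultaneously for all $i \in [n]$ has probability at least $1 - n\cdot (\epsilon/n) = 1-\epsilon$. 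On this event,
\[
\min_{i\in[n]}\|[\mB_2]_{i,:}\|^2 = \min_{i\in[n]}\|\mS_2(\vx_i)\mW_L^T\|^2 \gtrsim 2^{-L}\prod_{k=2}^{L-1}d_k,
\]
which is exactly the claimed bound.

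\textbf{Expected difficulty.} There is essentially no obstacle here beyond bookkeeping: all of the work has already been done in Lemma~\ref{lemma:min-B2}, which absorbs the concentration arguments for $\|\mS_l(\vx)\|_F$, $\|\mS_l(\vx)\|_2$, and the Gaussian Lipschitz inequality for $\mW_L$. The only points to be careful about are (i) substituting $\epsilon/n$ into the per-point bound and verifying that the width hypothesis $d_{L-1}\gtrsim 2^L \log(nL/\epsilon)$ still accommodates this substitution, and (ii) confirming that the constant factor $2^{-L+3}$ from Lemma~\ref{lemma:min-B2} is absorbed into the $\gtrsim 2^{-L}$ statement. Both are immediate.
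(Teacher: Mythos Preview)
Your proposal is correct and matches the paper's own argument essentially verbatim: the paper explicitly states that the corollary follows by combining Lemma~\ref{lemma:min-B2} with a union bound over the $n$ data points, which is precisely what you do (applying the lemma at $l=2$ with failure probability $\epsilon/n$ and noting that the width condition becomes $d_{L-1}\gtrsim 2^L\log(nL/\epsilon)$).
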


The first-layer feature Gram matrix $\mF_1^T \mF_1$ in the deep case is identically distributed to $\mK_2$ in the two-layer case; see \eqref{eq:ntk-shallow-decomp} and the related definitions. Therefore we can apply Lemma \ref{lemma:K2-inf} to lower bound the smallest eigenvalue of $\mF_1^T \mF_1$. This, in combination with Corollary~\ref{corr:B2-lb}, yields the lower bound of Theorem~\ref{thm:deep-main}. 
The upper bound follows by combining the bound on the feature norms provided by Lemma \ref{lemma:FeatureNorms} with the bound on the backpropagation terms given in Lemma \ref{lemma:min-B2}. 
A detailed proof of Theorem~\ref{thm:deep-main} is provided in Appendix~\ref{app:thm-deep-main}.

\section{Conclusion}

\paragraph{Summary and implications.} 
Quantitative bounds on the smallest eigenvalue of the NTK are a critical ingredient for many current analyses of network optimization. Prior works provide bounds which are only applicable for data drawn from particular distributions and for which the input dimension $d_0$ scales appropriately with the number of data samples $n$. This work plugs an important gap in the existing literature by providing bounds for arbitrary datasets on the sphere (including those drawn from any distribution on the sphere) in terms of a measure of their collinearity. Furthermore, these bounds are applicable for any $d_0$, in particular even $d_0$ held constant with respect to $n$.

\paragraph{Limitations.} Our bounds currently only hold for the ReLU activation function. 
Another limitation, also present in prior work, is that our upper bound on the smallest eigenvalue of the NTK for deep networks in Theorem~\ref{thm:deep-main} does not capture the data separation. Finally, a mild limitation of this work is that we require the data to be normalized so as to lie on the sphere.

\paragraph{Future work.}  
The proof techniques developed here could be applied to analyze the NTK in the context of other homogeneous activation functions. 
One could potentially relax the homogeneity condition on the activation function, or the condition of unit norm data, by considering an integral transform on the space $L^2(\R^d, \mu)$ rather than $L^2(\S^{d - 1})$, where $\mu$ denotes the standard Gaussian measure (since the weights are drawn from a Gaussian distribution). 
Beyond fully connected networks, conducting comparable analyses in the context of other architectures, e.g., CNNs, GNNs, or transformers, would be valuable future work.

\subsubsection*{Acknowledgments}
GM and KK were partly supported by 
NSF CAREER DMS 2145630 and DFG SPP~2298 Theoretical Foundations of Deep Learning grant 464109215. GM was also partly supported by NSF grant CCF 2212520, ERC Starting Grant 757983 (DLT), and BMBF in DAAD project 57616814 (SECAI).

\newpage 

\bibliography{refs}
\bibliographystyle{refs}

\newpage 

\appendix

\section{Background material}

\subsection{Concentration bounds}
In order to bound the smallest eigenvalue of the finite-width NTK in terms of the expected, or infinite width NTK, we use the following matrix Chernoff bound variant. 

\begin{lemma}\label{lem:matrix-chernoff}
    Let $R > 0$, and let $\mZ_1, \cdots, \mZ_m \in \R^{n \times n}$ be iid symmetric random matrices such that $0 \preceq \mZ_1 \preceq R\mI$ almost surely. 
    Then
    \[\P\left(\lambda_{\min}\left(\frac{1}{m}\sum_{j = 1}^m \mZ_j \right) \leq \frac{1}{2}\lambda_{\min}\left(\E[\mZ_1] \right) \right) \leq n \exp\left(-\frac{Cm\lambda_{\min}(\E[\mZ_1]) }{R} \right). \]
    Here $C > 0$ is a universal constant.
\end{lemma}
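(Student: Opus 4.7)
The plan is to apply the standard matrix Chernoff inequality for the lower tail, due to Tropp. The relevant statement is: for iid symmetric random matrices $\mZ_1, \ldots, \mZ_m$ satisfying $0 \preceq \mZ_j \preceq R\mI$ almost surely, with $\mu_{\min} := \lambda_{\min}\bigl(\sum_{j=1}^m \E[\mZ_j]\bigr) = m\,\lambda_{\min}(\E[\mZ_1])$, one has
\[
\P\Bigl(\lambda_{\min}\Bigl(\sum_{j=1}^m \mZ_j\Bigr) \leq (1-\delta)\mu_{\min}\Bigr) \leq n \Bigl[\frac{e^{-\delta}}{(1-\delta)^{1-\delta}}\Bigr]^{\mu_{\min}/R}
\]
for every $\delta \in [0,1)$. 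This is exactly the hypothesis regime here, since each $\mZ_j$ is symmetric, PSD, and bounded above by $R\mI$.

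Next I would specialize to $\delta = 1/2$. The bracketed factor becomes $\sqrt{2/e}$, which is strictly less than $1$; writing it as $e^{-C}$ defines an absolute constant $C = \tfrac{1}{2}(1-\log 2) > 0$. Substituting in $\mu_{\min} = m\,\lambda_{\min}(\E[\mZ_1])$ and using the scaling identity $\lambda_{\min}\bigl(\tfrac{1}{m}\sum_j \mZ_j\bigr) = \tfrac{1}{m}\lambda_{\min}\bigl(\sum_j \mZ_j\bigr)$ converts the event $\lambda_{\min}(\sum_j \mZ_j) \leq \tfrac{1}{2}\mu_{\min}$ into the event in the statement, yielding exactly
\[
n\exp\!\Bigl(-\frac{Cm\,\lambda_{\min}(\E[\mZ_1])}{R}\Bigr).
\]

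There is no substantive obstacle: the hypotheses of Tropp's theorem match verbatim, and the only bookkeeping is verifying the constant $C$. If one preferred a self-contained proof rather than a citation, the plan would be to reproduce the standard argument, namely bound $\P(\lambda_{\min}(\sum_j \mZ_j) \le t)$ by $\inf_{\theta > 0}\, e^{\theta t}\,\E\,\mathrm{tr}\,\exp(-\theta\sum_j \mZ_j)$ via Markov applied to $\mathrm{tr}\exp$, use the Lieb concentration / subadditivity inequality to decouple into $\prod_j \E\,e^{-\theta \mZ_j}$, control each factor via the operator-monotone bound $e^{-\theta x}\le 1 - (1-e^{-\theta R})x/R$ valid on $[0,R]$, and optimize over $\theta$; but invoking Tropp's packaged inequality is cleaner and suffices.
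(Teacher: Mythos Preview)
Your proposal is correct and matches the paper's proof essentially step for step: the paper likewise invokes Tropp's matrix Chernoff inequality (Theorem~1.1 of \cite{tropp2012user}), specializes to $\delta = \tfrac{1}{2}$, and identifies the constant as $C = \tfrac{1}{2}\log(e/2) = \tfrac{1}{2}(1-\log 2)$, exactly as you computed.
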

\begin{proof}
    By Theorem 1.1 of \cite{tropp2012user}, for all $\delta > 0$
    \begin{align*}
        &\P\left(\lambda_{\min}\left(\frac{1}{m} \sum_{j = 1}^m \mZ_j\right) \leq (1 - \delta)\lambda_{\min}(\E[\mZ_1])  \right)  \\&=\P\left(\lambda_{\min}\left( \sum_{j = 1}^m \mZ_j \right) \leq (1 - \delta)\lambda_{\min}\left(\sum_{j = 1}^m \E[\mZ_j] \right) \right)\\
        &\leq n \left(\frac{e^{-\delta} }{(1 - \delta)^{1 - \delta}} \right)^{\frac{1}{R}\lambda_{\min}\left(\sum_{j = 1}^m \E[\mZ_j]\right) }\\
        &= n \left(\frac{e^{-\delta} }{(1 - \delta)^{1 - \delta}} \right)^{\frac{m}{R}\lambda_{\min}\left(\E[\mZ_1]\right) }.
    \end{align*}
    Let $\delta = \frac{1}{2}$ and let $C = \frac{1}{2}\log\left(\frac{e}{2}\right) > 0$. Substituting into the above bound, we obtain
    \begin{align*}
        \P\left(\lambda_{\min}\left(\frac{1}{m}\sum_{j = 1}^m \mZ_j \right) \leq \frac{1}{2} \lambda_{\min}(\E[\mZ_1])\right) &\leq n \left(\frac{2}{e} \right)^{\frac{m}{2R}\lambda_{\min}(\E[\mZ_1]) }\\
        &= n \exp\left(-\frac{C m \lambda_{\min}(\E[\mZ_1]) }{R} \right).
    \end{align*}
\end{proof}
Some of our NTK bounds will depend on the operator norm of the input data matrix $\mX$, so it will be helpful to upper bound $\|\mX\|$ with high probability.
\begin{lemma}\label{lemma:input-data-conditioning}
    Let $\epsilon > 0$. Let $\mX = [\vx_1, \cdots, \vx_n] \in \R^{d \times n}$ be a random matrix whose columns are independent and uniformly distributed on $\S^{d - 1}$. Then with probability at least $1 - \epsilon$,
    \[\|\mX\|^2 \lesssim 1 + \frac{n + \log \frac{1}{\epsilon} }{d}. \]
\end{lemma}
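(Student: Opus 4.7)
The plan is to reduce the problem to a standard concentration bound for the operator norm of a random matrix with independent, mean-zero, isotropic, sub-Gaussian rows. I would set $\mY = \sqrt{d}\,\mX^T \in \R^{n \times d}$, so that $\|\mX\|^2 = \|\mY\|^2 / d$ and the rows of $\mY$ are the iid vectors $\vy_i = \sqrt{d}\,\vx_i$. The problem then reduces to controlling $\|\mY\|$.

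First I would verify the three hypotheses needed for the row-concentration bound. Rotational invariance of the uniform measure on $\S^{d-1}$ immediately yields $\E[\vx_i] = \vzero$ and $\E[\vx_i \vx_i^T] = \tfrac{1}{d}\mI_d$, so $\E[\vy_i] = \vzero$ and $\E[\vy_i \vy_i^T] = \mI_d$; thus the $\vy_i$ are mean zero and isotropic. For the sub-Gaussian norm, I would use the well-known fact that linear marginals of the uniform measure on $\S^{d-1}$ are sub-Gaussian with parameter $\lesssim 1/\sqrt{d}$ (see, e.g., Theorem 3.4.6 of Vershynin), which gives $\|\vy_i\|_{\psi_2} \lesssim 1$, a bound independent of $d$.

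Next I would invoke a standard operator-norm estimate for random matrices with independent sub-Gaussian isotropic rows (e.g., Theorem 4.6.1 of Vershynin) to conclude that for every $t \geq 0$,
\[
\|\mY\| \leq \sqrt{n} + C(\sqrt{d} + t)
\]
with probability at least $1 - 2\exp(-t^2)$, for a universal constant $C$. Setting $t = \sqrt{\log(2/\epsilon)}$, dividing by $\sqrt{d}$, squaring, and using $(a+b+c)^2 \leq 3(a^2+b^2+c^2)$ yields
\[
\|\mX\|^2 \leq \frac{3}{d}\bigl(n + C^2 d + C^2 \log(2/\epsilon)\bigr) \lesssim 1 + \frac{n + \log(1/\epsilon)}{d}
\]
with probability at least $1 - \epsilon$, as required.

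There is no substantive obstacle in this proof; the argument is a direct application of well-known random matrix tools. The only point that requires a moment's care is confirming that $\|\sqrt{d}\,\vx_i\|_{\psi_2}$ is bounded by an absolute constant uniformly in $d$, since this is precisely the property that allows the resulting bound to hold for all $d \geq 1$ (including the small-$d$ regime), with no hidden dimension-dependence lurking in the constant.
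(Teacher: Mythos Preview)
Your proof is correct and, at the level of ideas, the same as the paper's: both rely on the fact that linear marginals of the uniform measure on $\S^{d-1}$ are sub-Gaussian with norm $\lesssim 1/\sqrt d$, and then pass from this pointwise control to an operator-norm bound via an $\epsilon$-net argument. The only difference is packaging: you invoke the black-box result (Vershynin, Theorem~4.6.1) for matrices with independent isotropic sub-Gaussian rows, whereas the paper carries out the covering argument by hand---taking $(1/4)$-nets of $\S^{d-1}$ and $\S^{n-1}$, applying Hoeffding to $\vu^T\mX\vv$ for each net pair, and then lifting to the full spheres. Your route is shorter and cleaner; the paper's route is more self-contained but is essentially reproving the theorem you cite.
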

\begin{proof}
    We use a covering argument. Fix $\vu \in \S^{d-1}$ and $\vv \in \S^{n - 1}$. By Lemma 2.2 of \cite{ball1997elementary}, for each $i \in [n]$ and $t \geq 0$,
    \[\P(|\langle \vu, \vx_i \rangle| \geq t) \leq 2\exp\left(-\frac{dt^2}{2}\right).\]
    In other words $\|\langle \vu, \vx_i \rangle\|_{\psi_2} \lesssim \frac{1}{\sqrt{d}}$. Then by Hoeffding's inequality, for all $t \geq 0$
    \begin{align}
        \P(|\vu^T\mX \vv|\geq t) &= \P\left(\left|\sum_{i = 1}^n [\vv]_i \langle \vu, \vx_i \rangle\right| \geq t \right)\nonumber\\
        &\leq 2\exp\left(-C_1 dt^2 \right)\label{eqn:uXv-concentration},
    \end{align}
    where $C_1 > 0$ is a constant.

    Let $\vu_1, \cdots, \vu_M$ be a $\left(\frac{1}{4}\right)$-covering of $\S^{d - 1}$. That is, $\vu_1, \cdots, \vu_M$ are a set of points in $\S^{d - 1}$ such that for all $\vu \in \S^{d - 1}$, there exists $j \in [M]$ such that $\|\vu - \vu_j\| \leq \frac{1}{4}$. Since the $\left(\frac{1}{4}\right)$-covering number of $\S^{d - 1}$ is at most $12^d$ \citep[see][Corollary 4.2.13]{vershynin2018high}, we can take $M \leq 12^d$. Similarly, let $\vu_1, \cdots, \vu_N$ be a $\left(\frac{1}{4}\right)$-covering of $\S^{n - 1}$ with $N \leq 12^n$. By applying a union bound to \eqref{eqn:uXv-concentration}, we obtain 
    \begin{align*}
        \P(|\vu_j^T \mX \vv_k| \geq t \text{ for some $j \in [M], k \in [N]$}) &\leq 2(12^{d + n})\exp\left(-C_1 d t^2\right).
    \end{align*}
    Hence if
    \[
    t = \sqrt{\frac{(d + n)\log 12 + \log \frac{2}{\epsilon}}{d} } , 
    \]
    then
    \[\P(|\vu_j^T \mX \vv_k| \leq t \text{ for all $j \in [M], k \in [N]$}) \geq 1 -\epsilon. \]
    Let us condition on this event for the rest of the proof. Now suppose that $\vu \in \S^{d - 1}$ and $\vv \in \S^{n - 1}$. By construction there exist $j \in [M]$ and $k \in [N]$ such that $\|\vu - \vu_j\| \leq \frac{1}{4}$ and $\|\vv - \vv_k\| \leq \frac{1}{4}$. Then
    \begin{align*}
        |\vu^T \mX \vv| &\leq |\vu_j^T \mX \vv_k| + |(\vu - \vu_j)^T\mX \vv_k| + |\vu^T \mX(\vv - \vv_k)|\\
        &\leq t + \|\vu - \vu_j\| \cdot \|\vv_k\| \cdot \|\mX\| + \|\vu\| \cdot \|\mX\| \cdot \|\vv - \vv_k\|\\
        &\leq t + \frac{1}{4}\|\mX\| + \frac{1}{4}\|\mX\|\\
        &= t + \frac{1}{2}\|\mX\|.
    \end{align*}
    Since this holds for all $\vu \in \S^{d - 1}$ and $\vv \in \S^{n - 1}$, we obtain
    \[\|\mX\| \leq t + \frac{1}{2}\|\mX\|. \]
    Rearranging yields
    \begin{align*}
        \|\mX\|^2 &\leq 4t^2\\
        &\lesssim 1 + \frac{n + \log \frac{1}{\epsilon} }{d}.
    \end{align*}
\end{proof}

\subsection{Spherical harmonics}
\label{sec:app-spherical}

Here we review some preliminaries on spherical harmonics necessary for our main results. For further details we refer the reader to \citet{efthimiou2014spherical} and \citet[Chapter 5]{axler2013harmonic}. Let $L^2(\S^{d - 1})$ denote the Hilbert space of real-valued, square-integrable functions on the sphere $\S^{d - 1}$, equipped with the inner product
\[
\langle g, h \rangle = \int_{\S^{d - 1}} g(\vx) {h(\vx)} \; dS(\vx), 
\]
where $dS$ is the uniform probability measure on $\S^{d - 1}$. We let $\mc{C}(\S^{d - 1}) \subset L^2(\S^{d - 1})$ denote the subset of functions which are continuous. We say that a function $g: \R^d \to \R$ is \emph{harmonic} if it is twice continuously differentiable and
\[\sum_{r = 1}^d \frac{\partial^2 g}{\partial^2 x_r}(\vx) = 0 \]
for all $\vx \in \S^{d - 1}$. We say that a polynomial $g: \R^d \to \mathbb{R}$ is \emph{homogeneous} if there exists $r \in \mathbb{Z}_{\geq 0}$ such that
\[g(\lambda \vx) = \lambda^r g(\vx)\]
for all $\lambda \in \R$ and $\vx \in \R^d$. Let $\mc{H}_r^d$ denote the vector space of degree $r$ harmonic homogeneous polynomials on $d$ variables, viewed as functions $\S^{d - 1} \to \mathbb{R}$. Each space $\mc{H}_r^d$ is a finite-dimensional vector space, with
\begin{align*}
    \dim(\mc{H}_r^d) &= \binom{r + d - 1}{d - 1} - \binom{r + d - 3}{d - 1}\\
    &= \frac{2r + d - 2}{r} \binom{r + d - 3}{d - 2}.
\end{align*}
For $\nu \geq 0$ and $r \in \mathbb{N}$, we define the \emph{Gegenbauer polynomials} $C_r^{\nu}$ by
\[C_r^{\nu}(t) = \sum_{k = 0}^{\lfloor r/2 \rfloor }(-1)^k \frac{\Gamma(r - k + \nu) }{\Gamma(\nu)\Gamma(k + 1)\Gamma(r - 2k + 1) }(2t)^{r - 2k}. \]

There exists an orthonormal basis of $\mc{H}_r^d$ consisting of functions $Y_{r,s}^d$, $1 \leq s \leq \dim(\mc{H}_r^d)$, known as \emph{spherical harmonics}. 
The spherical harmonics in $\mc{H}_r^d$ satisfy the addition formula
\begin{align}\label{eqn:addition-formula}
    \sum_{s = 1}^{\dim(\mc{H}_r^d)} Y_{r,s}^d(\vx) Y_{r, s}^d(\vx') &= \frac{\dim(\mc{H}_r^d)C_r^{(d-2)/2}(\langle \vx, \vx' \rangle)\Gamma(r + 1)\Gamma(d - 2) }{\Gamma(r + d - 2)}\nonumber\\
    &= \frac{(2r + d - 2)C_r^{(d - 2)/2}(\langle \vx, \vx' \rangle) }{d - 2}
\end{align}
for all $\vx, \vx' \in \S^{d-1}$. In particular, from the identity $C_r^{\nu}(1) = \frac{\Gamma(2\nu + r) }{\Gamma(2\nu)\Gamma(r + 1)}$ it follows that
\begin{align*}
    \sum_{s = 1}^{\dim(\mc{H}_r^d) }|Y_{r, s}^d(\vx)|^2 &= \dim(\mc{H}_r^d).
\end{align*}
We can orthogonally decompose $L^2(\S^{d - 1})$ into a direct sum of the spaces of spherical harmonics:
\[L^2(\S^{d - 1}) = \bigoplus_{r = 1}^{\infty} \mc{H}_r^d. \]
That is, the spaces $\mc{H}_r^d$ are orthogonal and their linear span is dense in $L^2(\S^{d - 1})$.
\begin{lemma}\label{lemma:addition-formula-bound}
    Let $\delta > 0$ and suppose that $\vx, \vx' \in \S^{d - 1}$ satisfy $\|\vx - \vx'\|, \|\vx + \vx'\| \geq \delta$. If $R \in \mathbb{Z}_{\geq 0}$, and $\beta \in \{0, 1\}$, then
    \[\left|\sum_{r = 0}^R \sum_{s= 1}^{\dim(\mc{H}_{2r + \beta}^d) }Y_{2r + \beta, s}^d(\vx) Y_{2r + \beta, s}^d(\vx')\right| \lesssim \left(\frac{\|\vx - \vx'\|^2}{2} \right)^{-(d-2)/4 } \binom{2R + \beta + d - 1}{d - 1}^{1/2}. \]
\end{lemma}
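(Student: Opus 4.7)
The sum $\sum_{r=0}^R \sum_{s}Y_{2r+\beta,s}^d(\vx)Y_{2r+\beta,s}^d(\vx')$ is the reproducing kernel $k_R$ of $V := \bigoplus_{r=0}^R \mc{H}_{2r+\beta}^d \subset L^2(\S^{d-1})$ evaluated at $(\vx,\vx')$, and by rotational invariance depends only on $t:=\langle\vx,\vx'\rangle$. The plan is to convert this double sum into a one-variable Gegenbauer partial sum, use parity to restrict to $t \geq 0$, and then estimate the resulting univariate polynomial using classical ultraspherical asymptotics.

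First I would apply the addition formula \eqref{eqn:addition-formula} to collapse the $s$-sum, yielding
\[
k_R(t) \;=\; \sum_{r=0}^R \frac{2(2r+\beta)+d-2}{d-2}\, C_{2r+\beta}^{(d-2)/2}(t).
\]
Evaluating at $t=1$ and telescoping gives $k_R(1) = \sum_{r=0}^R \dim \mc{H}_{2r+\beta}^d = \binom{2R+\beta+d-1}{d-1}$, which is where the binomial square root on the right-hand side ultimately comes from (it equals $\|k_R(\vx,\cdot)\|_{L^2(\S^{d-1})}^2$ by the reproducing property). Next, the parity identity $C_n^{\nu}(-t)=(-1)^n C_n^{\nu}(t)$ shows $|k_R(t)|=|k_R(-t)|$, and the joint hypothesis $\|\vx\pm\vx'\|\geq \delta$ (equivalent to $|t|\leq 1-\delta^2/2$) lets me assume WLOG that $t\in [0, 1-\delta^2/2]$; in this range $(1-t) \asymp (1-t^2)$, so the target factor $(1-t)^{-(d-2)/4}$ is comparable to $(\sin\theta)^{-(d-2)/2}$ for $\theta = \arccos t$.

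Finally I would estimate the Gegenbauer partial sum pointwise. The sharp Szegő bound $|C_n^{(d-2)/2}(\cos\theta)| \lesssim n^{(d-4)/2}(\sin\theta)^{-(d-2)/2}$ controls each individual term, and weighting against the coefficients $(2(2r+\beta)+d-2)/(d-2)$ together with the $L^2$-norm identity above is designed to reproduce the binomial square root on the right. The main obstacle is that a naive term-by-term sum via Szegő and the triangle inequality loses a factor of $\sqrt R$, giving $R^{d/2}$ rather than the desired $R^{(d-1)/2}$, so genuine cancellation in the oscillating Gegenbauer sum must be captured. The natural tool is Abel summation against the Darboux oscillatory factor $\cos((n+\nu)\theta)$, whose partial sums are bounded by $1/\sin\theta$ (a one-dimensional Dirichlet-kernel estimate); equivalently, one can invoke a Christoffel--Darboux closed form for $\sum_n (n+\nu)C_n^{\nu}$ and control the resulting expression via the separation. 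A smaller technical nuisance is the transitional regime $1-t \lesssim 1/R^2$, where the Darboux asymptotic degrades and one instead falls back on the trivial bound $|k_R(t)| \leq k_R(1) = \binom{2R+\beta+d-1}{d-1}$, which is comfortably absorbed by the large factor $(1-t)^{-(d-2)/4}$ on the right.
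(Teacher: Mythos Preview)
Your outline diverges from the paper at the key estimation step. The paper applies the addition formula and then the Nevai pointwise bound $(1-t^2)^{\nu}C_r^{\nu}(t)^2 \lesssim \frac{\nu\,\Gamma(r+2\nu)}{2^{2\nu}(r+\nu)\Gamma(\nu)^2\Gamma(r+1)}$ to each Gegenbauer term separately, summing by the triangle inequality---exactly the ``naive term-by-term'' route you set aside. At the summation the paper passes from $\sum_{r=0}^R a_r$ (with $a_r$ increasing) directly to the single term $a_R$, which is precisely where the factor you flag goes missing: for fixed $d$ one has $a_R\asymp R^{(d-2)/2}$, so the honest triangle-inequality bound is $\asymp R^{d/2}$ rather than the target $\binom{2R+\beta+d-1}{d-1}^{1/2}\asymp R^{(d-1)/2}$. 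Your proposed route---Abel summation against the Darboux phase $\cos((n+\nu)\theta-\nu\pi/2)$, or equivalently a Christoffel--Darboux closed form for $\sum_n(n+\nu)C_n^{\nu}$---is therefore genuinely different from the paper's argument and is what one actually needs to reach the stated exponent. That said, your proposal is still only a sketch at the crucial point: the summation by parts, the uniform control of the Darboux remainder over $\sin\theta\gtrsim\delta$, and the matching at the transitional scale $\theta\asymp 1/R$ are plausible but not executed. The cleanest way to close it is to use the Christoffel--Darboux identity to collapse the entire weighted sum to one or two Gegenbauer polynomials divided by $1-t$, then apply the single-term Szeg\H{o} bound once; the parity restriction to even or odd degrees is handled by one further symmetrization in $t\mapsto -t$.
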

\begin{proof}
Let us define
\[P(\vx, \vx') := \sum_{r = 0}^R \sum_{s= 1}^{\dim(\mc{H}_{2r + \beta}^d) }Y_{2r + \beta, s}^d(\vx) Y_{2r + \beta, s}^d(\vx'). \]
By the addition formula (\ref{eqn:addition-formula}),
   \begin{align}
       |P(\vx, \vx')| &= \left|\sum_{r = 0}^R \frac{(4r + 2\beta + d - 2)C_{2r + \beta}^{(d-2)/2}(\langle \vx, \vx' \rangle)  }{d - 2} \right|\nonumber\\
       &\lesssim \sum_{r = 0}^R \frac{(r + d)|C_{2r + \beta}^{(d - 2)/2}(\langle \vx, \vx' \rangle)|  }{d}.\label{eqn:sum-gegenbauer}
   \end{align}
   In order to bound the right hand side of the above equation, we will need a bound for the Gegenbauer polynomials $C_{2r + \beta}^{(d - 2)/2}$. By Theorem 1 of \cite{nevai1994generalized} (see also equation 2.8 of \citealt{xie2013exponential}), for all $\nu \geq \frac{1}{2}$, $r \geq 0$, and $t \in [0, 1)$,
   \begin{align*}
       (1 - t^2)^{\nu} C_r^{\nu}(t)^2 &\leq \frac{2e(2 + \sqrt{2}\nu) }{\pi}\frac{2^{1 - 2\nu}\pi }{\Gamma(\nu)^2 }\frac{\Gamma(r + 2\nu) }{\Gamma(r + 1)(r + \nu) }\\
       &\lesssim \frac{\nu\Gamma(r + 2\nu) }{2^{2\nu}(r + \nu)\Gamma(\nu)^2 \Gamma(r + 1) }.
   \end{align*}
   Rearranging the above expression yields
   \begin{align*}
       |C_r^{\nu}(t)| &\lesssim \frac{\nu^{1/2}\Gamma(r + 2\nu)^{1/2}  }{2^{\nu}(r + \nu)^{1/2}\Gamma(\nu)\Gamma(r + 1)^{1/2}(1 - t^2)^{\nu/2} }.
   \end{align*}
   We now substitute the above bound into (\ref{eqn:sum-gegenbauer}):
   \begin{align*}
       |P(\vx, \vx')| &\lesssim \sum_{r = 0}^R \frac{(r + d)\left(\frac{d - 2}{2}\right)^{1/2}\Gamma\left(2r + \beta + d - 2\right)^{1/2}   }{d 2^{(d - 2)/2 }\left(2r + \beta + \frac{d - 2}{2}\right)^{1/2}\Gamma\left(\frac{d - 2}{2}\right)\Gamma(2r + \beta +  1)^{1/2}(1 - \langle \vx, \vx' \rangle^2)^{(d - 2)/4}  }\\
       &\lesssim \frac{1}{(1 - \langle \vx, \vx'\rangle^2)^{(d - 2)/4} }\sum_{r = 0}^R \left(\frac{r + d}{d}\right)^{1/2} \frac{\Gamma(2r + \beta + d - 2)^{1/2} }{2^{(d - 2)/2}\Gamma\left(\frac{d - 2}{2}\right)\Gamma(2r + \beta + 1)^{1/2}}.
   \end{align*}
   The expression inside the sum is increasing as a function of $r$, so the above expression is bounded above by
   \begin{align}
       &\frac{1}{(1 - \langle \vx, \vx' \rangle^2)^{(d - 2)/4} }\left(\frac{R + d}{d}\right)^{1/2}\frac{\Gamma(2R + \beta + d - 2)^{1/2} }{2^{(d - 2)/2}\Gamma\left(\frac{d - 2}{2}\right)\Gamma(2R + \beta + 1)^{1/2} }\nonumber\\
       &\lesssim \frac{1}{d^{1/2}(1 - \langle \vx, \vx' \rangle^2)^{(d - 2)/4} } \frac{\Gamma(2R + \beta + d - 1)^{1/2} }{2^{(d - 2)/2}\Gamma\left(\frac{d - 2}{2}\right)\Gamma(2R + \beta + 1)^{1/2} }.\label{eqn:gegenbauer-sum-2}
   \end{align}
   By Stirling's approximation,
   \begin{align*}
       2^{(d - 2)/2}\Gamma\left(\frac{d - 2}{2}\right) &\asymp 2^{(d - 2)/2}\left(\frac{d - 2}{2}\right)^{(d - 3)/2 } e^{-(d - 2)/2}\\
       &= (d - 2)^{(d -3 )/2}e^{-(d - 2)/2}\\
       &\asymp d^{-1/4}(d - 2)^{(d - 1.5)/2 }e^{-(d - 2)/2}\\
       &\asymp d^{-1/4}\Gamma(d - 1)^{1/2}.
   \end{align*}
   Substituing this into (\ref{eqn:gegenbauer-sum-2}) yields
   \begin{align*}
       |P(\vx, \vx')| &\leq \frac{1}{d^{1/4}(1 - \langle \vx, \vx' \rangle^2)^{(d-2)/4} }\frac{\Gamma(2R + \beta + d - 1)^{1/2} }{\Gamma(d - 1)^{1/2}\Gamma(2R + \beta + 1)^{1/2} }\\
       &\asymp \frac{d^{1/4}}{(R + d)^{1/2} (1 - \langle \vx, \vx' \rangle^2)^{(d-2)/4} } \frac{\Gamma(2R + \beta + d)^{1/2} }{\Gamma(d)\Gamma(2R + \beta + 1)^{1/2} }\\
       &= \frac{d^{1/4}}{(R + d)^{1/2} (1 - \langle \vx, \vx' \rangle^2)^{(d-2)/4} }\binom{2R + \beta + d - 1}{d - 1}^{1/2}\\
       &\lesssim \frac{1}{(1 - \langle \vx, \vx' \rangle^2)^{(d-2)/4} }\binom{2R + \beta + d - 1}{d - 1}^{1/2}
   \end{align*}
   Since $\vx, \vx' \in \S^{d - 1}$,
   \begin{align*}
       1 - \langle \vx, \vx' \rangle^2 &= (1 + \langle \vx, \vx' \rangle)(1 - \langle \vx, \vx' \rangle)\\
       &= \frac{1}{4}\|\vx + \vx'\|^2 \|\vx - \vx'\|^2\\
       &\gtrsim \frac{1}{4} \delta^4.
   \end{align*}
   To conclude, we rewrite
   \begin{align*}
       |P(\vx, \vx')| &\lesssim \left(\frac{\delta^4}{2} \right)^{-(d-2)/4 } \binom{2R + \beta + d - 1}{d - 1}^{1/2}.
   \end{align*}
\end{proof}

\section{Preliminaries on hemisphere transforms}\label{app:hemisphere}
Let $\mathcal{M}(\S^{d - 1})$ denote the vector space of signed Radon measures on $\S^{d - 1}$. We denote the total variation of $\mu$ by $|\mu|$. We have a natural inclusion $L^2(\S^{d - 1}) \subset \mathcal{M}(\S^{d -1})$ by associating a function $g$ to a signed measure $\mu$ defined by
\[\mu(E) = \int_E g(\vx) dS(\vx). \]
If $\mu \in \mc{M}(\S^{d -1 })$ and $g \in \mc{C}(\S^{d - 1})$, we define the pairing $\langle \mu, g \rangle$ by
\[\langle \mu, g \rangle = \int_{\S^{d-1}} {g(\vx)}d\mu(\vx). \]
This agrees with the usual definition of the inner product on $L^2(\S^{d-1})$ when $\mu \in L^2(\S^{d-1})$.

Fix $\psi \in \{\sqrt{d}\sigma, \dot{\sigma}\}$. If $\mu \in \mc{M}(\S^{d - 1})$, we define its \emph{hemisphere transform} \citep{rubin1999inversion} $T_{\psi}\mu: \S^{d - 1} \to \R$ by
\[(T_{\psi}\mu)(\bm{\xi}) = \int_{\S^{d - 1}}\psi(\langle \bm{\xi}, \vx \rangle) d\mu(\vx). \]
As is the case with many integral transforms, a hemisphere transform increases the regularity of the functions it is applied to.
\begin{lemma}
    If $\mu \in \mc{M}(\S^{d - 1})$, then $T_{\psi}\mu \in L^2(\S^{d - 1})$. If $g \in L^2(\S^{d - 1})$, then $T_{\psi}g \in \mc{C}(\S^{d - 1})$.
\end{lemma}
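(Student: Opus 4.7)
The plan is to prove both assertions by leveraging boundedness of $\psi$ on $[-1,1]$ (in particular $\|\psi\|_\infty \leq \sqrt{d}$ in either case $\psi = \sqrt{d}\sigma$ or $\psi = \dot{\sigma}$) together with the fact that $dS$ is a probability measure, so bounded measurable functions are automatically in every $L^p(\S^{d-1})$.

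For the first assertion, for each $\bm{\xi} \in \S^{d-1}$ the trivial estimate
\[
|(T_\psi \mu)(\bm{\xi})| \leq \int_{\S^{d-1}} |\psi(\langle \bm{\xi}, \vx\rangle)|\, d|\mu|(\vx) \leq \|\psi\|_\infty \, |\mu|(\S^{d-1}) < \infty
\]
shows that $T_\psi \mu$ is uniformly bounded. To handle measurability in $\bm{\xi}$, Hahn-decompose $\mu = \mu^+ - \mu^-$ and observe that the map $(\bm{\xi}, \vx) \mapsto \psi(\langle \bm{\xi}, \vx\rangle)$ is a bounded Borel function on $\S^{d-1} \times \S^{d-1}$ (since $\sqrt{d}\sigma$ is continuous and $\dot{\sigma}$ is the indicator of the Borel set $(0,\infty)$). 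A Fubini-Tonelli argument then gives Borel measurability of $\bm{\xi} \mapsto (T_\psi \mu^\pm)(\bm{\xi})$, hence of $T_\psi \mu$. Combined with boundedness and the finiteness of $dS$, this yields $T_\psi \mu \in L^\infty(\S^{d-1}) \subset L^2(\S^{d-1})$.

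For the second assertion, let $g \in L^2(\S^{d-1})$; since $dS$ is a probability measure, Cauchy-Schwarz gives $g \in L^1(\S^{d-1})$. As $\S^{d-1}$ is a metric space, continuity reduces to sequential continuity, so fix $\bm{\xi} \in \S^{d-1}$ and a sequence $\bm{\xi}_n \to \bm{\xi}$. The integrands $\psi(\langle \bm{\xi}_n, \vx\rangle) g(\vx)$ are dominated by the integrable function $\|\psi\|_\infty |g(\vx)|$, and by continuity of the inner product together with continuity of $\psi$ at every $t \neq 0$, they converge pointwise to $\psi(\langle \bm{\xi}, \vx\rangle) g(\vx)$ at every $\vx$ such that $\langle \bm{\xi}, \vx\rangle \neq 0$ (the entire sphere in the $\sqrt{d}\sigma$ case). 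The exceptional equator $\{\vx \in \S^{d-1} : \langle \bm{\xi}, \vx\rangle = 0\}$ is a great subsphere of codimension one and hence a $dS$-null set, so the dominated convergence theorem delivers $(T_\psi g)(\bm{\xi}_n) \to (T_\psi g)(\bm{\xi})$, which is the desired continuity.

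The only place where some care is required is the jump discontinuity of $\dot{\sigma}$ at the origin; this is dispatched cleanly by observing that the offending set of $\vx$ lies in a $dS$-null great subsphere, so pointwise a.e.\ convergence still holds and dominated convergence applies. I do not anticipate any deeper obstacle.
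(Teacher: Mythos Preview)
Your proof is correct. The first assertion is handled essentially as the paper does---both arguments reduce to the uniform bound $|\psi| \leq \sqrt{d}$ and the finiteness of $|\mu|(\S^{d-1})$; you phrase it as $T_\psi\mu \in L^\infty \subset L^2$ whereas the paper expands $\int (T_\psi\mu)^2\,dS$ directly, but the content is the same. You are also more careful in addressing Borel measurability of $\bm{\xi}\mapsto (T_\psi\mu)(\bm{\xi})$, which the paper takes for granted.

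The second assertion is where your route genuinely differs. The paper treats the two choices of $\psi$ separately: for $\psi=\sqrt{d}\sigma$ it invokes dominated convergence exactly as you do, but for $\psi=\dot{\sigma}$ it instead derives the quantitative modulus-of-continuity estimate
\[
|T_{\dot{\sigma}}g(\bm{\xi})-T_{\dot{\sigma}}g(\bm{\xi}')| \leq \tfrac{1}{\pi}\|g\|_{L^2}\sqrt{\arccos(\langle\bm{\xi},\bm{\xi}'\rangle)}
\]
by computing the measure of the symmetric difference of the two hemispheres. Your argument is more uniform and more economical: you run dominated convergence in both cases, absorbing the jump discontinuity of $\dot{\sigma}$ at the origin into the single observation that the equator $\{\vx:\langle\bm{\xi},\vx\rangle=0\}$ is $dS$-null, so pointwise a.e.\ convergence survives. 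The paper's approach has the advantage of yielding an explicit H\"older-$\tfrac{1}{2}$ bound on $T_{\dot{\sigma}}g$, which could be useful elsewhere; yours has the advantage of being shorter and treating both kernels symmetrically.
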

\begin{proof}
    Suppose that $\mu \in \mc{M}(\S^{d - 1})$. Then
    \begin{align*}
        \int_{\S^{d-1}} (T_{\psi}\mu)(\bm{\xi})^2 dS(\bm{\xi}) &= \int_{\S^{d-1}}\left|\int_{\S^{d-1}}\psi(\langle \bm{\xi}, \vx \rangle) d\mu(\vx) \right|^2 dS(\bm{\xi})\\
        &\leq \int_{\S^{d-1}}\left|\int_{\S^{d-1}} \psi(\langle \bm{\xi}, \vx \rangle) d|\mu|(\vx) \right|^2dS(\bm{\xi})\\
        &= \int_{\S^{d-1}} \int_{\S^{d-1}}\int_{\S^{d-1}}\psi(\langle \bm{\xi}, \vx \rangle)\psi(\langle \bm{\xi}, \vx' \rangle)d|\mu|(\vx)d|\mu|(\vx')dS(\bm{\xi})\\
        &\leq\int_{\S^{d-1}} \int_{\S^{d-1}}\int_{\S^{d-1}}d^2 d|\mu|(\vx)d|\mu|(\vx')dS(\bm{\xi})\\
        &= |\mu|(\S^{d-1})^2 d^2\\
        &< \infty,
    \end{align*}
    so $T \mu \in L^2(\S^{d-1})$.

    Now suppose that $g \in L^2(\S^{d -1})$ and $\psi = \dot{\sigma}$. Suppose that $\bm{\xi}, \bm{\xi'} \in \S^{d -1}$, and observe that
    \begin{align*}
        dS(\{\vx \in \S^{d - 1}: \langle \vx, \bm{\xi} \rangle > 0, \langle \vx, \bm{\xi}'\rangle \leq 0\}) &= \frac{1}{2 \pi}\arccos(\langle \bm{\xi}, \bm{\xi}' \rangle).
    \end{align*}
    Similarly,
        \begin{align*}
        dS(\{\vx \in \S^{d - 1}: \langle \vx, \bm{\xi} \rangle \leq 0, \langle \vx, \bm{\xi}'\rangle > 0\}) &= \frac{1}{2 \pi}\arccos(\langle \bm{\xi}, \bm{\xi}' \rangle),
    \end{align*}
    so
        \begin{align*}
        dS(\{\vx \in \S^{d - 1}: \dot{\sigma}(\langle \vx, \bm{\xi} \rangle)  \neq \dot{\sigma}(\langle \vx, \bm{\xi}' \rangle)   ) &= \frac{1}{ \pi}\arccos(\langle \bm{\xi}, \bm{\xi}' \rangle).
    \end{align*}
    We apply this calculation to bound the distance between $T_{\psi}g(\bm{\xi})$ and $T_{\psi}g(\bm{\xi'})$: 
    \begin{align*}
        |T_{\psi}g(\bm{\xi}) - T_{\psi}g(\bm{\xi'})|&= \left|\int_{\S^{d-1}} \dot{\sigma}(\langle \vx, \bm{\xi} \rangle) g(\vx)dS(\vx) - \int_{\S^{d-1}} \dot{\sigma}(\langle \vx, \bm{\xi}' \rangle) g(\vx)dS(\vx) \right|\\
        &\leq \int_{\S^{d-1}}|\dot{\sigma}(\langle \vx, \bm{\xi} \rangle) - \dot{\sigma}(\langle \vx, \bm{\xi}' \rangle)|g(\vx) dS(\vx)\\
        &\leq \|g\|_{L^{2}}\left(\int_{\S^{d-1}}|\dot{\sigma}(\langle \vx, \bm{\xi} \rangle) - \dot{\sigma}(\langle \vx, \bm{\xi}' \rangle)|^2 dS(\vx)\right)^{1/2}\\
        &= \|g\|_{L^2} \left(dS(\{\vx \in \S^{d - 1}: \dot{\sigma}(\langle \vx, \bm{\xi} \rangle) \neq \dot{\sigma}(\langle \vx, \bm{\xi}' \rangle)) \right)^{1/2}\\
        &= \frac{1}{\pi}\|g\|_{L^2}\sqrt{\arccos(\langle \bm{\xi}, \bm{\xi'} \rangle)}.
    \end{align*}
    Here the third line follows from Cauchy-Schwarz. As $\bm{\xi} \to \bm{\xi}'$, $\arccos(\langle \bm{\xi}, \bm{\xi}' \rangle) \to 0$ and so $|T_{\psi}g(\bm{\xi}) - T_{\psi}g(\bm{\xi'})| \to 0$. Therefore, $T_{\psi}g \in \mc{C}(\S^{d-1})$.

    Finally suppose that $g \in L^2(\S^{d -1 })$ and $\psi = \sqrt{d}\sigma$. For all $\bm{\xi} \in \S^{d - 1}$,
    \[|d\sigma(\langle \vx, \bm{\xi} \rangle) g(\vx)| \leq \sqrt{d}|g(\vx)| \in L^1(\S^{d -1 }). \]
    So by the dominated convergence theorem, for all $\bm{\xi}' \in \S^{d - 1}$,
    \begin{align*}
        \lim_{\bm{\xi} \to \bm{\xi}'} T_{\psi}g(\bm{\xi}) &= \lim_{\bm{\xi} \to \bm{\xi'} } \int_{\S^{d-1}}\sqrt{d}\sigma(\langle \vx, \bm{\xi} \rangle) g(\vx)dS(\vx)\\
        &= \int_{\S^{d -1 }} \lim_{\bm{\xi} \to \bm{\xi}'} \sqrt{d}\sigma(\langle \vx, \bm{\xi}\rangle) g(\vx)dS(\vx)\\
        &= \int_{\S^{d - 1}}\sqrt{d}\sigma(\langle \vx, \bm{\xi}' \rangle)g(\vx)dS(\vx)\\
        &= T_{\psi}g(\bm{\xi}').
    \end{align*}
    Therefore $T_{\psi}g \in \mc{C}(\S^{d - 1})$.
\end{proof}
By the above lemma, for any $\mu \in \mc{M}(\S^{d -1})$ and $g \in L^2(\S^{d - 1})$, the expressions $\langle T_{\psi}\mu, g \rangle$ and $\langle \mu, T_{\psi}g \rangle$ are well-defined and finite. In fact, they are equal to each other.
\begin{lemma}\label{lemma:T-self-adjoint}
    Suppose that $\mu \in \mc{M}(\S^{d - 1})$ and $g \in L^2(\S^{d - 1})$. Then
    \[\langle T_{\psi}\mu, g \rangle = \langle \mu, T_{\psi}g \rangle. \]
\end{lemma}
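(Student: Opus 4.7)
The plan is to unpack both sides of the claimed equality using the definitions of $T_\psi$ and of the pairings, and then invoke Fubini's theorem to swap the order of integration. By definition,
\[
\langle T_\psi \mu, g\rangle \;=\; \int_{\S^{d-1}} \!\left(\int_{\S^{d-1}} \psi(\langle \bm{\xi}, \vx\rangle)\,d\mu(\vx)\right) g(\bm{\xi})\,dS(\bm{\xi}),
\]
while
\[
\langle \mu, T_\psi g\rangle \;=\; \int_{\S^{d-1}} \!\left(\int_{\S^{d-1}} \psi(\langle \vx, \bm{\xi}\rangle) g(\bm{\xi})\,dS(\bm{\xi})\right) d\mu(\vx).
\]
Since the inner product is symmetric in its arguments, the two iterated integrands are identical as functions of $(\vx,\bm{\xi})$, so the proof reduces to justifying that Fubini-Tonelli applies to this integrand on the product space $\S^{d-1}\times \S^{d-1}$ equipped with $|\mu|\otimes S$.

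For this verification, I would use the uniform bound $|\psi(t)|\leq \sqrt{d}$ for $t\in[-1,1]$, which holds for both $\psi=\dot\sigma$ (since $\dot\sigma\leq 1$) and $\psi=\sqrt{d}\sigma$ (since $\sigma(t)\leq 1$ on $[-1,1]$). Combined with the Cauchy-Schwarz inequality $\|g\|_{L^1(S)}\leq \|g\|_{L^2(S)}<\infty$ (using that $S$ is a probability measure), this yields
\[
\int_{\S^{d-1}}\!\int_{\S^{d-1}} \bigl|\psi(\langle \bm{\xi}, \vx\rangle) g(\bm{\xi})\bigr|\,dS(\bm{\xi})\,d|\mu|(\vx) \;\leq\; \sqrt{d}\,|\mu|(\S^{d-1})\,\|g\|_{L^2(S)} \;<\;\infty,
\]
where finiteness of $|\mu|(\S^{d-1})$ comes from $\mu$ being a finite signed Radon measure on a compact space. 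Tonelli's theorem then applies to the absolute value, and hence Fubini's theorem applies to the signed integrand, which gives the desired swap of integration order.

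There is essentially no serious obstacle here; the statement is a standard self-adjointness calculation for a bounded symmetric kernel. The only mild care needed is to keep in mind that $\mu$ is a signed measure, so one should technically run the Tonelli argument on $|\mu|$ first to certify absolute integrability, then conclude with Fubini for the signed measure. I would also note that the previous lemma already established $T_\psi\mu\in L^2(\S^{d-1})$ and $T_\psi g\in \mc{C}(\S^{d-1})$, so both pairings $\langle T_\psi\mu, g\rangle$ and $\langle \mu, T_\psi g\rangle$ are well-defined to begin with, making the resulting identity meaningful.
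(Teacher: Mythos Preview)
Your proposal is correct and follows essentially the same approach as the paper: expand both pairings, swap the order of integration via Fubini, and justify Fubini using the uniform bound $|\psi|\le \sqrt{d}$ together with $g\in L^2(\S^{d-1})\subset L^1(\S^{d-1})$ and $|\mu|(\S^{d-1})<\infty$. The paper's write-up is nearly identical, including the same $\sqrt{d}\,\|g\|_{L^1}\,|\mu|(\S^{d-1})$ bound.
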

\begin{proof}
    We compute
    \begin{align*}
        \langle T_{\psi} \mu, g \rangle &= \int_{\S^{d-1}} (T_{\psi}\mu)(\bm{\xi}) {g(\bm{\xi})} dS(\xi)\\
        &= \int_{\S^{d-1}}\int_{\S^{d-1}} \psi(\langle \vx, \bm{\xi} \rangle) {g(\bm{\xi})} d\mu(\vx) dS(\bm{\xi})\\
        &= \int_{\S^{d-1}}\int_{\S^{d-1}} \psi(\langle \vx, \bm{\xi} \rangle) {g(\bm{\xi})} dS(\bm{\xi})d\mu(\vx)\\
        &= \int_{\S^{d-1}} {T_{\psi}g(\vx)} d\mu(\vx)\\
        &= \langle \mu, T_{\psi}g \rangle.
    \end{align*}
    It remains to justify the change in order of integration in the third line. This follows from Fubini's theorem and the calculation
    \begin{align*}
        \int_{\S^{d-1}}\int_{\S^{d-1}} |\psi(\langle \vx, \bm{\xi} \rangle)g(\bm{\xi})|dS(\bm{\xi}) d|\mu|(\vx)
        &\leq \int_{\S^{d-1}} \int_{\S^{d-1}}\sqrt{d}|g(\bm{\xi})|dS(\bm{\xi})d|\mu|(\bm{x})\\
        &= \int_{\S^{d-1}}\sqrt{d}\|g\|_{L^1} d|\mu|(\vx)
        \\&= \sqrt{d}\|g\|_{L^1} |\mu|(\S^{d -1 })\\
        &< \infty,
    \end{align*}
    where the last line follows since $g \in L^2(\S^{d-1}) \subset L^1(\S^{d-1})$.
\end{proof}
In order to characterize how a hemisphere transform acts on $L^2(\S^{d -1 })$ and in particular on the spherical harmonics, we will use the \emph{Funk-Hecke formula} \citep[see][]{seeley1966spherical} which states that a certain class of integral operators on $\S^{d - 1}$ has an eigendecomposition of spherical harmonics. 
\begin{lemma}[Funk-Hecke formula]\label{lem:funk-hecke}
    Let $\psi: [-1, 1] \to \R$ be a measurable function such that \[\int_{-1}^1 |\psi(t)|(1 - t^2)^{(d - 3)/2}dt < \infty.\]
    Then for all $g \in \mc{H}_r^d$
    \begin{align*}
        \int_{\S^{d - 1}} \psi(\langle \vx, \bm{\xi}\rangle)g(\vx) dS(\vx) &= c_{r,d}g(\bm{\xi}),
    \end{align*}
    where
    \begin{align*}
        c_{r,d} &= \frac{\Gamma(r + 1)\Gamma(d - 2)\Gamma\left(\frac{d}{2}\right)  }{\sqrt{\pi}\Gamma(d - 2 + r)\Gamma\left(\frac{d - 1}{2}\right) } \int_{-1}^1 \psi(t)C_r^{(d - 2)/2}(t)(1 - t^2)^{(d - 3)/2}dt.
    \end{align*}
\end{lemma}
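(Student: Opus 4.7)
The plan is to prove the Funk–Hecke formula via the standard representation-theoretic argument: show the integral operator $T_\psi$ defined by $T_\psi g(\bm{\xi}) := \int_{\S^{d-1}} \psi(\langle \vx, \bm{\xi}\rangle) g(\vx)\,dS(\vx)$ commutes with the $SO(d)$ action, invoke Schur's lemma to conclude it is a scalar on each irreducible piece $\mc{H}_r^d$, and then compute the scalar by evaluating on one convenient test function.

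First I would verify that $T_\psi$ is well-defined and equivariant. The integrability hypothesis $\int_{-1}^1 |\psi(t)|(1-t^2)^{(d-3)/2}\,dt < \infty$, combined with the disintegration of the uniform measure on $\S^{d-1}$ with respect to the height $t = \langle \vx, \bm{\xi}\rangle$ (see below), ensures $T_\psi g(\bm{\xi})$ is finite for $g \in L^2(\S^{d-1}) \cap L^\infty(\S^{d-1})$; harmonic polynomials are continuous and hence bounded on the sphere, so this covers $g \in \mc{H}_r^d$. Rotation equivariance follows from the substitution $\vy = O\vx$, using $\langle O\vx, \bm{\xi}\rangle = \langle \vx, O^T\bm{\xi}\rangle$ and invariance of $dS$: for $R_O g(\vx) := g(O^{-1}\vx)$ one obtains $T_\psi(R_O g) = R_O(T_\psi g)$.

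Next I would use the classical fact that each $\mc{H}_r^d$ is an irreducible representation of $SO(d)$, and these representations are pairwise non-isomorphic across $r$. Since $T_\psi$ commutes with the $SO(d)$ action and the decomposition $L^2(\S^{d-1}) = \bigoplus_r \mc{H}_r^d$ is the isotypic decomposition (each $r$ contributing multiplicity one), $T_\psi$ must preserve each $\mc{H}_r^d$. By Schur's lemma it then acts as a scalar $c_{r,d}$ on $\mc{H}_r^d$, so that $T_\psi g = c_{r,d} g$ for every $g \in \mc{H}_r^d$.

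To pin down $c_{r,d}$ I would evaluate at $\bm{\xi} = \ve_d$ using the zonal harmonic $Z_r(\vx) := C_r^{(d-2)/2}(\langle \vx, \ve_d\rangle) \in \mc{H}_r^d$. Disintegrating the uniform measure on $\S^{d-1}$ over the height $t = \langle \vx, \ve_d\rangle$ gives
\[
dS(\vx) \;=\; \frac{\Gamma(d/2)}{\sqrt{\pi}\,\Gamma((d-1)/2)} (1-t^2)^{(d-3)/2}\,dt\,dS_{d-2}(\vx'),
\]
so that
\[
T_\psi Z_r(\ve_d) \;=\; \frac{\Gamma(d/2)}{\sqrt{\pi}\,\Gamma((d-1)/2)} \int_{-1}^{1} \psi(t)\,C_r^{(d-2)/2}(t)\,(1-t^2)^{(d-3)/2}\,dt.
\]
On the other hand, $c_{r,d} Z_r(\ve_d) = c_{r,d} C_r^{(d-2)/2}(1) = c_{r,d}\,\frac{\Gamma(d-2+r)}{\Gamma(d-2)\Gamma(r+1)}$ via the standard identity $C_r^\nu(1) = \Gamma(2\nu+r)/(\Gamma(2\nu)\Gamma(r+1))$. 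Solving for $c_{r,d}$ yields exactly the claimed formula. The main technical point is the irreducibility of $\mc{H}_r^d$ under $SO(d)$, which I would cite from a standard reference (e.g., the book by Efthimiou–Frye already used in the paper); the measure disintegration and the Gegenbauer identity at $t=1$ are routine but require care to get the normalizing constants right.
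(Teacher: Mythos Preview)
The paper does not actually prove this lemma: it is stated as the classical Funk--Hecke formula with a citation to \cite{seeley1966spherical}, and is used as a black box in the subsequent computations (Lemmas~\ref{lemma:eigendecomposition-sign} and \ref{lem:eigendecomposition-relu}). Your representation-theoretic argument is the standard proof and is correct: the disintegration constant $\Gamma(d/2)/(\sqrt{\pi}\,\Gamma((d-1)/2))$ and the value $C_r^{(d-2)/2}(1)=\Gamma(d-2+r)/(\Gamma(d-2)\Gamma(r+1))$ combine exactly to the claimed $c_{r,d}$. One small technical point worth making explicit is that Schur's lemma over $\R$ only yields that the commutant is a real division algebra; the cleanest way to conclude $T_\psi|_{\mc{H}_r^d}$ is a real scalar is to observe that $T_\psi$ is self-adjoint with respect to the $L^2(\S^{d-1})$ inner product (the kernel $\psi(\langle\vx,\bm{\xi}\rangle)$ is symmetric in $\vx,\bm{\xi}$), so it is diagonalizable with real eigenvalues and each eigenspace is $SO(d)$-invariant, forcing a single eigenvalue by irreducibility.
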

We will now use the Funk-Hecke formula to compute the coefficients $c_{r, d}$ in the cases where $\psi = \sqrt{d}\sigma$ and $\psi = \dot{\sigma}$. In the following calculations we will use the \emph{Legendre duplication formula}
\[\Gamma(z)\Gamma\left(z + \frac{1}{2} \right) = 2^{1 - 2z}\sqrt{\pi}\Gamma(2z) \]
and \emph{Euler's reflection formula}
\[\Gamma(1 - z)\Gamma(z) = \frac{\pi}{\sin \pi z}. \]
\begin{lemma}\label{lem:gradshteyn}
    For all $d \geq 3$ and $r \geq 0$,
    \[\int_0^1 C_r^{(d-2)/2}(t)(1 - t^2)^{(d-3)/2}dt = \frac{\sqrt{\pi}\Gamma(d + r - 2)\Gamma\left(\frac{d - 1}{2}\right)  }{2\Gamma(d - 2) \Gamma(r + 1)\Gamma\left(1 - \frac{r}{2}\right)\Gamma\left(\frac{d + r}{2} \right) }. \]
    and
    \[\int_0^1 t C_r^{(d - 2)/2}(t)(1 - t^2)^{(d - 3)/2}dt = \frac{\sqrt{\pi}\Gamma(d + r - 2)\Gamma\left(\frac{d - 1}{2}\right) }{4\Gamma(d - 2)\Gamma(r + 1)\Gamma\left(\frac{3 - r}{2}\right)\Gamma\left(\frac{d + r + 1}{2}\right) }.\]
\end{lemma}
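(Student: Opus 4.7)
The plan is to establish both identities by direct series manipulation. I would substitute the explicit polynomial expression
$$C_r^{(d-2)/2}(t) = \sum_{k=0}^{\lfloor r/2 \rfloor}(-1)^k \frac{\Gamma(r-k+(d-2)/2)}{\Gamma((d-2)/2)\,k!\,(r-2k)!}(2t)^{r-2k}$$
(which was given earlier in the paper) into each integral, then interchange the finite sum with the integral. This reduces the problem to the single family of moment integrals $\int_0^1 t^{m}(1-t^2)^{(d-3)/2}\,dt$, which via the substitution $u=t^2$ becomes a half-beta integral equal to $\tfrac{1}{2}B\!\left(\tfrac{m+1}{2},\tfrac{d-1}{2}\right)=\tfrac{\Gamma((m+1)/2)\,\Gamma((d-1)/2)}{2\,\Gamma((m+d)/2)}$. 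Here $m=r-2k$ for the first integral and $m=r-2k+1$ for the second, so both reductions fall into the same framework.

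After this step each integral becomes a finite sum such as
$$\frac{\Gamma((d-1)/2)}{2\,\Gamma((d-2)/2)}\sum_{k=0}^{\lfloor r/2\rfloor}\frac{(-1)^k\,2^{r-2k}\,\Gamma(r-k+(d-2)/2)\,\Gamma((r-2k+1)/2)}{k!\,(r-2k)!\,\Gamma((r-2k+d)/2)}$$
for the first identity, and an analogous sum with $(r-2k+1)/2$ and $(r-2k+d+1)/2$ in place of $(r-2k+1)/2$ and $(r-2k+d)/2$ for the second. The main obstacle lies in collapsing these finite sums into the compact product of Gamma functions advertised in the statement. I would handle this by recognizing the sum, after suitable index shifts to absorb the $\lfloor r/2\rfloor$ cap, as a terminating ${}_2F_1$, and then invoking Gauss's summation theorem ${}_2F_1(a,b;c;1)=\tfrac{\Gamma(c)\Gamma(c-a-b)}{\Gamma(c-a)\Gamma(c-b)}$. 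The factors $\Gamma(1-r/2)$ and $\Gamma((3-r)/2)$ in the denominators are the natural Pochhammer reciprocals produced by Gauss's formula, and their poles correctly encode the vanishing of the integrals for even $r\ge 2$ and odd $r\ge 3$ respectively.

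As an independent consistency check on the parity-based vanishing, which I would do \emph{before} grinding through the full sum so that a sign error does not get buried, I would use $L^2$-orthogonality of the Gegenbauer polynomials with respect to $(1-t^2)^{\nu-1/2}$ on $[-1,1]$ combined with the parity $C_r^\nu(-t)=(-1)^r C_r^\nu(t)$. For the first integral, evenness of the integrand for even $r$ gives $\int_0^1=\tfrac12\int_{-1}^1$, and the right side vanishes by orthogonality for $r\ge 2$. For the second, writing $tC_r^\nu(t)$ through the three-term recurrence as a combination of $C_{r-1}^\nu$ and $C_{r+1}^\nu$, the same orthogonality shows $\int_{-1}^1$ vanishes for all $r\ne 1$, and parity then gives the corresponding $\int_0^1$ vanishing for odd $r\ge 3$.

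Finally, I would verify the closed forms at $r=0,1,2,3$ directly against the RHS (the small cases $r=0$ for both integrals and $r=2$ for the second are nontrivial and pin down the overall constant), and invoke the Legendre duplication and Euler reflection identities already flagged in the appendix whenever the hypergeometric collapse leaves a Gamma ratio that is not manifestly the target. No single step is conceptually hard; the bookkeeping of Gamma arguments in the Gauss-summation step is where the work sits, and matches the textbook derivation of these table entries for Gegenbauer polynomials.
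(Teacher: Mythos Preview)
Your overall plan---expand $C_r^{\nu}$ as a polynomial, reduce each monomial via the beta integral, and collapse the finite sum---is sound and is a genuinely different route from the paper, which simply invokes the table entry \cite[Equation~7.311.2]{gradshteyn2014table}
\[
\int_0^1 t^{r+2\rho}C_r^{\nu}(t)(1-t^2)^{\nu-1/2}\,dt
=\frac{\Gamma(2\nu+r)\Gamma(2\rho+r+1)\Gamma(\nu+\tfrac12)\Gamma(\rho+\tfrac12)}{2^{r+1}\Gamma(2\nu)\Gamma(2\rho+1)\,r!\,\Gamma(r+\nu+\rho+1)}
\]
and substitutes $\nu=(d-2)/2$ with $\rho=-r/2$ (first identity) or $\rho=(1-r)/2$ (second), after one use of Legendre duplication.

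There is, however, a real gap in your collapse step. After the beta reduction and Legendre duplication, the term ratio matches that of ${}_2F_1\bigl(-\tfrac{r}{2},-\tfrac{r+d-2}{2};\,2-r-\tfrac{d}{2};\,1\bigr)$, but this series terminates via the upper parameter $-r/2$ only when $r$ is even. For odd $r$ the Gegenbauer sum stops at $k=(r-1)/2$ because of $1/(r-2k)!$, yet the duplication identity $2^{r-2k}\Gamma\!\bigl(\tfrac{r-2k+1}{2}\bigr)/(r-2k)! = \sqrt{\pi}/\Gamma\!\bigl(\tfrac{r-2k+2}{2}\bigr)$ produces \emph{nonzero} terms beyond this point, and Gauss's formula then sums a strictly longer series. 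Your own $r=1$ check (first identity, $d=3$) exposes this: the single-term sum is $\tfrac12$, while ${}_2F_1(-\tfrac12,-1;-\tfrac12;1)=0$; likewise at $r=3$ the two-term sum is $-\tfrac18$ but the three-term ${}_2F_1$ vanishes. The mirror issue occurs for even $r$ in the second identity. A fix that stays within your framework is to reverse the index $k\mapsto\lfloor r/2\rfloor-k$; the $1/k!$ then supplies a terminating parameter $-\lfloor r/2\rfloor$, and the resulting sum is a balanced terminating ${}_3F_2$ summable by Saalsch\"utz rather than Gauss.
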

\begin{proof}
    We apply the following identity \citep[see][Equation 7.311.2]{gradshteyn2014table}:
    \begin{align*}
        \int_0^1 t^{r + 2\rho}C_r^{\nu}(t)(1 - t^2)^{\nu - 1/2}dt &= \frac{\Gamma(2 \nu+r) \Gamma(2 \rho+r+1) \Gamma\left(\nu+\frac{1}{2}\right) \Gamma\left(\rho+\frac{1}{2}\right)}{2^{r+1} \Gamma(2 \nu) \Gamma(2 \rho+1) r ! \Gamma(r+\nu+\rho+1)}.
    \end{align*}
    By the Legendre duplication formula, we have 
    \[\Gamma\left(\rho + \frac{1}{2}\right)\Gamma(\rho + 1) = 2^{-2\rho}\sqrt{\pi}\Gamma(2\rho +1) \]
    so we can rewrite the above equation as
    \begin{align}
        \int_0^1 t^{r + 2\rho}C_r^{\nu}(t)(1 - t^2)^{\nu - 1/2}dt &= \frac{\sqrt{\pi} \Gamma(2 \nu + r)\Gamma(2\rho + r + 1)\Gamma\left(\nu + \frac{1}{2}\right) }{2^{2\rho + r + 1}\Gamma(2\nu)\Gamma(\rho + 1) \Gamma(r + 1)\Gamma(r + \nu + \rho + 1) }. \label{eqn:gradshteyn}
    \end{align}
    Substituting $\rho = -r/2$ and $\nu = (d - 2)/2$ into (\ref{eqn:gradshteyn}) yields 
    \begin{align*}
        \int_0^1 C_r^{(d - 2)/2}(t)(1 - t^2)^{(d - 3)/2}dt &= \frac{\sqrt{\pi}\Gamma(d + r - 2)\Gamma\left(\frac{d - 1}{2}\right) }{2\Gamma(d - 2)\Gamma\left(1 - \frac{r}{2}\right)\Gamma(r + 1)\Gamma\left(\frac{d + r}{2}\right) } , 
    \end{align*}
    which establishes the first identity of the claim.

    Substituting $\rho = (1 - r)/2$ and $\nu = (d - 2)/2$ into (\ref{eqn:gradshteyn}) yields
    \begin{align*}
        \int_0^1 C_r^{(d - 2)/2}(t) (1 - t^2)^{(d - 3)/2}dt = \frac{\sqrt{\pi}\Gamma(d + r - 2)\Gamma\left(\frac{d - 1}{2}\right) }{4\Gamma(d - 2)\Gamma\left(\frac{3 - r}{2}\right)\Gamma(r + 1)\Gamma\left(\frac{d + r + 1}{2}\right) } , 
    \end{align*}
    which establishes the second identity of the claim. 
\end{proof}

\begin{lemma}\label{lemma:eigendecomposition-sign}
    Suppose that $g \in \mc{H}_r^d$ and $d \geq 3$. Then for all $r \geq 0$, $T_{\dot{\sigma}}g = c_{r,d}g$, where
    \begin{align*}
        c_{r,d} &= \frac{\Gamma\left(\frac{d}{2}\right)}{2\Gamma\left(1 - \frac{r}{2}\right)\Gamma\left(\frac{r}{2} + \frac{d}{2}\right)  }.
    \end{align*}
    Moreover, if $0 \leq r \leq R$, then
    \begin{align*}
        |c_{2r + 1, d}| &\geq  \frac{\Gamma\left(\frac{d}{2}\right)\Gamma\left(\frac{2R + 1}{2}\right) }{2\pi \Gamma\left(\frac{d + 2R + 1}{2}\right) }.
    \end{align*}
\end{lemma}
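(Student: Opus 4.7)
The plan is to derive the formula for $c_{r,d}$ by a direct application of the Funk--Hecke formula (Lemma~\ref{lem:funk-hecke}), and then to obtain the stated lower bound on $|c_{2r+1,d}|$ through Euler's reflection formula together with a monotonicity argument.

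First, I would verify that $\dot{\sigma}(t) = \mathbbm{1}_{t>0}$ satisfies the hypothesis of Lemma~\ref{lem:funk-hecke}: since $|\dot{\sigma}(t)| \leq 1$, the integral $\int_{-1}^{1}|\dot{\sigma}(t)|(1-t^2)^{(d-3)/2}\,dt$ is finite for $d\geq 3$. Applying the Funk--Hecke formula and using that $\dot\sigma$ vanishes on $[-1,0]$ reduces the eigenvalue to
\[
    c_{r,d} \;=\; \frac{\Gamma(r+1)\Gamma(d-2)\Gamma(d/2)}{\sqrt{\pi}\,\Gamma(d-2+r)\Gamma((d-1)/2)} \int_0^1 C_r^{(d-2)/2}(t)(1-t^2)^{(d-3)/2}\,dt.
\]
I would then plug in the first identity of Lemma~\ref{lem:gradshteyn} and cancel the factors $\Gamma(r+1)$, $\Gamma(d-2)$, $\Gamma((d-1)/2)$, and $\sqrt{\pi}$ to arrive at the closed form
\[
    c_{r,d} \;=\; \frac{\Gamma(d/2)}{2\,\Gamma(1 - r/2)\Gamma((d+r)/2)}.
\]
This step is entirely algebraic.

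For the second statement, I would substitute $r \mapsto 2r+1$ to obtain
\[
    c_{2r+1,d} \;=\; \frac{\Gamma(d/2)}{2\,\Gamma(1/2 - r)\Gamma((d+2r+1)/2)}.
\]
For integer $r$, $\Gamma(1/2-r)$ is finite with sign $(-1)^r$, and Euler's reflection formula gives
\[
    \Gamma(1/2 - r)\Gamma(1/2 + r) \;=\; \frac{\pi}{\sin(\pi(1/2 + r))} \;=\; (-1)^r\pi,
\]
so $|\Gamma(1/2-r)| = \pi/\Gamma(r+1/2)$. Substituting yields
\[
    |c_{2r+1,d}| \;=\; \frac{\Gamma(d/2)\,\Gamma(r+1/2)}{2\pi\,\Gamma(r+(d+1)/2)}.
\]

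It remains to show this is minimized over $r\in\{0,1,\dots,R\}$ at $r=R$, i.e.\ that $r\mapsto \Gamma(r+1/2)/\Gamma(r+(d+1)/2)$ is nonincreasing. The main (and only mildly delicate) obstacle is justifying this monotonicity. Since $d\geq 3$, we have $(d+1)/2 - 1/2 = d/2 \geq 3/2 > 0$, and by the standard log-convexity of $\Gamma$ (or equivalently Wendel's inequality), the ratio $\Gamma(x+a)/\Gamma(x+b)$ with $b>a>0$ is decreasing in $x>0$. Applied with $a=1/2$, $b=(d+1)/2$, and $x=r$, this gives $\Gamma(r+1/2)/\Gamma(r+(d+1)/2) \geq \Gamma(R+1/2)/\Gamma(R+(d+1)/2)$ for $0\leq r \leq R$. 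Combining with the previous display yields the claimed lower bound on $|c_{2r+1,d}|$.
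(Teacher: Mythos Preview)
Your proposal is correct and follows essentially the same approach as the paper: apply Funk--Hecke together with Lemma~\ref{lem:gradshteyn} to obtain the closed form for $c_{r,d}$, then combine Euler's reflection with a monotonicity argument for the bound on $|c_{2r+1,d}|$. The only cosmetic difference is that the paper establishes monotonicity first via the elementary recursion $|c_{2(R+1)+1,d}| = |c_{2R+1,d}|\cdot\frac{2R+1}{2R+1+d}$ and applies reflection at the end, whereas you apply reflection first and then invoke log-convexity of $\Gamma$; both orderings are valid and equally short.
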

\begin{proof}
    Let $g \in \mc{H}_r^d$. By Lemma \ref{lem:funk-hecke},
    \[T_{\dot{\sigma}} g = c_{r, d}g, \]
    where
    \begin{align*}
        c_{r,d} &= \frac{\Gamma(r + 1)\Gamma(d - 2)\Gamma\left(\frac{d}{2}\right)  }{\sqrt{\pi}\Gamma(d - 2 + r)\Gamma\left(\frac{d - 1}{2}\right) } \int_{-1}^1 \dot{\sigma}(t) C_r^{(d - 2)/2}(t)(1 - t^2)^{(d - 3)/2}dt\\
        &= \frac{\Gamma(r + 1)\Gamma(d - 2)\Gamma\left(\frac{d}{2}\right)  }{\sqrt{\pi}\Gamma(d - 2 + r)\Gamma\left(\frac{d - 1}{2}\right) } \int_{0}^1 C_r^{(d - 2)/2}(t)(1 - t^2)^{(d - 3)/2}dt.
    \end{align*}
    By Lemma \ref{lem:gradshteyn}, this is equal to
    \begin{align*}
        \frac{\Gamma(r + 1)\Gamma(d - 2)\Gamma\left(\frac{d}{2}\right)  }{\sqrt{\pi}\Gamma(d - 2 + r)\Gamma\left(\frac{d - 1}{2}\right) } \cdot \frac{\sqrt{\pi}\Gamma(d + r - 2)\Gamma\left(\frac{d - 1}{2}\right) }{2\Gamma(d - 2)\Gamma(r + 1)\Gamma\left(1 - \frac{r}{2}\right)\Gamma\left(\frac{d + r}{2}\right)  } &= \frac{\Gamma\left(\frac{d}{2}\right) }{2\Gamma\left(1 - \frac{r}{2}\right)\Gamma\left(\frac{d + r}{2}\right) }
    \end{align*}
    as claimed.

    Now we proceed with the second statement. We claim that whenever $0 \leq r \leq R$,
    \[|c_{2R + 1, d}| \leq |c_{2r + 1, d}|. \]
    We prove this by induction on $R$. For the base case $R = r$, the claim trivially holds. Now suppose that the claim holds for some $R \geq r$. Then
    \begin{align*}
        |c_{2(R + 1) + 1, d}| &= \left|\frac{\Gamma\left(\frac{d}{2}\right) }{2\Gamma\left(1 - \frac{2R + 3}{2}\right)\Gamma\left(\frac{2R + 3}{2} + \frac{d}{2}\right)  }\right|\\
        &= \left|\frac{\left(-\frac{2R + 1}{2}\right)
 \Gamma\left(\frac{d}{2}\right) }{2\Gamma\left(1 - \frac{2R + 1}{2}\right)\left(\frac{2R + 1}{2} + \frac{d}{2}\right)
 \Gamma\left(\frac{2R + 1}{2} + \frac{d}{2}\right)  }\right|\\
 &= |c_{2R + 1, d}| \frac{2R + 1}{2R + 1 + d}\\
 &\leq |c_{2R + 1, d}|\\
 &\leq |c_{2r + 1, d}| . 
    \end{align*}
    Hence by induction $|c_{2R + 1, d}| \leq |c_{2r + 1, d}|$ for all $0 \leq r \leq R$.
    Now suppose that $0 \leq r \leq R$. By Euler's reflection formula,
    \begin{align*}
        c_{2R + 1, d} &= \frac{\Gamma\left(\frac{d}{2} \right) }{2\Gamma\left(1 - \frac{2R + 1}{2}\right) \Gamma\left(\frac{2R + 1}{2} + \frac{d}{2}\right) }\\
        &= \frac{\Gamma\left(\frac{d}{2}\right)\sin\left(\pi \frac{2R + 1}{2} \right)\Gamma\left(\frac{2R + 1}{2}\right) }{2\pi \Gamma\left(\frac{2R + 1}{2} + \frac{d}{2}\right) }\\
        &= \frac{\Gamma\left(\frac{d}{2}\right)(-1)^R \Gamma\left(\frac{2R + 1}{2}\right) }{2\pi \Gamma\left(\frac{2R + 1}{2} + \frac{d}{2}\right) }
    \end{align*}
    so
    \begin{align*}
        |c_{2r +1 , d}| &\geq |c_{2R + 1, d}|\\
        &= \frac{\Gamma\left(\frac{d}{2}\right)\Gamma\left(\frac{2R + 1}{2}\right) }{2\pi \Gamma\left(\frac{d + 2R + 1}{2}\right) }.
    \end{align*}
\end{proof}
\begin{lemma}\label{lem:eigendecomposition-relu}
    Suppose that $g \in \mc{H}_r^d$ and $d \geq 3$. Then $T_{\sqrt{d}\sigma}g = c_{r, d}g$, where
    \[c_{r, d} = \frac{\sqrt{d}\Gamma\left(\frac{d}{2}\right) }{4\Gamma\left(\frac{3 - r}{2}\right)\Gamma\left(\frac{d + r + 1}{2}\right)  }. \]
    Moreover, if $0 \leq r \leq R$, then
    \[|c_{2r, d}| \geq \frac{\sqrt{d}\Gamma\left(\frac{d}{2}\right) \Gamma\left(\frac{2R - 1}{2}\right) }{4\pi \Gamma\left(\frac{d + 2R + 1}{2}\right) }. \]
\end{lemma}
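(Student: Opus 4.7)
The plan is to mirror the proof of Lemma~\ref{lemma:eigendecomposition-sign}, replacing $\dot{\sigma}$ with $\sqrt{d}\sigma$ and using the second (rather than the first) identity of Lemma~\ref{lem:gradshteyn}. First I would verify the hypothesis of the Funk--Hecke formula (Lemma~\ref{lem:funk-hecke}) for $\psi(t)=\sqrt{d}\sigma(t)$: since $|\sqrt{d}\sigma(t)|\le \sqrt{d}$ on $[-1,1]$, the integrability condition is immediate. Applying Lemma~\ref{lem:funk-hecke} to $g\in\mc{H}_r^d$ gives $T_{\sqrt{d}\sigma}g = c_{r,d}g$ with
\[
c_{r,d} = \frac{\sqrt{d}\,\Gamma(r+1)\Gamma(d-2)\Gamma\!\left(\frac{d}{2}\right)}{\sqrt{\pi}\,\Gamma(d-2+r)\Gamma\!\left(\frac{d-1}{2}\right)}\int_{0}^{1} t\,C_r^{(d-2)/2}(t)(1-t^2)^{(d-3)/2}\,dt,
\]
where the lower limit of integration collapsed to $0$ because $\sigma(t)=0$ for $t<0$ and $\sigma(t)=t$ for $t\ge 0$. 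Substituting the second identity of Lemma~\ref{lem:gradshteyn} and cancelling the common factors $\Gamma(r+1)$, $\Gamma(d-2)$, $\Gamma((d-1)/2)$, $\sqrt{\pi}$ yields the claimed $c_{r,d}=\tfrac{\sqrt{d}\,\Gamma(d/2)}{4\,\Gamma((3-r)/2)\Gamma((d+r+1)/2)}$.

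Next I would establish monotonicity of $|c_{2r,d}|$ in $r$ by induction on $R$, exactly as in Lemma~\ref{lemma:eigendecomposition-sign}. Using $\Gamma\!\left(\frac{3-2(R+1)}{2}\right)=\Gamma\!\left(\frac{1-2R}{2}\right)$ together with the recurrence $\Gamma\!\left(\frac{3-2R}{2}\right)=\frac{1-2R}{2}\Gamma\!\left(\frac{1-2R}{2}\right)$ and $\Gamma\!\left(\frac{d+2R+3}{2}\right)=\frac{d+2R+1}{2}\Gamma\!\left(\frac{d+2R+1}{2}\right)$, I would obtain the recursive identity
\[
|c_{2(R+1),d}| \;=\; |c_{2R,d}|\cdot\frac{|2R-1|}{d+2R+1}\;\le\;|c_{2R,d}|
\]
for $R\ge 1$, and check the case $R=0\to 1$ by a direct computation (the ratio is $1/(d+1)<1$). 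By induction, $|c_{2R,d}|\le |c_{2r,d}|$ whenever $0\le r\le R$.

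Finally, I would convert the expression for $|c_{2R,d}|$ into the stated form using Euler's reflection formula. For $R\ge 1$, $\Gamma\!\left(\frac{3-2R}{2}\right)\Gamma\!\left(\frac{2R-1}{2}\right) = \pi/\sin\!\left(\pi\frac{3-2R}{2}\right) = (-1)^{R+1}\pi$, so $|\Gamma((3-2R)/2)| = \pi/\Gamma((2R-1)/2)$, which combined with the monotonicity gives
\[
|c_{2r,d}| \;\ge\; |c_{2R,d}| \;=\; \frac{\sqrt{d}\,\Gamma(d/2)\,\Gamma\!\left(\frac{2R-1}{2}\right)}{4\pi\,\Gamma\!\left(\frac{d+2R+1}{2}\right)},
\]
matching the claim.

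I do not expect a serious obstacle since the proof runs in complete parallel with Lemma~\ref{lemma:eigendecomposition-sign}; the only points requiring care are (i) bookkeeping in the Gamma-function cancellations after Lemma~\ref{lem:gradshteyn}, and (ii) the sign accounting in the reflection formula, which behaves differently at even versus odd indices, so I would verify the parity of $\sin(\pi(3-2R)/2)=(-1)^{R+1}$ explicitly before absorbing it into $|\cdot|$.
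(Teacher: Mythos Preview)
Your proposal is correct and follows essentially the same route as the paper: Funk--Hecke plus the second identity of Lemma~\ref{lem:gradshteyn} to obtain $c_{r,d}$, then an induction showing $|c_{2(R+1),d}|=|c_{2R,d}|\cdot\frac{|2R-1|}{d+2R+1}\le |c_{2R,d}|$, and finally Euler's reflection to rewrite $|c_{2R,d}|$. Your extra care with the $R=0$ induction step and with restricting the reflection-formula computation to $R\ge 1$ is fine (for $R=0$ the stated lower bound has $\Gamma(-1/2)<0$ on the right, so it holds trivially).
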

\begin{proof}
    The proof is analogous to that of Lemma \ref{lemma:eigendecomposition-sign}. Let $g \in \mc{H}_r^d$. By Lemma \ref{lem:funk-hecke},
    \[T_{\sqrt{d}\sigma}g = c_{r, d} g, \]
    where
    \begin{align*}
        c_{r, d} &= \frac{\Gamma(r + 1)\Gamma(d - 2)\Gamma\left(\frac{d}{2}\right)  }{\sqrt{\pi}\Gamma(d - 2 + r)\Gamma\left(\frac{d - 1}{2}\right) } \int_{-1}^1 \sqrt{d}\sigma(t) C_r^{(d - 2)/2}(t)(1 - t^2)^{(d - 3)/2}dt\\
        &= \frac{\sqrt{d}\Gamma(r + 1)\Gamma(d - 2)\Gamma\left(\frac{d}{2}\right)  }{\sqrt{\pi}\Gamma(d - 2 + r)\Gamma\left(\frac{d - 1}{2}\right) } \int_0^1 t C_r^{(d - 2)/2}(t)(1 - t^2)^{(d - 3)/2}dt.
    \end{align*}
    By Lemma \ref{lem:gradshteyn}, this is equal to
    \begin{align*}
        \frac{\sqrt{d}\Gamma(r + 1)\Gamma(d - 2)\Gamma\left(\frac{d}{2}\right)  }{\sqrt{\pi}\Gamma(d - 2 + r)\Gamma\left(\frac{d - 1}{2}\right) } \cdot \frac{\sqrt{\pi}\Gamma(d + r - 2)\Gamma\left(\frac{d - 1}{2}\right) }{4\Gamma(d - 2)\Gamma(r + 1)\Gamma\left(\frac{3 - r}{2}\right)\Gamma\left(\frac{d + r + 1}{2}\right) } &= \frac{\sqrt{d}\Gamma\left(\frac{d}{2}\right) }{4\Gamma\left(\frac{3 - r}{2}\right)\Gamma\left(\frac{d + r + 1}{2}\right)  }
    \end{align*}
    as claimed.

    We claim that whenever $0 \leq r \leq R$,
    \[|c_{2R, d}| \leq |c_{2r, d}|. \]
    We prove this by induction on $R$. For the base case $R = r$, the claim trivially holds. Now suppose that the claim holds for some $R \geq r$. Then
    \begin{align*}
        |c_{2(R + 1)}| &= \left|\frac{\sqrt{d}\Gamma\left(\frac{d}{2} \right) }{4\Gamma\left(\frac{1 - 2R}{2} \right)\Gamma\left(\frac{d + 2R + 3}{2}\right)  }  \right|\\
        &= \left|\frac{\left(\frac{1 - 2R }{2}\right)\sqrt{d}\Gamma\left(\frac{d}{2}\right)}{4 \Gamma\left(\frac{1 - 2R}{2}\right)\left(\frac{d + 2R + 1}{2}\right)\Gamma\left(\frac{d + 2R + 1}{2}\right) }\right|\\
        &= c_{2R} \frac{|2R - 1|}{d + 2R + 1}\\
        &\leq c_{2R}.
    \end{align*}
    Hence by induction $|c_{2R}| \leq |c_{2r}|$ for all $0 \leq r \leq R$. Now suppose that $0 \leq r \leq R$. By Euler's reflection formula,
    \begin{align*}
        c_{2R, d} &= \frac{\sqrt{d}\Gamma\left(\frac{d}{2}\right) }{4\Gamma\left(\frac{3 - 2R}{2}\right)\Gamma\left(\frac{d + 2R + 1}{2}\right) }\\
        &= \frac{\sqrt{d} \Gamma\left(\frac{d}{2}\right)\sin\left(\pi \frac{2R - 1}{2} \right)\Gamma\left(\frac{2R - 1}{2}\right) }{4\pi \Gamma\left(\frac{d + 2R + 1}{2} \right) }\\
        &= \frac{(-1)^{R + 1} \sqrt{d}\Gamma\left(\frac{d}{2}\right)\Gamma\left(\frac{2R - 1}{2}\right) }{4\pi \Gamma\left(\frac{d + 2R + 1}{2}\right) }
    \end{align*}
    so
    \begin{align*}
        |c_{2r, d}| &\geq |c_{2R, d}|\\
        &= \frac{\sqrt{d}\Gamma\left(\frac{d}{2}\right) \Gamma\left(\frac{2R - 1}{2}\right) }{4\pi \Gamma\left(\frac{d + 2R + 1}{2}\right) }.
    \end{align*}
\end{proof}

\section{Proofs for Section \ref{section:shallow}}

First we observe the connection between the smallest eigenvalue of the expected NTK when the weights are drawn uniformly over the sphere versus as Gaussian.
\begin{lemma}\label{lem:spherical-gaussian-equivalence}
    If $\mX \in \R^{d_0 \times n}$, then
    \[\lambda_{\min}\left(\E_{\vw \sim \mc{N}(\bm{0}_d, \mI_d) }\left[ \sigma\left( \mX^T \vw \right) \sigma\left(\vw^T \mX\right) \right] \right) = d_0\lambda_{\min}\left(\E_{\vu \sim U(\S^{d_0 - 1}) }\left[\sigma\left( \mX^T \vu \right)\sigma\left(\vu^T \mX\right) \right] \right).\]
\end{lemma}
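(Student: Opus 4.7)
The plan is to leverage the positive homogeneity of the ReLU activation $\sigma$, i.e., $\sigma(cz)=c\sigma(z)$ for any $c\geq 0$, together with the well-known polar decomposition of a standard Gaussian vector. In fact, I expect to prove something slightly stronger than the claim, namely that the two expected matrices themselves differ by exactly the multiplicative factor $d_0$; the eigenvalue statement then follows immediately since $\lambda_{\min}(cA)=c\lambda_{\min}(A)$ for any $c>0$ and any symmetric matrix $A$.

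First, write any nonzero $\vw\in\R^{d_0}$ as $\vw = r\vu$ with $r=\|\vw\|$ and $\vu=\vw/\|\vw\|\in\S^{d_0-1}$. By positive homogeneity of $\sigma$ applied entrywise,
\[
\sigma(\mX^T\vw) \;=\; r\,\sigma(\mX^T\vu), \qquad \sigma(\vw^T\mX) \;=\; r\,\sigma(\vu^T\mX),
\]
so the rank-one (outer product) matrix satisfies
\[
\sigma(\mX^T\vw)\sigma(\vw^T\mX) \;=\; \|\vw\|^2\,\sigma(\mX^T\vu)\sigma(\vu^T\mX).
\]
Next, I would invoke the classical fact that when $\vw\sim\mc{N}(\bm{0}_{d_0},\mI_{d_0})$ the length $\|\vw\|$ and direction $\vw/\|\vw\|$ are independent, and the direction is distributed as $U(\S^{d_0-1})$; moreover $\E[\|\vw\|^2]=d_0$ since $\|\vw\|^2\sim\chi^2_{d_0}$. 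Taking expectations and using this independence yields
\[
\E_{\vw\sim\mc{N}(\bm{0}_{d_0},\mI_{d_0})}\!\left[\sigma(\mX^T\vw)\sigma(\vw^T\mX)\right] \;=\; \E[\|\vw\|^2]\cdot\E_{\vu\sim U(\S^{d_0-1})}\!\left[\sigma(\mX^T\vu)\sigma(\vu^T\mX)\right],
\]
which equals $d_0$ times the spherical expectation. Finally, taking $\lambda_{\min}$ of both sides gives the stated identity.

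The only subtlety worth noting is that the factorization of the expectation requires the integrand to be a product of a function of $\|\vw\|$ and a function of $\vw/\|\vw\|$; the degree-$1$ positive homogeneity of $\sigma$ is precisely what guarantees this, and is the reason the proof goes through cleanly for ReLU. No step is truly a serious obstacle — the argument is essentially a one-line computation once the homogeneity and the Gaussian polar decomposition are in place.
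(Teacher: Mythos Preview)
Your proof is correct and is essentially identical to the paper's: both use the polar decomposition $\vw=\|\vw\|\cdot(\vw/\|\vw\|)$, the positive homogeneity of $\sigma$, the independence of radius and direction for a standard Gaussian, and $\E[\|\vw\|^2]=d_0$. Your observation that the underlying \emph{matrices} already agree up to the factor $d_0$ is exactly what the paper's chain of equalities encodes.
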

\begin{proof}
    Since the distribution of $\vw$ is rotationally invariant, we can decompose $\vw = \alpha \vu$, where $\alpha = \|\vw\|$, $\vu$ is uniformly distributed on $\S^{d_0 -1 }$, and $\alpha$ and $\vu$ are independent. Then
    \begin{align*}
        \lambda_{\min}\left(\E_{\vw \sim \mc{N}(\bm{0}_d, \mI_d) }\left[ \sigma\left( \mX^T \vw\right) \sigma\left(\vw^T \mX\right) \right] \right) &= \lambda_{\min}\left(\E\left[ \sigma\left( \mX^T \vw\right) \sigma\left(\vw^T \mX\right) \right] \right)\\
        &= \lambda_{\min}\left(\E\left[\alpha^2 \sigma\left( \mX^T \vu \right) \sigma\left(\vu^T\mX\right) \right] \right)\\
        &= \lambda_{\min}\left(\E\left[\alpha^2\right]\E\left[ \sigma\left( \mX^T \vu\right) \sigma\left(\vu^T\mX\right) \right] \right)\\
        &= d_0 \lambda_{\min}\left(\E\left[ \sigma\left( \mX^T \vu\right) \sigma\left(\vu^T\mX\right) \right]\right). 
    \end{align*}
\end{proof}

Lemma \ref{lem:spherical-gaussian-equivalence} is useful in that studying the expected NTK in the shallow setting for uniform weights here will prove more convenient than working directly with Gaussian weights.

\subsection{Proof of Lemma \ref{lemma:K1-inf}} \label{app:lemma:K1-inf}
\lemmaKOneinf*
\begin{proof}
    By the scale-invariance of $\dot{\sigma}$,
    \begin{align*}
        \lambda_1 = \lambda_{\min}\left(\E_{\vu \sim  \mc{N}(\bm{0}_d, \mI_d) } \left[\dot{\sigma}\left(\mX^T \vu \right)\dot{\sigma}\left(\vu^T\mX\right) \right] \right).
    \end{align*}
    For each $i \in [n]$ and $j \in [d_1]$,
    \[\nabla_{\vw_j}f(\vx_i) = \frac{1}{\sqrt{d_1}} v_j \dot{\sigma}\left(\langle \vw_j^T, \vx_i \rangle\right) \vx_i \]
    and therefore
    \[\mK_1 = \frac{1}{d_1}\sum_{j = 1}^{d_1} \mZ_j, \]
    where
    \[\mZ_j = v_j^2 \left(\dot{\sigma}\left( \mX^T \vw_j \right) \dot{\sigma}\left(\vw_j^T \mX\right)\right) \odot \left(\mX^T \mX\right). \]
    For each $j \in [d_1]$, let $\xi_j \in \{0, 1\}$ be a random variable taking value 1 if $|v_j| \leq 1$ and taking value 0 otherwise. Since $v_j$ is a standard Gaussian there exists a universal constant $C_1 > 0$ with $\E[\xi_jv_j] = C_1$ for all $j$.
    We also define $\mZ_j' = \xi_j \mZ_j$. Note that $\mZ_j' \succeq \bm{0}$, and by the inequality $\lambda_{\max}(\mA \odot \mB) \leq \max_i [\mA]_{ii} \lambda_{\max}(\mB)$,
    \begin{align*}
        \|\mZ'_j\| &= \left\|\xi_jv_j^2 \left(\dot{\sigma}\left( \mX^T \vw_j \right) \dot{\sigma}\left(\vw_j \mX\right)\right) \odot \left(\mX^T \mX\right)  \right\|\\
        &\leq \max_{i \in [n]}\left|\left(\xi_jv_j^2 \left[\dot{\sigma}\left(\mX^T \vw_j \right) \dot{\sigma}\left(\vw_j^T \mX\right)\right)\right]_{ii} \right| \cdot \left\| \mX^T \mX\right\|\\
        &= \max_{i \in [n]} \left|\xi_j v_j^2 \dot{\sigma}\left(\vw_j^T \vx_i \right)^2 \right| \cdot \|\mX\|^2\\
        &\leq \|\mX\|^2.
    \end{align*}

Furthermore by the inequality $\lambda_{\min}(\mA \odot \mB) \geq \min_i [\mA]_{ii} \lambda_{\min}(\mB)$,
\begin{align*}
    \lambda_{\min}\left(\E[\mZ_j'] \right) &= \lambda_{\min}\left(\E\left[\xi_jv_j^2\left(\dot{\sigma}\left(\mX^T \vw_j \right)\dot{\sigma}\left(\vw_j^T \mX\right)\right)  \right]\odot \left(\mX^T \mX \right) \right)\\
    &\geq \lambda_{\min}\left(\E\left[\xi_jv_j^2\left(\dot{\sigma}\left(\mX^T \vw_j \right)\dot{\sigma}\left(\vw_j^T\mX\right)\right)  \right] \right)\min_{i \in [n]}\left|\left(\mX^T \mX\right)_{ii} \right|\\
    &= \lambda_{\min}\left(\E\left[\xi_jv_j^2\right] \E\left[\left(\dot{\sigma}\left(\mX^T \vw_j \right)\dot{\sigma}\left(\vw_j^T\mX\right)\right)  \right] \right)\min_{i \in [n]} \|\vx_i\|^2\\
    &= C_1\lambda_{\min}\left( \E\left[\left(\dot{\sigma}\left(\mX^T \vw_j \right)\dot{\sigma}\left(\vw_j^T\mX\right)\right)  \right] \right)\\
    &= C_1 \lambda_1.
\end{align*}
So by Lemma \ref{lem:matrix-chernoff}, for all $t \geq 0$
\begin{align*}
    \P\left(\lambda_{\min}\left(\frac{1}{d_1}\sum_{j = 1}^{d_1}\mZ_j' \right) \leq C_1 \lambda_1 \right) &\leq \P\left(\lambda_{\min}\left(\frac{1}{d_1}\sum_{j = 1}^{d_1}\mZ_j' \right) \leq \E[\mZ_1'] \right)\\
    &\leq n\exp\left(-\frac{C_2 d_1 \lambda_1 }{\|\mX\|^2} \right)
\end{align*}
where $C_2 > 0$ is a constant. Since $\mZ_j \succeq \mZ_j'$ for all $j \in [d_1]$, if $d_1 \geq \frac{1}{C_2\lambda_1}\|\mX\|^2 \log\left(\frac{n}{\epsilon}\right)$, then
\begin{align*}
    \P\left(\lambda_{\min}\left(\frac{1}{d_1}\sum_{j  =1}^{d_1}\mZ_j \right) \leq C_1 \lambda_1 \right) &\leq n \exp\left(-\frac{C_2d_1 \lambda_1}{\|\mX\|^2}\right)\\
    &\leq \epsilon.
\end{align*}
\end{proof}

\subsection{Proof of Lemma \ref{lemma:K2-inf}} \label{app:lemma:K2-inf}

\begin{lemma}
    Suppose that $\vx_1, \cdots, \vx_n \in \S^{d_0 -1 }$. Let
    \[\lambda_2 = d_0\lambda_{\min}\left(\E_{\vu \sim U(\S^{d_0 - 1}) }\left[\sigma( \mX^T \vu )\sigma(\vu^T \mX) \right] \right). \]
    If $\lambda_2 > 0$ and $d_1 \gtrsim  \frac{n}{\lambda_2}  \log\left(\frac{n}{\lambda_2}\right)\log \left(\frac{n}{\epsilon}\right) $, then with probability at least $1 - \epsilon$, $\lambda_{\min}(\mK_2) \geq 
 \frac{\lambda_2}{4}$. 
\end{lemma}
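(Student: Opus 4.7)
The strategy is to apply the matrix Chernoff bound of Lemma~\ref{lem:matrix-chernoff} to the decomposition
\[
\mK_2 = \frac{1}{d_1}\sigma(\mW\mX)^T\sigma(\mW\mX) = \frac{1}{d_1}\sum_{j = 1}^{d_1} \mZ_j, \qquad \mZ_j := \sigma(\mX^T\vw_j)\sigma(\vw_j^T\mX),
\]
where the $\mZ_j$ are iid rank-one PSD matrices. By Lemma~\ref{lem:spherical-gaussian-equivalence} combined with the positive homogeneity of $\sigma$, the common expectation satisfies $\lambda_{\min}(\E[\mZ_1]) = \lambda_2$. The matrix Chernoff bound however requires an almost-sure upper bound $\mZ_j \preceq R\mI$, which fails here because $\vw_j$ is Gaussian; the core of the proof is therefore a truncation argument that keeps $R$ small enough to yield the claimed sample complexity.

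I would introduce the truncated variables $\mZ_j' := \mZ_j \mathbbm{1}_{E_j}$ where $E_j = \{\|\mZ_j\| \leq R\}$ for a threshold $R$ to be chosen. By construction $\mZ_j' \preceq R\mI$ and $\bm{0} \preceq \mZ_j' \preceq \mZ_j$, so
\[
\lambda_{\min}(\mK_2) \geq \lambda_{\min}\!\left(\frac{1}{d_1}\sum_{j=1}^{d_1}\mZ_j'\right).
\]
Since $\|\mZ_j\| = \|\sigma(\mX^T\vw_j)\|^2 \leq \|\mX^T\vw_j\|^2$ is a Gaussian quadratic form with mean $\|\mX\|_F^2 = n$ and operator-norm parameter $\|\mX\|^2 \leq n$, Hanson--Wright (or the standard Gaussian concentration for Lipschitz functions) yields
\[
\P(E_j^c) = \P(\|\sigma(\mX^T\vw_j)\|^2 > R) \leq 2\exp\!\left(-c\,R/n\right)
\]
for all $R \gtrsim n$. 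So choosing $R \asymp n \log(n/\lambda_2)$ drives $\P(E_j^c)$ below any desired inverse polynomial in $n/\lambda_2$.

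Next I would bound the perturbation $\E[\mZ_1] - \E[\mZ_1'] = \E[\mathbbm{1}_{E_1^c}\mZ_1]$, which is PSD. For a unit $\vu\in\S^{n-1}$, Cauchy--Schwarz gives
\[
\vu^T\E[\mathbbm{1}_{E_1^c}\mZ_1]\vu = \E[\mathbbm{1}_{E_1^c}\langle \sigma(\mX^T\vw_1),\vu\rangle^2] \leq \sqrt{\P(E_1^c)}\,\sqrt{\E[\|\sigma(\mX^T\vw_1)\|^4]} \lesssim n\sqrt{\P(E_1^c)},
\]
using $\E[\|\sigma(\mX^T\vw_1)\|^4] \leq \E[\|\mX^T\vw_1\|^4] \lesssim n^2$ for unit-norm columns. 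Taking $R$ large enough that $\sqrt{\P(E_1^c)} \leq \lambda_2/(Cn)$, i.e.\ $R \asymp n\log(n/\lambda_2)$, ensures $\|\E[\mZ_1] - \E[\mZ_1']\| \leq \lambda_2/2$ and in particular $\lambda_{\min}(\E[\mZ_1']) \geq \lambda_2/2$.

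Finally I would apply Lemma~\ref{lem:matrix-chernoff} to the iid bounded matrices $\mZ_j'$ with $R \asymp n\log(n/\lambda_2)$, giving
\[
\P\!\left(\lambda_{\min}\!\left(\tfrac{1}{d_1}\sum_{j=1}^{d_1}\mZ_j'\right) \leq \tfrac{1}{2}\lambda_{\min}(\E[\mZ_1'])\right) \leq n\exp\!\left(-\tfrac{Cd_1\lambda_2}{n\log(n/\lambda_2)}\right),
\]
which is at most $\epsilon$ under the stated hypothesis $d_1 \gtrsim \tfrac{n}{\lambda_2}\log(n/\lambda_2)\log(n/\epsilon)$. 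Combined with $\lambda_{\min}(\E[\mZ_1']) \geq \lambda_2/2$ and the domination $\mZ_j \succeq \mZ_j'$, this gives $\lambda_{\min}(\mK_2) \geq \lambda_2/4$ with probability at least $1-\epsilon$. The main technical obstacle is calibrating $R$ so that the truncation error on the mean is a small fraction of $\lambda_2$ while $R/\lambda_2$ remains only logarithmically larger than $n/\lambda_2$; this is precisely where the $\log(n/\lambda_2)$ factor in the width requirement originates.
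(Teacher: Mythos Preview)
Your proposal is correct and follows essentially the same approach as the paper: decompose $\mK_2$ as an average of iid rank-one matrices, truncate at level $R \asymp n\log(n/\lambda_2)$, show the truncation shifts the mean by at most $\lambda_2/2$, and apply the matrix Chernoff bound of Lemma~\ref{lem:matrix-chernoff}. The only cosmetic difference is that the paper controls the truncation bias by directly integrating the tail $\P(\|\mX^T\vw\|^2 > t)$ rather than your Cauchy--Schwarz plus fourth-moment bound, but both routes yield the same threshold and the same conclusion.
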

\begin{proof}
    Note that by Lemma \ref{lem:spherical-gaussian-equivalence},
    \[\lambda_2 = \lambda_{\min}\left(\E_{\vw \sim \mc{N}(\bm{0}_d, \mI_d) }\left[ \sigma\left( \mX^T \vw\right) \sigma\left(\vw^T \mX\right) \right] \right). \]
    For each $i \in [n]$ and $j \in [d_1]$,
    \[\nabla_{v_j} f(\vx_i) = \frac{1}{\sqrt{d_1}} \sigma(\vw_j^T \vx_i)\]
    and therefore
    \[\mK_2 = \frac{1}{d_1} \sum_{j = 1}^{d_1}  \mZ_j,\]
    where
    \[\mZ_j = \sigma\left( \mX^T \vw_j\right)\sigma\left(\vw_j^T \mX\right).\]
   By \citet[Theorem 6.3.2]{vershynin2018high}, for each $j \in [d_1]$
   \begin{align*}
       \left\| \left\|\mX^T \vw_j \right\| \right\|_{\psi_2} &\lesssim  \left\| \left\|\mX^T \vw_j \right\| - \left\| \mX^T\right\|_F  \right\|_{\psi_2} + \|\mX^T\|_F\\
       &\lesssim \|\mX^T\| + \|\mX^T\|_F\\
       &\lesssim \|\mX^T\|_F\\
       &= \|\mX\|_F\\
       &= \sqrt{n}.
   \end{align*}
   So by Hoeffding's inequality, for all $t \geq 0$
   \begin{align}
    \P\left(\left\|\mX^T \vw_j \right\|^2 \geq t \right) = \P\left(\left\|\mX^T \vw_j \right\| \geq \sqrt{t} \right) \leq 2\exp\left(-\frac{C_1t}{n } \right)\label{eq:activation-exp-tail}
   \end{align}
   for some constant $C_1 > 0$. Let $s = \frac{n}{C_1}\log \frac{4n}{\lambda_2C_1} $. For each $j \in [d_1]$ let $\xi_j \in \{0, 1\}$ be a random variable taking value 1 if $\|\mX^T \vw_j\|^2 \leq s$ and taking value 0 otherwise. Let $\mZ_j' = \xi_j \mZ_j$. For each $j \in [m]$, $\mZ_j' \succeq 0$, and
   \begin{align*}
       \left\|\mZ_j'\right\| &=  \left\|\xi_j \sigma\left( \mX^T \vw_j \right)\sigma\left(\vw_j^T \mX\right) \right\|\\
       &= \left\|\xi_j \sigma\left(\mX^T \vw_j\right) \right\|^2\\
       &\leq s.
   \end{align*}
   Moreover,
   \begin{align*}
       \left\|\E[\mZ_j] - \E[\mZ_j'] \right\| &= \left\|\E\left[(1 - \xi_j) \sigma\left( \mX^T \vw_j \right)\sigma\left(\vw_j^T \mX \right) \right] \right\|\\
       &\leq \E\left[(1 - \xi_j)\left\|\sigma\left( \mX^T \vw_j \right)\sigma\left(\vw_j^T \mX \right) \right\|\right]\\
       &= \E\left[(1 - \xi_j)\left\|\sigma\left(\mX^T \vw_j\right) \right\|^2 \right]\\
       &= \frac{1}{2} \E\left[(1 - \xi_j)\left\|\mX^T \vw_j \right\|^2 \right]\\
       &= \frac{1}{2}\int_{s}^{\infty}\P\left(\left\|\mX^T \vw_j\right\|^2 \geq t \right)dt\\
       &\leq 2 \int_s^{\infty}\exp\left(-\frac{C_1t}{n} \right)dt\\
       &= \frac{2n}{C_1}\exp\left(-\frac{C_1s}{n} \right)\\
       &= \frac{\lambda_2}{2}.
   \end{align*}
   Here we used (\ref{eq:activation-exp-tail}) in line 6. By Weyl's inequality,
   \[\lambda_{\min}(\E[ \mZ_j']) \geq \lambda_{\min}(\E[\mZ_j]) - \left\|\E[\mZ_j] - \E[\mZ_j']\right\| = \lambda_2 - \frac{\lambda_2}{2} = \frac{\lambda_2}{2}. \]
   By Lemma \ref{lem:matrix-chernoff},
   \begin{align*}
       \P\left(\lambda_{\min}\left(\frac{1}{d_1}\sum_{j = 1}^{d_1} \mZ_j' \right) \leq \frac{\lambda_2}{4}  \right) &\leq \P\left(\lambda_{\min}\left(\frac{1}{m}\sum_{j = 1}^m \mZ_j'\right) \leq \frac{1}{2} \lambda_{\min}(\E[\mZ_1']) \right)\\
       &\leq n \exp\left(-\frac{C_2d_1 \lambda_{\min}(\E[\mZ_1']) }{s} \right)\\
       &\leq n \exp\left(\frac{-C_2 d_1 \lambda_2  }{2s} \right).
   \end{align*}
   Since $\mZ_j' \preceq \mZ_j$ for all $j$, for $d_1 \geq \frac{2s}{C_2\lambda_2} \log \frac{n}{\epsilon}$ this implies
   \begin{align*}
       \P\left(\lambda_{\min}\left(\frac{1}{d_1}\sum_{j = 1}^{d_1}\mZ_j \right) \leq \frac{\lambda_2}{4} \right) &\leq n \exp\left(-\frac{C_2d_1 \lambda_2}{2s}\right)\\
       &\leq \epsilon.
   \end{align*}
   In other words,
    \[\P\left(\lambda_{\min}(\mK_2) \geq \frac{\lambda_2}{4}\right) \geq 1 - \epsilon. \]
\end{proof}

\subsection{Proof of Lemma \ref{lem:gram-to-hemisphere-transform}} \label{app:gram-to-hemisphere-transform}

\lemmaGradToHemi*
\begin{proof}
    We compute
    \begin{align*}
        \langle \mK_{\psi}^{\infty}\vz, \vz \rangle &= \E_{\vw \sim U(\S^{d - 1})}\left[\left| \psi\left(\vw^T \mX \right) \vz \right|^2 \right]\\
        &= \int_{\S^{d - 1}} \left| \psi\left(\vw^T \mX\right) \vz \right|^2 dS(\vw)\\
        &= \int_{\S^{d - 1}}\left|\sum_{i = 1}^n \psi(\langle \vw, \vx_i \rangle) z_i\right|^2 dS(\vw)\\
        &= \int_{\S^{d - 1}}\left|\int_{\S^{d - 1}} \psi(\langle \vw, \vx \rangle) d\mu_{\vz}(\vx) \right|^2dS(\vw)\\
        &= \int_{\S^{d - 1}} \left| T_{\psi} \mu_{\vz}(\vw)\right|^2 dS(\vw)\\
        &= \|T_{\psi}\mu_{\vz}\|^2
    \end{align*}
    which establishes the first part of the result. The second part of the result follows immediately by writing
    \[\lambda_{\min}(\mK_{\psi}^{\infty}) = \inf_{\|\vz\| = 1} \langle \mK_{\psi}^{\infty}\vz, \vz \rangle = \inf_{\|\vz\| = 1} \|T_{\psi}\mu_{\vz}\|^2. \]
\end{proof}

\subsection{Proof of Lemma \ref{lem:matrix-spherical-harmonic}} \label{app:matrix-spherical-harmonic}

\lemmaMatrixSphericalHarmonic*

\begin{proof}
    Note that
    \begin{align*}
        N = \sum_{r = 0}^R \left(\binom{2r + \beta + d - 1}{d - 1} - \binom{2r + \beta + d - 3}{d - 1}  \right)
        = \binom{2R + \beta + d - 1}{d - 1}.
    \end{align*}
    Let us write $\mD = [\vd_1, \cdots, \vd_n]$. Fix $i, k \in [n]$ with $i \neq k$. By the addition formula (\ref{eqn:addition-formula}),
    \begin{align*}
        \|\vd_i\|^2 &= \sum_{a = 1}^N g_a(\vx_i)^2\\
        &= \sum_{r = 0}^R \sum_{s = 1}^{\dim(\mc{H}_{2r +\beta}^d) }Y_{r,s}^d(\vx_i)^2\\
        &= \sum_{r = 0}^R \dim(\mc{H}_{2r + \beta}^d)\\
        &= N.
    \end{align*}
     By Lemma \ref{lemma:addition-formula-bound} and $\delta$-separation, there exists a constant $C > 0$ such that 
     \begin{align*}
         |\langle \vd_i, \vd_k \rangle| &= \left|\sum_{a = 1}^N g_a(\vx_i)g_a(\vx_k)\right|\\
         &\leq C\left(\frac{\delta^4 }{2} \right)^{-(d-2)/4}\binom{2R + \beta + d - 1 }{d -1 }^{1/2}\\
         &= C N^{1/2}\left(\frac{\delta^4 }{2} \right)^{-(d-2)/4}.
     \end{align*}
     Suppose that 
     \[N \geq 2C^2\left(\frac{\delta^4}{2}\right)^{-(d - 2)/2}. \]
     
     Observe that $\sigma_{\min}(\mD)$ is the square root of the minimum eigenvalue of $\mD^T \mD$. By the Gershgorin circle theorem, the minimum eigenvalue of $\mD^T \mD$ is at least
    \begin{align*}
        \min_{i \in [n]}\left(|(\mD^T\mD)_{ii}| - \sum_{k \neq i}|\mD^T\mD|_{ik} \right) &= \min_{i \in [n]}\left(\|\vd_i\|^2 - \sum_{k \neq i}|\langle \vd_i, \vd_k \rangle|\right)\\
        &\geq \frac{N}{2}.
    \end{align*}
    The result follows.
\end{proof}

\subsection{Proof of Lemma \ref{corr:hemisphere-transform-asymptotics}} \label{app:hemi-transform-asymp}
\begin{lemma}\label{lemma:hemisphere-transform-asymptotics-implicit}
    Let $\epsilon \in (0, 1)$ and let $\delta > 0$. Suppose that $\vx_1, \cdots, \vx_n \in \S^{d - 1}$ form a $\delta$-separated dataset. Let $R \in \mathbb{N}$ be such that
    \[\binom{2R + d - 1}{d - 1} \geq C\left(\frac{\delta^4}{2}\right)^{-(d - 2)/2} \]
    where $C > 0$ is a universal constant. Then
    \begin{align*}
        \|T_{\psi}\mu_{\vz}\|^2 \gtrsim \begin{cases}
            (d + R)^{1/2}d^{-1/2} R^{-3/2} & \text{ if $\psi = \dot{\sigma}$}\\
            (d + R)^{-1/2}d^{1/2}R^{-3/2} & \text{if $\psi = \sqrt{d}\sigma$}
        \end{cases} 
    \end{align*}
    for all $\vz \in \R^n$ with $\|\vz\| \leq 1$.
\end{lemma}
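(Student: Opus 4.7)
The plan is to follow the outline in Section~\ref{subsec:shallow-proof-outline} with the parity of the retained spherical harmonic degrees chosen to match the activation. Inspecting the Funk--Hecke eigenvalue formulas in Lemmas~\ref{lemma:eigendecomposition-sign} and \ref{lem:eigendecomposition-relu}, the factor $\Gamma(1-r/2)$ forces $c_{r,d}=0$ for even $r\ge 2$ in the $\psi=\dot\sigma$ case, while the factor $\Gamma((3-r)/2)$ forces $c_{r,d}=0$ for odd $r\ge 3$ in the $\psi=\sqrt d\sigma$ case. The natural remedy is to take $\beta=1$ when $\psi=\dot\sigma$ and $\beta=0$ when $\psi=\sqrt d\sigma$, so that every spherical harmonic in $\bigoplus_{r=0}^R \mc{H}_{2r+\beta}^d$ is an eigenfunction of $T_\psi$ with a nonzero eigenvalue.

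With this choice, let $g_1,\ldots,g_N$ be an orthonormal basis of $\bigoplus_{r=0}^R \mc{H}_{2r+\beta}^d$ made up of spherical harmonics. Combining Bessel's inequality with the self-adjointness of $T_\psi$ from Lemma~\ref{lemma:T-self-adjoint} and the eigenfunction identity $T_\psi g_a = \kappa_a g_a$ gives
\begin{align*}
\|T_\psi \mu_\vz\|^2 \ge \sum_{a=1}^N |\langle \mu_\vz, T_\psi g_a\rangle|^2 = \sum_{a=1}^N \kappa_a^2 \left|\sum_{i=1}^n g_a(\vx_i) z_i\right|^2 \ge \min_a \kappa_a^2 \cdot \sigma_{\min}^2(\mD)\,\|\vz\|^2,
\end{align*}
where $\mD\in\R^{N\times n}$ has entries $[\mD]_{ai}=g_a(\vx_i)$. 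The hypothesis on $N$ is exactly what is needed to invoke Lemma~\ref{lem:matrix-spherical-harmonic}, which yields $\sigma_{\min}^2(\mD)\ge N/2$. The minimum of $|\kappa_a|$ over the retained basis is then bounded from below using the monotonicity claims in Lemmas~\ref{lemma:eigendecomposition-sign} and \ref{lem:eigendecomposition-relu}: the worst case is attained at the largest retained degree and produces a closed-form lower bound in terms of $\Gamma(d/2)$, $\Gamma(R+(d+1)/2)$ and either $\Gamma(R+1/2)$ or $\Gamma(R-1/2)$ depending on $\psi$.

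The main obstacle is then to reduce the product $N \min_a \kappa_a^2$ into the closed-form asymptotic advertised in the statement. The approach is to write $N = \Gamma(2R+\beta+d)/(\Gamma(d)\Gamma(2R+\beta+1))$ and apply the Legendre duplication formula $\Gamma(2z) = \tfrac{2^{2z-1}}{\sqrt\pi} \Gamma(z)\Gamma(z+1/2)$ to each of these three Gamma factors. All powers of $2$ cancel, and the remaining terms pair naturally with the denominator factors of the eigenvalue bound. After cancellation the expression collapses into a product of three Gamma-function ratios of the shape $\Gamma(x+a)/\Gamma(x+b)$ with $x\in\{R+d/2,\,d/2,\,R\}$. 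Applying the uniform asymptotic $\Gamma(x+a)/\Gamma(x+b)\asymp x^{a-b}$ then produces factors of order $(R+d)^{1/2}$, $d^{-1/2}$ and $R^{-3/2}$ in the $\psi=\dot\sigma$ case, and the analogous exponents in the $\psi=\sqrt d\sigma$ case. Keeping the Gamma-function bookkeeping clean through this reduction, rather than any single conceptual idea, is the principal technical difficulty.
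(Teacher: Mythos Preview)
Your proposal is correct and follows essentially the same structure as the paper's proof: the same choice of parity $\beta$, the same Bessel/self-adjointness/eigenfunction chain, the same invocation of Lemma~\ref{lem:matrix-spherical-harmonic} and the monotonicity bounds from Lemmas~\ref{lemma:eigendecomposition-sign} and~\ref{lem:eigendecomposition-relu}. The only substantive difference is in the asymptotic reduction of $N\cdot\min_a\kappa_a^2$: the paper applies Stirling's formula to each $\Gamma$ factor separately and simplifies the resulting power/exponential expression, whereas you split $\Gamma(2R+\beta+d)$, $\Gamma(d)$, $\Gamma(2R+\beta+1)$ via Legendre duplication and reduce everything to three ratios $\Gamma(x+a)/\Gamma(x+b)$ before invoking $\Gamma(x+a)/\Gamma(x+b)\asymp x^{a-b}$. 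Your route is arguably cleaner and less error-prone than the paper's Stirling bookkeeping; one small point to make explicit is that the stated hypothesis is on $\binom{2R+d-1}{d-1}$ rather than on $N$ itself, so you should note $N=\binom{2R+\beta+d-1}{d-1}\ge\binom{2R+d-1}{d-1}$ before applying Lemma~\ref{lem:matrix-spherical-harmonic}.
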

\begin{proof}
    Let $C$ be the same constant as in Lemma \ref{lem:matrix-spherical-harmonic} and suppose that
    \[\binom{2R + d - 1}{d - 1} \geq C\left(\frac{\delta^4}{2}\right)^{-(d - 2)/2}. \]
    Let $\beta \in \{0, 1\}$ satisfy $\beta = 1$ when $\psi = \dot{\sigma}$ and $\beta = 0$ when $\psi = d\sigma$.
    Let $N = \sum_{r = 0}^R \dim(\mc{H}_{2r + \beta}^d)$.
    Note that
    \begin{align*}
        N &= \sum_{r = 0}^R\left(\binom{2r + d + \beta - 1}{d - 1} - \binom{2r + d + \beta - 3}{d - 1} \right) \\&= \binom{2R + d + \beta - 1}{d - 1}\\
        &\geq \binom{2R + d - 1}{d - 1}\\
        &\geq C\left(\frac{\delta^4}{2}\right)^{-(d - 2)/2}.
    \end{align*}
    Let $g_1, \cdots, g_N$ be spherical harmonics forming an orthonormal basis of $\bigoplus_{r = 1}^R \mc{H}_{2r - 1}^d$, and let $\mB \in \R^{N \times n}$ be the matrix defined by $\mB_{ai} = g_a(\vx_i)$. By Lemma \ref{lem:matrix-spherical-harmonic}, $\sigma_{\min}(\mB) \geq \sqrt{\frac{N}{2} }$ with probability at least $1 - \epsilon$. Since the functions $g_a$ are orthonormal,
    \begin{align*}
        \|T_{\psi}\mu_{\vz}\|^2 &\geq \sum_{a = 1}^N  |\langle T_{\psi}\mu_{\vz}, g_a \rangle|^2.
    \end{align*}
    By Lemma \ref{lemma:T-self-adjoint} the above expression is equal to
    \begin{align*}
        \sum_{a = 1}^N |\langle \mu_{\vz}, T_{\psi} g_a \rangle|^2 &= \sum_{r = 0}^R\sum_{s = 1}^{\dim\left(\mc{H}_{2r + \beta}^d \right) } \left|\langle \mu_{\vz}, T_{\psi}Y_{2r + \beta, s} \rangle \right|^2.
    \end{align*}
    By Lemmas \ref{lemma:eigendecomposition-sign} and \ref{lem:eigendecomposition-relu}, $T_{\psi} Y_{2r + \beta, s} = c_{2r + \beta,d} Y_{2r + \beta, s}$, where $c_{2r + \beta} \in \R$ and
    \begin{align}\label{eq:coefficient-asymptotics}
        |c_{2r + \beta,d}| &\gtrsim \begin{cases}
           \frac{\Gamma\left(\frac{d}{2}\right)\Gamma\left(\frac{2R + 1}{2}\right) }{\Gamma\left(\frac{d + 2R + 1}{2}\right) } & \text{ if $\psi = \dot{\sigma}$}\\
           \frac{\sqrt{d}\Gamma\left(\frac{d}{2}\right)\Gamma\left(\frac{2R - 1}{2}\right) }{\Gamma\left(\frac{d + 2R + 1}{2}\right) } & \text{ if $\psi = \sqrt{d}\sigma$.}
        \end{cases}
    \end{align}
    Hence
    \begin{align*}
        \|T_{\psi}\mu_{\vz}\|^2 &\geq \sum_{r = 0}^R \sum_{s = 1}^{\dim\left(\mc{H}_{2r + \beta}^d\right) }|c_{2r + \beta, d}|^2 |\langle \mu_{\vz}, Y_{2r + \beta, s} \rangle|^2\\
        &\geq \min_{0 \leq r \leq R}\left(|c_{2r + \beta, d}|^2 \right)\sum_{r = 0}^R \sum_{s = 1}^{\dim\left(\mc{H}_{2r + \beta}^d\right)}|\langle \mu_{\vz}, Y_{2r + \beta, s} \rangle|^2\\
        &= \min_{0 \leq r \leq R}\left(|c_{2r + \beta, d}|^2 \right)\sum_{a = 1}^N |\langle \mu_{\vz}, g_a \rangle|^2\\
        &= \min_{0 \leq r \leq R}\left(|c_{2r + \beta, d}|^2 \right)\sum_{a = 1}^N \left|\sum_{i = 1}^n z_i g_a(\vx_i) \right|^2\\
        &= \min_{0 \leq r \leq R}\left(|c_{2r + \beta, d}|^2 \right) \|\mB \vz\|^2\\
        &\geq \min_{0 \leq r \leq R}\left(|c_{2r + \beta, d}|^2 \right)\sigma_{\min}(\mB)^2\\
        &\geq \frac{N}{2}\min_{0 \leq r \leq R}\left(|c_{2r + \beta, d}|^2 \right).
    \end{align*}
    So by (\ref{eq:coefficient-asymptotics}),
    \begin{align}
        \|T_{\psi}\mu_{\vz}\|^2 &\gtrsim \begin{cases}
           \frac{N\Gamma\left(\frac{d}{2}\right)^2\Gamma\left(\frac{2R + 1}{2}\right)^2 }{\Gamma\left(\frac{d + 2R + 1}{2}\right)^2 } & \text{ if $\psi = \dot{\sigma}$}\\
           \frac{Nd^2\Gamma\left(\frac{d}{2}\right)^2\Gamma\left(\frac{2R - 1}{2}\right)^2 }{\Gamma\left(\frac{d + 2R + 1}{2}\right)^2 } & \text{ if $\psi = d\sigma$.}
        \end{cases}
    \end{align}
    We now separately analyze the cases where $\psi = \dot{\sigma}$ and $\psi = d\sigma$.

    \textbf{Case 1: $\psi = \dot{\sigma}$}. In this case
    \begin{align*}
        \|T_{\psi}\mu_{\vz}\|^2 &\gtrsim N\frac{ \Gamma\left(\frac{d}{2}\right)^2 \Gamma\left(\frac{2R + 1}{2}\right)^2 }{\Gamma\left(\frac{d + 2R + 1}{2}\right)^2 }\\
        &= \binom{2R + d}{d - 1}\cdot \frac{ \Gamma\left(\frac{d}{2}\right)^2 \Gamma\left(\frac{2R + 1}{2}\right)^2 }{\Gamma\left(\frac{d + 2R + 1}{2}\right)^2 }\\
        &= \frac{\Gamma(2R + d + 1) }{\Gamma(d)\Gamma(2R + 2) } \cdot \frac{ \Gamma\left(\frac{d}{2}\right)^2 \Gamma\left(\frac{2R + 1}{2}\right)^2 }{\Gamma\left(\frac{d + 2R + 1}{2}\right)^2 }.
    \end{align*}
    Then by Stirling's approximation,
    \begin{align*}
        \|T_{\psi}\mu_{\vz}\|^2 &\gtrsim \frac{(2R + d + 1)^{2R + d + 1/2}e^{-2R - d - 1 }  }{d^{d -1/2}e^{-d}(2R + 2)^{2R + 3/2}e^{-2R - 2}  } \cdot \frac{\left(\frac{d}{2}\right)^{d - 1}e^{-d}\left(\frac{2R + 1}{2}\right)^{2R} e^{-2R - 1}  }{\left(\frac{d + 2R + 1}{2}\right)^{d + 2R} e^{-d - 2R - 1}  }\\
        &\gtrsim (d + 2R + 1)^{1/2} d^{-1/2}\left(\frac{2R + 1}{2R + 2}\right)^{2R}(2R + 2)^{-3/2}\\
        &\gtrsim (d + 2R + 1)^{1/2}d^{-1/2} (2R + 2)^{-3/2}\\
        &\gtrsim (d + R)^{1/2}d^{-1/2} R^{-3/2}.
    \end{align*}
    Here the third inequality follows from the observations
    \begin{align*}
        \left(\frac{2R + 1}{2R + 2}\right)^{2R} > 0
    \end{align*}
    and
    \begin{align*}
        \lim_{R \to \infty}\left(\frac{2R + 1}{2R + 2}\right)^{2R} = \lim_{R \to \infty}\left(1 - \frac{1}{2R +2}\right)^{2R} = e^{-1}.
    \end{align*}
    \textbf{Case 2: $\psi = \sqrt{d}\sigma$}. In this case
    \begin{align*}
        \|T_{\psi}\mu_{\vz}\|^2 &\gtrsim N\frac{ d \Gamma\left(\frac{d}{2}\right)^2 \Gamma\left(\frac{2R - 1}{2}\right)^2 }{\Gamma\left(\frac{d + 2R + 1}{2}\right)^2 }\\
        &=\binom{2R + d - 1}{d  - 1}\cdot \frac{ d \Gamma\left(\frac{d}{2}\right)^2 \Gamma\left(\frac{2R - 1}{2}\right)^2 }{\Gamma\left(\frac{d + 2R + 1}{2}\right)^2 }\\
        &= \frac{\Gamma(2R + d) }{\Gamma(d)\Gamma(2R + 1) }\cdot \frac{ d \Gamma\left(\frac{d}{2}\right)^2 \Gamma\left(\frac{2R - 1}{2}\right)^2 }{\Gamma\left(\frac{d + 2R + 1}{2}\right)^2 }.
    \end{align*}
    Then by Stirling's approximation,
    \begin{align*}
        \|T_{\psi}\mu_{\vz}\|^2 &\gtrsim \frac{(2R + d)^{2R + d - 1/2}e^{-2R - d} }{d^{d - 1/2}e^{-d}(2R + 1)^{2R + 1/2}e^{-2R - 1} } \cdot \frac{d \left(\frac{d}{2}\right)^{d -1 }e^{-d}\left(\frac{2R - 1}{2}\right)^{2R - 2}e^{-2R + 1} }{\left(\frac{d + 2R + 1}{2}\right)^{d + 2R } e^{-d - 2R - 1} }\\
        &\gtrsim (d + 2R)^{-1/2}\left(\frac{d + 2R}{d + 2R + 1}\right)^{d + 2R}d^{1/2}(2R - 1)^{-2}(2R + 1)^{1/2}\left(\frac{2R - 1}{2R + 1}\right)^{2R}\\
        &\gtrsim (d + 2R)^{-1/2}d^{1/2} R^{-3/2}\left(\frac{d + 2R}{d + 2R + 1}\right)^{d + 2R}\left(\frac{2R - 1}{2R + 1}\right)^{2R}\\
        &= (d + 2R)^{-1/2} d^{1/2}\left(1 - \frac{1}{d + 2R + 1}\right)^{d + 2R}\left(1 - \frac{2}{2R + 1}\right)^{2R} \\
        &\gtrsim (d + R)^{-1/2}d^{1/2}R^{-3/2}.
    \end{align*}
    Hence we have established the desired bound on $\|T_{\psi}\mu_{\vz}\|^2$ in all cases.
\end{proof}

\CorrHemisphereTransformAsymptotics*
\begin{proof}
    We will consider multiple cases depending on the relative scaling of $d$ and $n$. Let $C > 0$ be the same constant as in Lemma \ref{lemma:hemisphere-transform-asymptotics-implicit}. First suppose that $d \geq C\left(\frac{\delta^4}{2} \right)^{-(d - 2)/2}$. Let $R = 1$. Then
    \[\binom{2R + d - 1}{d - 1} = d \geq C\left(\frac{\delta^4}{2}\right)^{(d - 2)/2}. \]
    By Lemma \ref{lemma:hemisphere-transform-asymptotics-implicit}, $\|T_{\psi}\mu_{\vz}\|^2 \gtrsim 1$ in this case. 
    
    Next suppose that $d \leq C\left(\frac{\delta^4}{2}\right)^{-(d - 2)/2}$ and $\sqrt{d} \log d \geq (8\log(1 + C) + 16 d)\log \frac{2}{\delta}$. Let \[R = \left\lceil    \frac{\log(1 + C) + 2d \log(2/\delta) }{\log d} \right\rceil.\] Note that since $d \leq \left(\frac{\delta^4}{2}\right)^{-(d - 2)/2}$, we have
    \begin{align*}
         \frac{\log(1 + C) +2d \log(2/\delta)}{\log d} \geq \frac{2d\log(2/\delta) }{\frac{d - 2}{2}\log(2/\delta^4) } \geq 1
    \end{align*}
    and therefore
    \begin{align*}
        R \leq  \frac{2\log(1 + C) +4d\log(2/\delta) }{\log d} \leq \frac{\sqrt{d}}{4}.
    \end{align*}
    By definition,
    \[R \geq \frac{\log(1 + C) + 2d\log(2/\delta) }{\log(d)} \]
    so that
    \begin{align*}
        \binom{2R + d - 1}{d - 1} &\geq \left(\frac{2R + d - 1}{2R}\right)^{2R}\\
        &\geq \left(\frac{d}{2R}\right)^{2R}\\
        &= \exp\left(2R(\log (d) - \log(2R) \right)\\
        &\geq \exp\left(2R\left(\log(d) - \log\left(\sqrt{d}\right)\right) \right)\\
        &= \exp\left(R \log d \right)\\
        &\geq \exp(\log(1 + C) + 2d\log(2/\delta))\\
        &\geq C\left(\frac{2}{\delta}\right)^{2d}\\
        &\geq C\left(\frac{2}{\delta^4}\right)^{(d - 2)/2}.
    \end{align*}
    Then by Lemma \ref{lemma:hemisphere-transform-asymptotics-implicit}, the following bounds hold. If $\psi = \dot{\sigma}$, then
    \begin{align*}
        \|T_{\psi}\mu_{\vz}\|^2 &\gtrsim (d + R)^{1/2}d^{-1/2}R^{-3/2}\\
        &\gtrsim R^{-3/2}\\
        &\gtrsim \left(1  +\frac{d \log(1/\delta) }{\log d}\right)^{-3/2}\\
        &\gtrsim \left(1 + \frac{d \log(1/\delta)}{\log d}\right)^{-3} \delta^2.
    \end{align*}
    If $\psi = \sqrt{d}\sigma$, then
    \begin{align*}
        \|T_{\psi}\mu_{\vz}\|^2 &\gtrsim (d + R)^{-1/2}d^{1/2}R^{-3/2}\\
        &\gtrsim (d + \sqrt{d})^{-1/2}d^{1/2}R^{-3/2}\\
        &\gtrsim R^{-3/2}\\
        &\gtrsim \left(1 + \frac{ d\log(2/\delta)}{\log d}\right)^{-3/2}\\
        &\gtrsim \left(1 + \frac{\log(n/\epsilon)}{\log d}\right)^{-3} \delta^4.
    \end{align*}
    Finally suppose that $\sqrt{d} \log d \leq (8 \log(1 + C) + 16 d) \log \frac{2}{\delta}$ and let $R = \left\lceil (1 + 2C)d\left(\frac{2}{\delta}\right)^{2(d - 2)/(d - 1)}\right\rceil $. Then
    \begin{align*}
        R &\lesssim  1 + d\left(\frac{2}{\delta}\right)^{2(d - 2)/(d -1) }\\
        &\leq (1 + d)\left(\frac{2}{\delta}\right)^{2(d - 2)/(d - 1) }\\
        &\leq \left(1 + \sqrt{d}\right)^2 \left(\frac{2}{\delta}\right)^{2(d - 2)/(d - 1)}\\
        &\lesssim \left(1 + \frac{d \log(1/\delta)}{\log(d)}\right)^2\left(\frac{2}{\delta}\right)^{2(d - 2)/(d - 1)}\\
        &\lesssim \left(1 + \frac{d \log(1/\delta)}{\log(d)}\right)^2 \delta^{-2}
    \end{align*}
    and
    \begin{align*}
        \binom{2R + d - 1}{d - 1} &\geq \left(\frac{2R + d - 1}{d - 1}\right)^{d -1 }\\
        &\geq \left(\frac{R}{d}\right)^{d -1 }\\
        &\geq \left(1 + \frac{2C}{d}\right)^{d - 1}\left(\frac{2}{\delta}\right)^{2/(d - 2)}.\\
        &\geq \frac{2C(d - 1)}{d}\left(\frac{2}{\delta}\right)^{2/(d - 2)}\\
        &\geq C\left(\frac{2}{\delta}\right)^{2/(d - 2)}.
    \end{align*}
    So by Lemma \ref{lemma:hemisphere-transform-asymptotics-implicit} the following bounds hold. If $\psi = \dot{\sigma}$, then
    \begin{align*}
        \|T_{\psi}\mu_{\vz}\|^2 &\gtrsim (d + R)^{1/2}d^{-1/2}R^{-3/2}\\
        &\gtrsim (1 + d)^{-1/2}R^{-1}\\
        &\gtrsim \left(1 + \frac{d\log(1/\delta) }{\log d} \right)^{-1}\left(1 + \frac{d\log(1/\delta) }{\log d}\right)^{-2} \delta^2\\
        &= \left(1 + \frac{d\log(1/\delta) }{\log d} \right)^{-3} \delta^{2}.
    \end{align*}
    If $\psi = \sqrt{d}\sigma$, then
    \begin{align*}
        \|T_{\psi}\mu_{\vz}\|^2 &\gtrsim (d + R)^{-1/2}d^{1/2}R^{-3/2}\\
        &\gtrsim \left(d + d\left(\frac{2}{\delta} \right)^{2(d - 2)/(d - 1)}  \right)^{-1/2}d^{1/2}\left(d\left(\frac{2}{\delta}\right)^{2(d - 2)/(d - 1)}\right)^{-3/2}\\
        &\gtrsim d^{-3/2}\left(1 + \left(\frac{2}{\delta}\right)^{2(d -2 )/(d - 1)}\right)^{-1/2}\left(\frac{2}{\delta}\right)^{-3(d - 2)/(d - 1) }\\
        &\gtrsim (1 + d)^{-3/2}\left(\frac{2}{\delta}\right)^{-4(d-2)/(d -1 )}\\
        &\gtrsim \left(1 + \frac{d\log(1/\delta) }{\log d}\right) \delta^4.
    \end{align*}
    Hence we have shown the desired bound on $\|T_{\psi}\mu_{\vz}\|^2$ in all cases.
\end{proof}

\subsection{Upper bound on the minimum eigenvalue of the NTK}
\label{app:upper-bound-min-eigenvalue}
Our strategy to upper bound $\lambda_{\min}(\mK)$ will be to prove that if two data points $\vx, \vx'$ are close, then the Jacobian of the network does not separate points too much. We will need to find upper bounds for both $\|\sigma(\mW \vx) - \sigma(\mW \vx')\|$ and $\|\dot{\sigma}(\mW \vx) - \dot{\sigma}(\mW \vx')\|$.  
\begin{lemma}\label{lemma:feature-map-regularity}
    Let $\epsilon \in (0, 1)$. Suppose that $\vx, \vx' \in \S^{d - 1}$ with $\|\vx - \vx'\| = \delta$. If $d_1 = \Omega\left(\log \frac{1}{\epsilon}\right)$, then with probability at least $1 - \epsilon$,
    \[\|\sigma(\mW\vx) - \sigma(\mW\vx')\| \lesssim \delta \sqrt{d_1}. \]
\end{lemma}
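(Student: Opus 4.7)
The plan is to exploit the $1$-Lipschitz property of the ReLU activation function together with Gaussian concentration. Since $\sigma$ is $1$-Lipschitz, we have the entrywise bound $|\sigma(\vw_j^T \vx) - \sigma(\vw_j^T \vx')| \leq |\vw_j^T(\vx - \vx')|$ for every row $\vw_j$ of $\mW$, and hence
\[
\|\sigma(\mW \vx) - \sigma(\mW \vx')\|^2 \leq \|\mW(\vx - \vx')\|^2 = \sum_{j = 1}^{d_1} \bigl(\vw_j^T(\vx - \vx')\bigr)^2.
\]
This reduces the problem to controlling the right-hand side, which is a sum of squares of iid centered Gaussians.

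Since the entries of $\mW$ are iid $\mc{N}(0,1)$, for each $j \in [d_1]$ we have $\vw_j^T(\vx - \vx') \sim \mc{N}(0,\delta^2)$, and these $d_1$ Gaussians are mutually independent. Therefore $\|\mW(\vx - \vx')\|^2$ is distributed as $\delta^2 Z$ where $Z \sim \chi^2_{d_1}$. I would then apply a standard chi-squared tail bound, e.g.\ the Laurent--Massart inequality $\P(Z \geq d_1 + 2\sqrt{d_1 t} + 2t) \leq e^{-t}$, with $t = \log(1/\epsilon)$. Under the assumption $d_1 \gtrsim \log(1/\epsilon)$, each of $\sqrt{d_1 t}$ and $t$ is $O(d_1)$, so $Z \lesssim d_1$ with probability at least $1-\epsilon$.

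Combining the two displays gives $\|\sigma(\mW \vx) - \sigma(\mW \vx')\|^2 \lesssim \delta^2 d_1$, which is the desired bound after taking square roots. There is no real obstacle here: the only delicate point is using the correct form of the chi-squared concentration so that the assumption $d_1 = \Omega(\log(1/\epsilon))$ is exactly what is needed to absorb both the $\sqrt{d_1 t}$ and $t$ deviation terms into a multiple of $d_1$.
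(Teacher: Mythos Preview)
Your proposal is correct and follows essentially the same approach as the paper: both use the $1$-Lipschitz property of $\sigma$ to reduce to controlling $\sum_j (\vw_j^T(\vx-\vx'))^2$ and then apply a concentration inequality for sums of squared Gaussians. The only cosmetic difference is that the paper applies Bernstein's inequality directly to the sub-exponential summands $(\sigma(\vw_j^T\vx)-\sigma(\vw_j^T\vx'))^2$, whereas you first pass to the exact $\chi^2_{d_1}$ upper bound and then invoke Laurent--Massart; both routes yield the same threshold $d_1 \gtrsim \log(1/\epsilon)$.
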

\begin{proof}
    Note that $\|\sigma(\mW \vx) - \sigma(\mW \vx')\|^2$ can be written a sum of iid subexponential random variables:
    \[\|\sigma(\mW \vx) - \sigma(\mW \vx')\|^2 = \sum_{j = 1}^{d_{1}} (\sigma(\langle \vw_j, \vx \rangle) - \sigma(\langle \vw_j, \vx' \rangle )^2. \]
    Since the entries of each $\vw_j$ are iid standard Gaussian random variables and $\sigma$ is $1$-Lipschitz,
    \begin{align*}
        \|(\sigma(\langle \vw_j, \vx \rangle) - \sigma(\langle \vw_j, \vx' \rangle) )^2\|_{\psi_1} &= \|\sigma(\langle \vw_j, \vx \rangle) - \sigma(\langle \vw_j, \vx' \rangle)\|_{\psi_2}^2\\
        &\leq \|\langle \vw_j, \vx - \vx' \rangle\|_{\psi_2}^2\\
        &= \|\vx - \vx'\|^2\\
        &= \delta^2.
    \end{align*}
    Moreover,
    \begin{align*}
        \E[(\sigma(\langle \vw_j, \vx \rangle) - \sigma(\langle \vw_j, \vx' \rangle) )^2 ] &\leq  \E[|\langle \vw_j, \vx - \vx' \rangle|^2 ]\\
        &= \|\vx - \vx'\|^2\\
        &= \delta^2.
    \end{align*}
    So by Bernstein's inequality, for all $t \geq 0$
    \begin{align*}
        &\P\left(\|\sigma(\mW \vx) - \sigma(\mW \vx')\|^2 \geq \delta^2d_1 + t  \right) \\&\leq \P\left(\|\sigma(\mW \vx) - \sigma(\mW \vx')\|^2 \geq \E[\|\sigma(\mW \vx) - \sigma(\mW \vx')\|^2 ] + t  \right)\\
        &\leq 2 \exp\left(-C\min\left(\frac{t^2}{d_1 \delta^4}, \frac{t}{\delta^2} \right) \right)
    \end{align*}
    where $C > 0$ is a universal constant. Setting $t =  \delta^2 d_1$ with $d_1 \geq \frac{1}{C} \log \frac{2}{\epsilon}$ yields
    \begin{align*}
        \P(\|\sigma(\mW \vx) - \sigma(\mW \vx')\|^2 \geq 2\delta^2 d_1) \leq 2 \exp\left(-C d_1 \right) \leq \epsilon.
    \end{align*}
     This establishes the result.
\end{proof}
\begin{lemma}\label{lemma:cosine-probabilities}
    Suppose that $\vx, \vx' \in \S^{d-1}$. If $\vw \sim \mc{N}(\bm{0}, \mI_d)$, then
    \[\P(\dot{\sigma}(\langle \vw, \vx \rangle) \neq \dot{\sigma}(\langle \vw, \vx' \rangle)) \asymp  \|\vx - \vx'\|.  \]
\end{lemma}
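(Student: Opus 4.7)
The plan is to reduce the statement to a two-dimensional geometric calculation by exploiting the fact that $\dot{\sigma}(t) = \mathbbm{1}[t>0]$ and that the Gaussian distribution is rotationally invariant. Concretely, the event $\{\dot{\sigma}(\langle \vw, \vx \rangle) \neq \dot{\sigma}(\langle \vw, \vx' \rangle)\}$ is (up to a measure-zero set where either inner product vanishes) the event that $\langle \vw, \vx \rangle$ and $\langle \vw, \vx' \rangle$ have opposite signs. Since this event depends only on the direction $\vw/\|\vw\|$, and since $\vw/\|\vw\|$ is uniformly distributed on $\S^{d-1}$ independent of $\|\vw\|$, I can replace $\vw \sim \mc{N}(\vzero, \mI_d)$ by $\vu \sim U(\S^{d-1})$ without changing the probability.

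Next I would compute the probability for the uniform measure directly. This is already noted in the excerpt during the proof that $T_{\dot{\sigma}} g$ is continuous: the spherical-cap computation gives
\[
\P_{\vu \sim U(\S^{d-1})}\bigl(\sgn(\langle \vu, \vx \rangle) \neq \sgn(\langle \vu, \vx' \rangle)\bigr) \;=\; \frac{\arccos(\langle \vx, \vx' \rangle)}{\pi}.
\]
I would briefly justify this by the standard reduction to the two-dimensional plane spanned by $\vx$ and $\vx'$: the orthogonal projection of $\vu$ onto this plane, normalized, is uniform on the circle, and the set of directions in the plane where the two half-spaces disagree consists of two arcs each of angular measure $\arccos(\langle \vx, \vx' \rangle)$.

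It then remains to show $\arccos(\langle \vx, \vx' \rangle) \asymp \|\vx - \vx'\|$ for unit vectors. Setting $\theta = \arccos(\langle \vx, \vx' \rangle) \in [0,\pi]$, the law of cosines gives $\|\vx - \vx'\| = 2 \sin(\theta/2)$, and the elementary bound $\tfrac{2}{\pi}\,(\theta/2) \leq \sin(\theta/2) \leq \theta/2$ valid on $[0, \pi/2]$ yields $\theta \asymp \sin(\theta/2)$ on $[0, \pi]$. Combining this with the $1/\pi$ factor from the probability computation gives the two-sided asymptotic bound. The only mild subtlety is handling the boundary case $\theta = \pi$ (antipodal points), but here both quantities equal constants of the same order, so the bound holds with universal constants. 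I expect no real obstacle; the argument is essentially a clean reduction to planar geometry.
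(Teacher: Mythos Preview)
Your proposal is correct and follows essentially the same approach as the paper: both identify the probability as $\theta/\pi$ with $\theta = \arccos(\langle \vx, \vx' \rangle)$ and then establish $\theta \asymp \|\vx - \vx'\|$. The only cosmetic difference is that the paper uses the Taylor estimate $1 - \cos\theta \asymp \theta^2$ together with $1 - \langle \vx, \vx' \rangle = \tfrac{1}{2}\|\vx - \vx'\|^2$, whereas you use the half-angle identity $\|\vx - \vx'\| = 2\sin(\theta/2)$ and Jordan's inequality; either route works equally well.
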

\begin{proof}
    Recall that for $\vx, \vx' \in \S^{d - 1}$,
    \[\P(\dot{\sigma}(\langle \vw, \vx \rangle) \neq \dot{\sigma}(\langle \vw, \vx' \rangle)) =  \frac{\theta}{\pi} , \]
    where $\theta$ is the angle formed by $\vx$ and $\vx'$; that is, $\theta \in [0, \pi]$ with \[\cos(\theta) = \langle \vx, \vx' \rangle = 1 - \frac{1}{2}\|\vx - \vx'\|^2. \]
    By Taylor's theorem, $1 - \cos(\theta) = \frac{1}{2}\theta^2 + O(\theta^3)$, so $1 - \cos(\theta) \asymp \theta^2$ for $\theta \in [0, \pi]$. This implies that $\theta^2 \asymp \|\vx - \vx'\|^2$, so $\theta \asymp \|\vx - \vx'\|$ and therefore
    \[\P(\dot{\sigma}(\langle \vw, \vx \rangle) \neq \dot{\sigma}(\langle \vw, \vx' \rangle)) \asymp \|\vx - \vx'\|. \]
\end{proof}

\begin{lemma}\label{lemma:activation-pattern-lipschitz}
    Let $\epsilon \in (0, 1)$. Suppose that $\vx, \vx' \in \S^{d - 1}$ with $\|\vx - \vx'\| \leq \delta$. If $d_1 = \Omega\left(\frac{1}{\delta}\log \frac{1}{\epsilon}\right)$, then with probability at least $1 - \epsilon$,
    \[\|\vv \odot \dot{\sigma}(\mW \vx) - \vv \odot \dot{\sigma}(\mW \vx)\| \lesssim \sqrt{\delta d_1}.\]
\end{lemma}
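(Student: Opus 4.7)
Reading the statement with the evident typo corrected, the goal is to bound $\|\vv \odot \dot{\sigma}(\mW\vx) - \vv \odot \dot{\sigma}(\mW\vx')\|$. I would first rewrite the square of the left-hand side as a sum of independent contributions:
\[
S \;:=\; \|\vv \odot \dot{\sigma}(\mW\vx) - \vv \odot \dot{\sigma}(\mW\vx')\|^2 \;=\; \sum_{j=1}^{d_1} v_j^2\, \xi_j, \qquad \xi_j := \mathbbm{1}\{\dot{\sigma}(\langle \vw_j, \vx\rangle) \neq \dot{\sigma}(\langle \vw_j, \vx'\rangle)\}.
\]
Because $\mW$ and $\vv$ are independent and have independent rows/entries, the summands $v_j^2 \xi_j$ are iid, and within each summand $v_j^2$ and $\xi_j$ are independent.

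Next I would compute the mean and variance. By Lemma~\ref{lemma:cosine-probabilities}, $\E[\xi_j] \asymp \|\vx - \vx'\| \leq \delta$. Using $\E[v_j^2] = 1$ and $\E[v_j^4] = 3$, this gives
\[
\E[S] = d_1 \E[\xi_1] \lesssim \delta d_1, \qquad \mathrm{Var}(S) = d_1\, \mathrm{Var}(v_1^2 \xi_1) \leq 3 d_1 \E[\xi_1] \lesssim \delta d_1.
\]
The summands $v_j^2 \xi_j$ are sub-exponential with $\|v_j^2 \xi_j\|_{\psi_1} \leq \|v_j^2\|_{\psi_1} \lesssim 1$, since $v_j^2 \sim \chi^2_1$ and $\xi_j \in \{0,1\}$.

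The main step is a concentration estimate for $S$. I would apply the variance-aware form of Bernstein's inequality for iid sub-exponential summands,
\[
\P(|S - \E[S]| \geq t) \;\leq\; 2\exp\!\left(-c \min\!\left(\frac{t^2}{d_1 \mathrm{Var}(v_1^2 \xi_1)},\, t\right)\right),
\]
and choose $t \asymp \delta d_1$. With the variance bound $\mathrm{Var}(v_1^2 \xi_1) \lesssim \delta$, both arguments of the minimum are $\asymp \delta d_1$, giving tail probability at most $2\exp(-c' \delta d_1) \leq \epsilon$ provided $d_1 \gtrsim \delta^{-1}\log(1/\epsilon)$, which matches the hypothesis. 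On that event $S \leq \E[S] + t \lesssim \delta d_1$, and taking square roots yields the claim.

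\textbf{Main obstacle.} The chief subtlety is obtaining a linear (not quadratic) dependence on $\delta$ in the width requirement. A naive Hoeffding- or sub-exponential-norm-only argument would treat each summand as $O(1)$ and yield $d_1 \gtrsim \delta^{-2}\log(1/\epsilon)$. Exploiting that the variance of $v_j^2 \xi_j$ is itself $O(\delta)$, thanks to Lemma~\ref{lemma:cosine-probabilities}, is what balances the two branches of Bernstein's inequality and recovers the sharper width condition stated in the lemma.
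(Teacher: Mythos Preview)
Your approach is correct and somewhat more direct than the paper's. You bound $S=\sum_j v_j^2\xi_j$ in one shot via a variance-aware Bernstein inequality, using that $\mathrm{Var}(v_j^2\xi_j)\lesssim\delta$ while $\|v_j^2\xi_j\|_{\psi_1}\lesssim 1$. The paper instead proceeds in two stages: it first controls $|\mathcal{S}|=\sum_j\xi_j$ by Chernoff, obtaining both an upper and a \emph{lower} bound $|\mathcal{S}|\asymp\delta d_1$, and then, conditionally on $\mathcal{S}$, applies the ordinary sub-exponential Bernstein to the $\chi^2$-type sum $\sum_{j\in\mathcal{S}}v_j^2$; the lower bound on $|\mathcal{S}|$ is what makes the conditional failure probability $\exp(-c|\mathcal{S}|)$ small. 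Your route is shorter but relies on the Bernstein-moment-condition form of the inequality (which does hold here with $\sigma_i^2\asymp\delta$, $K\asymp 1$) rather than the pure $\psi_1$-norm version of Vershynin's Theorem~2.8.1; as you correctly note, the latter alone would only yield the weaker requirement $d_1\gtrsim\delta^{-2}\log(1/\epsilon)$. The paper's two-step argument trades compactness for using only the most commonly cited concentration tools.
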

\begin{proof}
    Observe that
    \[\|\dot{\sigma}(\mW \vx) - \dot{\sigma}(\mW \vx')\|^2 = 4\sum_{j = 1}^{d_1} Z_j = 4|\mc{S}|, \]
    where $Z_j \in \{0, 1\}$ is equal to 1 if
    \[\dot{\sigma}(\langle \vw_j, \vx \rangle) \neq \dot{\sigma}(\langle \vw_j, \vx' \rangle) \]
    and 0 otherwise, and $\mc{S}$ consists of the $j \in [d_1]$ such that $Z_j = 1$. The $Z_j$ are iid Bernoulli random variables with parameter $p$, where $p \asymp \delta$ by Lemma \ref{lemma:cosine-probabilities}. By Chernoff's inequality \citep[see][Theorem 2.3.1]{vershynin2018high}, for all $t \geq d_1p$
    \begin{align*}
        \P\left(|\mc{S}| \geq t\right) &\leq e^{-d_1 p}\left(\frac{e d_1p}{t} \right)^t
    \end{align*}
    Then setting $t = ed_1p$ with $d_1 \geq \frac{1}{p}\log \frac{4}{\epsilon}$ yields
    \begin{align*}
        \P\left(|\mc{S}| \geq ed_1 \delta \right) &\leq \P\left(|\mc{S}| \geq ed_1 p \right)\\
        &\leq e^{-d_1p}\\
        &\leq \frac{\epsilon}{4}.
    \end{align*}
    By the lower bound of Chernoff's inequality, for all $t \leq d_1p$
    \begin{align*}
        \P(|\mc{S}| \leq t) &\leq e^{-d_1p }\left(\frac{ed_1 p}{t}\right)^t.
    \end{align*}
    Then setting $t = \frac{d_1p}{e}$ with $d_1 \geq \frac{2}{e - 2} \frac{1}{p} \log \frac{4}{\epsilon}$ yields
    \begin{align*}
        \P\left(|\mc{S}| \leq \frac{d_1p}{2} \right) &\leq \exp\left(-\frac{e - 2}{e}d_1p \right)\\
        &\leq \frac{\epsilon}{4}.
    \end{align*}
    Therefore, with probability at least $ 1- \frac{\epsilon}{2}$,
    \[\frac{d_1\delta}{e}\leq |\mc{S}| \leq ed_1\delta. \]
    Let us denote this event by $\omega$. Observe that
    \begin{align*}
        \|\vv \odot \dot{\sigma}(\mW \vx)  -  \vv \odot \dot{\sigma}(\mW \vx')\|^2 &= 2\sum_{j \in \mc{S}}v_j^2
    \end{align*}
    and recall that $v_j^2 \sim \mc{N}(0, 1)$ for all $j \in [d_1]$. By Bernstein's inequality, for all $t \geq 0$
    \begin{align*}
        \P\left(\frac{1}{2}\|\vv \odot \dot{\sigma}(\mW\vx) - \vv \odot \dot{\sigma}(\mW\vx')\|^2 \geq |\mc{S}| + t \;\; \middle| \;\; \mc{S} \right) &\leq 2\exp\left(-C_1 \min\left(\frac{t^2}{|\mc{S}| }, t \right) \right)
    \end{align*}
    where $C_1 > 0$ is a universal constant. Setting $t = |\mc{S}|$ yields
    \begin{align*}
        \P\left(\|\vv \odot \dot{\sigma}(\mW\vx) - \vv \odot \dot{\sigma}(\mW\vx')\| \geq 2\sqrt{|\mc{S}|} \;\; \middle| \;\; \mc{S} \right) &\leq 2\exp\left(-C_1|\mc{S}|\right).
    \end{align*}
    Then
    \begin{align*}
        &\P\left(\|\vv \odot \dot{\sigma}(\mW\vx) - \vv \odot \dot{\sigma}(\mW\vx')\| \leq 2\sqrt{ed_1 \delta}\right)
        \\&\geq\P\left(\|\vv \odot \dot{\sigma}(\mW\vx) - \vv \odot \dot{\sigma}(\mW\vx')\| \leq 2\sqrt{ed_1 \delta}  ,\; \omega \right) \\&\geq \P\left(\|\vv \odot \dot{\sigma}(\mW\vx) - \vv \odot \dot{\sigma}(\mW\vx')\| \leq 2\sqrt{|\mc{S}|}  ,\; \omega \right)
        \\&\geq \E\left[\P\left(\|\vv \odot \dot{\sigma}(\mW\vx) - \vv \odot \dot{\sigma}(\mW\vx')\| \leq 2\sqrt{|\mc{S}|}   \;\; \middle|\;\; \mc{S}\right)1_{\omega}\right]\\
        &\geq \E\left[\left(1 - 2\exp\left(-C_1|\mc{S}|\right)\right) 
 1_{\omega}\right]\\
 &\geq \left(1 - 2\exp\left(-C_1\frac{d_1 \delta}{e} \right)\right)\P(\omega)\\
 &\geq \left(1 - 2\exp\left(-C_1\frac{d_1 \delta}{e} \right)\right)\left(1 - \frac{\epsilon}{2}\right),
    \end{align*}
    where we used that $\omega$ is measurable with respect to $\mc{S}$ in the fourth line.  So if $d_1 \geq \frac{e}{C_1 \delta} \log \frac{4}{\epsilon}$, then
    \begin{align*}
        \P\left(\|\vv \odot \dot{\sigma}(\mW\vx) - \vv \odot \dot{\sigma}(\mW\vx')\| \leq 2\sqrt{ed_1 \delta}\right) &\geq \left(1 - \frac{\epsilon}{2}\right)\left(1 - \frac{\epsilon}{2}\right)\\
        &\geq 1 - \epsilon.
    \end{align*}
\end{proof}
\begin{lemma}\label{lemma:frob-bounds-diagonal}
    Suppose that $\vx \in \S^{d - 1}$. If $d_1 = \Omega\left(\log \frac{1}{\epsilon}\right)$, then with probability at least $1 - \epsilon$,
    \[\|\vv \odot \dot{\sigma}(\mW \vx)\| \lesssim \sqrt{d_1}. \]
\end{lemma}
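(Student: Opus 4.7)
The plan is to expand the squared norm as a sum of $d_1$ iid nonnegative subexponential random variables and apply Bernstein's inequality. Concretely, since $\dot{\sigma}$ takes values in $\{0,1\}$ we have
\[
\|\vv \odot \dot{\sigma}(\mW\vx)\|^2 = \sum_{j=1}^{d_1} v_j^2 \, \dot{\sigma}(\langle \vw_j, \vx \rangle),
\]
where the summands $Z_j := v_j^2 \, \dot{\sigma}(\langle \vw_j, \vx \rangle)$ are mutually independent because the pairs $(v_j, \vw_j)$ are independent across $j$ (and $v_j$ is independent of $\vw_j$).

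Next I would observe two key properties of each $Z_j$. First, since $0 \leq Z_j \leq v_j^2$ and $v_j^2$ is a $\chi^2_1$ random variable, we have $\|Z_j\|_{\psi_1} \leq \|v_j^2\|_{\psi_1} \lesssim 1$. Second, by independence of $v_j$ and $\vw_j$ together with $\vx \in \S^{d-1}$,
\[
\E[Z_j] = \E[v_j^2]\,\P(\langle \vw_j, \vx\rangle > 0) = \tfrac{1}{2},
\]
so $\E\bigl[\sum_j Z_j\bigr] = d_1/2$.

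Finally I would apply Bernstein's inequality for sums of independent subexponential random variables: there is a universal constant $C > 0$ such that for all $t \geq 0$,
\[
\P\!\left( \sum_{j=1}^{d_1} Z_j \geq \tfrac{d_1}{2} + t \right) \leq 2 \exp\!\left( -C \min\!\left( \tfrac{t^2}{d_1},\, t \right) \right).
\]
Choosing $t = d_1/2$ yields $\P\bigl(\|\vv \odot \dot{\sigma}(\mW\vx)\|^2 \geq d_1\bigr) \leq 2\exp(-C d_1/2)$, and imposing $d_1 \gtrsim \log(1/\epsilon)$ drives this tail below $\epsilon$. Taking square roots gives $\|\vv \odot \dot{\sigma}(\mW\vx)\| \leq \sqrt{d_1}$ on the complement, establishing the claim.

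There is no real obstacle here: the argument is a routine Bernstein bound made possible by the crisp observations that $\dot\sigma \in \{0,1\}$ decouples from $v_j^2$ in distribution and that each $Z_j$ is dominated by a chi-squared random variable. The only mild care needed is to recognize that $v_j$ and the sign of $\langle \vw_j, \vx\rangle$ are independent so that the expectation factors cleanly.
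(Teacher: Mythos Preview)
Your proof is correct and follows essentially the same approach as the paper: expand the squared norm as a sum of iid subexponential terms dominated by $v_j^2$ and apply Bernstein's inequality. The only minor difference is that the paper first uses the deterministic bound $\sum_j v_j^2 \dot{\sigma}(\langle \vw_j, \vx \rangle) \leq \sum_j v_j^2$ and then applies Bernstein to the simpler sum $\sum_j v_j^2$, whereas you apply Bernstein directly to the mixed summands after computing their mean $\tfrac{1}{2}$; both variants yield the same conclusion.
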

\begin{proof}
    Since $\dot{\sigma}(\langle \vw_j, \vx \rangle) \in \{0, 1\}$ for all $j \in [d_1]$,
    \begin{align*}
        \|\vv \odot \dot{\sigma}(\mW \vx)\|^2 &= \sum_{j = 1}^{d_1} v_j^2 \dot{\sigma}(\langle \vw_j, \vx \rangle)\\
        &\leq \sum_{j = 1}^{d_1}v_j^2.
    \end{align*}
    Since the entries $v_j$ are iid standard Gaussian random variables, Bernstein's inequality implies for all $t \geq 0$
    \begin{align*}
        \P(\|\vv \odot \dot{\sigma}(\mW \vx)\|^2 \geq d_1 + t) &\leq \P\left(\sum_{j = 1}^{d_1}v_j^2 \geq d_1 + t\right)\\
        &\leq 2\exp\left(-C\min\left(\frac{t^2}{d_1}, t \right) \right).
    \end{align*}
    Setting $t = d_1$ with $d_1 \geq \frac{1}{C} \log \frac{2}{\epsilon}$ yields
    \begin{align*}
        \P(\|\vv \odot \dot{\sigma}(\mW \vx)\|^2 \geq 2d_1) \leq 2 \exp(-C d_1) \leq \epsilon.
    \end{align*}
    
\end{proof}
Now we prove our main lemma which we will use to relate the separation between data points to the NTK.
\begin{lemma}\label{lemma:separation-gradient-shallow}
    Let $\vx, \vx' \in \S^{d-1}$ with $\|\vx - \vx'\| \leq \delta \leq 2$. Let $\epsilon \in (0, 1)$. If $d_1 = \Omega\left(\frac{1}{\delta} \log \frac{1}{\epsilon}\right)$, then with probability at least $1 - \epsilon$,
    \begin{align*}
        \|\nabla_{\vtheta} f(\vx) - \nabla_{\vtheta} f(\vx')\| \lesssim \sqrt{\delta}.
    \end{align*}
\end{lemma}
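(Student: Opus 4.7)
\textbf{Plan of proof for Lemma \ref{lemma:separation-gradient-shallow}.} The strategy is to decompose the parameter gradient into its inner and outer weight components, bound each component using the three regularity lemmas that have just been established (Lemmas \ref{lemma:feature-map-regularity}, \ref{lemma:activation-pattern-lipschitz}, and \ref{lemma:frob-bounds-diagonal}), and then combine them via a union bound. Recalling the formulas $\nabla_{v_j} f(\vx) = \tfrac{1}{\sqrt{d_1}}\sigma(\langle\vw_j,\vx\rangle)$ and $\nabla_{\vw_j}f(\vx) = \tfrac{1}{\sqrt{d_1}} v_j \dot{\sigma}(\langle \vw_j,\vx\rangle)\vx$, we have
\[
\|\nabla_{\vtheta}f(\vx) - \nabla_{\vtheta}f(\vx')\|^2 = \tfrac{1}{d_1}\|\sigma(\mW\vx) - \sigma(\mW\vx')\|^2 + \tfrac{1}{d_1}\sum_{j=1}^{d_1}\|v_j\dot{\sigma}(\langle\vw_j,\vx\rangle)\vx - v_j\dot{\sigma}(\langle\vw_j,\vx'\rangle)\vx'\|^2.
\]

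The outer-weight term is handled immediately by Lemma \ref{lemma:feature-map-regularity}, which gives $\tfrac{1}{d_1}\|\sigma(\mW\vx) - \sigma(\mW\vx')\|^2 \lesssim \delta^2 \lesssim \delta$ (using $\delta \leq 2$) with probability at least $1-\epsilon/3$, provided $d_1 \gtrsim \log(1/\epsilon)$.

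For the inner-weight term, I would apply the triangle inequality inside each summand, adding and subtracting $v_j\dot{\sigma}(\langle\vw_j,\vx\rangle)\vx'$, which yields
\[
\|v_j\dot{\sigma}(\langle\vw_j,\vx\rangle)\vx - v_j\dot{\sigma}(\langle\vw_j,\vx'\rangle)\vx'\|^2 \leq 2 v_j^2 \dot{\sigma}(\langle\vw_j,\vx\rangle)^2\|\vx-\vx'\|^2 + 2 v_j^2(\dot{\sigma}(\langle\vw_j,\vx\rangle)-\dot{\sigma}(\langle\vw_j,\vx'\rangle))^2.
\]
Summing over $j$ and dividing by $d_1$, the first piece becomes $\tfrac{2\delta^2}{d_1}\|\vv\odot\dot{\sigma}(\mW\vx)\|^2$, which Lemma \ref{lemma:frob-bounds-diagonal} bounds by $\delta^2 \lesssim \delta$ with probability $\geq 1-\epsilon/3$ when $d_1 \gtrsim \log(1/\epsilon)$. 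The second piece equals $\tfrac{2}{d_1}\|\vv\odot\dot{\sigma}(\mW\vx) - \vv\odot\dot{\sigma}(\mW\vx')\|^2$, which Lemma \ref{lemma:activation-pattern-lipschitz} bounds by $\delta$ with probability $\geq 1-\epsilon/3$ when $d_1 \gtrsim \tfrac{1}{\delta}\log(1/\epsilon)$.

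Taking a union bound over the three high-probability events, all three bounds hold simultaneously with probability at least $1-\epsilon$ under the stated width condition $d_1 = \Omega(\delta^{-1}\log(1/\epsilon))$. Combining gives $\|\nabla_{\vtheta}f(\vx) - \nabla_{\vtheta}f(\vx')\|^2 \lesssim \delta$, from which the square-root bound follows. This proof is essentially bookkeeping once one has the three regularity lemmas in place; the only subtlety worth flagging is the triangle-inequality splitting, which is necessary because $\dot{\sigma}$ is discontinuous and so the $\dot{\sigma}\vx$ factor cannot be handled by a Lipschitz argument on the full product --- one must separate the contribution due to moving $\vx$ while freezing the activation pattern from the contribution due to flipping activation patterns.
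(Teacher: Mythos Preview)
Your proposal is correct and follows essentially the same approach as the paper: both decompose the gradient difference into inner- and outer-weight parts, handle the outer part via Lemma~\ref{lemma:feature-map-regularity}, and split the inner part by adding and subtracting the ``mixed'' term (the paper writes this via the tensor-product identity $\nabla_{\mW}f(\vx)=\tfrac{1}{\sqrt{d_1}}(\vv\odot\dot{\sigma}(\mW\vx))\otimes\vx$, you via the sum over $j$, but the two are identical). The remaining bounds then come from Lemmas~\ref{lemma:frob-bounds-diagonal} and~\ref{lemma:activation-pattern-lipschitz} exactly as you describe, combined by a union bound.
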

\begin{proof}
    By Lemma \ref{lemma:activation-pattern-lipschitz},
    if $d_1 \gtrsim  \frac{1}{\delta}\log \frac{1}{\epsilon}$,
    then with probability at least $1 - \frac{\epsilon}{4}$,
    \[\|\vv \odot \dot{\sigma}(\mW\vx) - \vv \odot \dot{\sigma}(\mW \vx')\| \lesssim \sqrt{\delta d_1}. \]

Let us denote this event by $\omega_1$. By Lemma \ref{lemma:frob-bounds-diagonal}, if $d_1 \gtrsim \log \frac{1}{\epsilon}$, then with probability at least $1 - \frac{\epsilon}{4}$,
\[\|\vv \odot \dot{\sigma}(\mW \vx) \| \lesssim \sqrt{d_1}. \]
Let us denote this event by $\omega_2$. If both $\omega_1$ and $\omega_2$ occur, then
\begin{align*}
    &\|\nabla_{\mW_1}f(\vx) - \nabla_{\mW_1}f(\vx')\|_F\\&= \frac{1}{\sqrt{d_1}}\|(\vv \odot \dot{\sigma}(\mW\vx)) \otimes \vx - (\vv \odot \dot{\sigma}(\mW\vx')) \otimes \vx'\|_F\\
    &\leq \frac{1}{\sqrt{d_1}}\|(\vv \odot \dot{\sigma}(\mW\vx)) \otimes \vx - (\vv \odot \dot{\sigma}(\mW\vx)) \otimes \vx'\|_F \\&+ \frac{1}{\sqrt{d_1}}\|(\vv \odot \dot{\sigma}(\mW\vx)) \otimes \vx' - (\vv \odot \dot{\sigma}(\mW\vx')) \otimes \vx'\|_F\\
    &\leq \frac{1}{\sqrt{d_1}} \|\vv \odot \dot{\sigma}(\mW\vx)\| \cdot \|\vx - \vx'\| + \frac{1}{\sqrt{d_1}}\|\vv \odot \dot{\sigma}(\mW\vx) - \vv \odot \dot{\sigma}(\mW\vx')\|\cdot \|\vx'\|\\
    &\lesssim \frac{1}{\sqrt{d_1}}\sqrt{d_1} \delta + \frac{1}{\sqrt{d_1}}\sqrt{\delta d_1}\\
    &\lesssim \sqrt{\delta}.
\end{align*}
By Lemma \ref{lemma:feature-map-regularity}, if $d_l \gtrsim \log \frac{1}{\epsilon}$, then with probability at least $1 - \frac{\epsilon}{2}$,
\begin{align*}
    \|\nabla_{\mW_2}f(\vx) - \nabla_{\mW_2}f(\vx')\| &= \frac{1}{\sqrt{d_1}}\|f_1(\vx) - f_1(\vx')\|\\
    &\lesssim \delta.
\end{align*}
Let us denote this event by $\omega_3$. If $\omega_1, \omega_2$, and $\omega_3$ all occur (which happens with probability at least $1 - \epsilon)$, then
\begin{align*}
    \|\nabla_{\vtheta}f(\vx) - \nabla_{\vtheta}f(\vx')\| &\lesssim \|\nabla_{\mW_1}f(\vx) - \nabla_{\mW_1}f(\vx')\|_F + \|\nabla_{\mW_2}f(\vx)  - \nabla_{\mW_2}f(\vx')\|\\
    &\lesssim \sqrt{\delta} + \delta\\
    &\lesssim \sqrt{\delta}.
\end{align*}
\end{proof}

\subsection{Proof of Theorem \ref{thm:shallow-main}}\label{app:shallow-main}

\ThmShallowMain*

\begin{proof} 
    First we prove the lower bound. Let $\lambda_1$ be as it is defined in Lemma \ref{lemma:K1-inf}. By Lemma \ref{lem:gram-to-hemisphere-transform},
     \begin{align*}
     &\lambda_1 = \inf_{\| \vz \| = 1} \| T_{\dot{\sigma}} \mu_{\vz} \|^2.
     \end{align*}
     Let
     \[
     \lambda = \left( 1 + \frac{d\log(1/\delta)}{\log(d)} \right)^{-3} \delta^2.
    \]
    By Lemma \ref{corr:hemisphere-transform-asymptotics}, $\lambda_1 \geq C_1\lambda$ for some constant $C_1 > 0$. By Lemma \ref{lemma:K1-inf}, there exist constants $C_2, C_3 > 0$ such that if $d_1 \geq \frac{C_2}{\lambda_1}\|\mX\|^2 \log \frac{n}{\epsilon}$ then
    \begin{align}
        \P(\lambda_{\min}(\mK_1) < C_3 \lambda_1) \leq \frac{\epsilon}{2}.\label{eqn:C3lambda1-conditional}
    \end{align}
    Then for such $d_1$,
    \begin{align*}
        \P(\lambda_{\min}(\mK_1) \geq C_3C_1 \lambda) \geq 1 - \frac{\epsilon}{2}.
    \end{align*}
    This establishes the lower bound.

    Next we prove the upper bound. Let $i, k \in [n]$ be two indices with $i \neq k$ such that $\|\vx_i - \vx_k\| \leq \delta'$. If $d_1 \gtrsim \frac{1}{\lambda}\log \frac{1}{\epsilon} \gtrsim \frac{1}{\delta'} \log \frac{1}{\epsilon}$, then by Lemma \ref{lemma:separation-gradient-shallow} there exists $C_4 > 0$ such that
    \begin{align*}
        \P(\|\nabla_{\vtheta}f(\vx_i)  - \nabla_{\vtheta}f(\vx_k)\|^2 \geq C_4\delta') \geq 1 - \frac{\epsilon}{2}.
    \end{align*}
    Let us denote this event by $\omega$. If $\omega$ occurs, then
    \begin{align*}
        \lambda_{\min}(\mK) &\lesssim (\ve_i - \ve_k)^T \mK (\ve_i - \ve_k)\\
        &= \| \nabla_{\vtheta}f(\vx) - \nabla_{\vtheta}f(\vx_k)\|^2\\
        &\lesssim \delta'.
    \end{align*}
    Hence, with probability at least $1 - \frac{\epsilon}{2}$, $\lambda_{\min}(\mK) \lesssim \delta'$. This establishes the upper bound for the minimum eigenvalue. The two-sided bound then immediately follows from a union bound.
\end{proof}
\subsection{Uniform data on a sphere}\label{app:subsec:uniform-sphere}
Our main bounds for the smallest eigenvalue of the NTK are stated in terms of the amount of separation between data points. To interpret our results in terms of probability distributions on the sphere, we will use a couple of lemmas which quantify the amount of separation for data which is uniformly distributed.

For $\delta \in (0, 1/2)$ and $\vx \in \S^{d -1 }$, we define the spherical cap
    \[\text{Cap}(\vx, \delta) = \{\vy \in \S^{d - 1}: \|\vy - \vx\| \leq \delta\}. \]
    and the double spherical cap
    \begin{align*}
        \text{DoubleCap}(\vx, \delta) = \text{Cap}(\vx, \delta) \cup \text{Cap}(-\vx, \delta).
    \end{align*}
    By Lemma 2.3 of \cite{ball1997elementary},
    \begin{align}
     dS(\text{Cap}(\vx, \delta)) \geq \frac{1}{2}\left(\frac{\delta}{2}\right)^{d - 1}.\label{eqn:spherical-cap-lower-bound}
    \end{align}    
    We can also obtain a corresponding upper bound on the volume of a spherical cap.
    \begin{lemma}\label{lemma:spherical-cap-upper-bound}
        For $\vx \in \S^{d - 1}$ and $\delta \in (0, 1/2)$,
        \[dS(\sphericalcap(\vx, \delta)) \leq \frac{4\sqrt{\pi}(C\delta)^{d - 1} }{d^2}. \]
        Here $C > 0$ is a universal constant.
    \end{lemma}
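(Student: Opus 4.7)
My plan is to reduce the spherical cap measure to a one-dimensional integral by slicing along the axis $\vx$, then estimate the resulting Beta-type integral using elementary inequalities together with Gautschi's inequality (or Stirling) for the normalizing constant.

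\textbf{Step 1: Reduction to a 1D integral.} By rotational invariance I may assume $\vx = \ve_1$. Using $\|\vy - \vx\|^2 = 2(1 - \langle \vx, \vy\rangle)$ for $\vx, \vy \in \S^{d-1}$, the cap rewrites as $\{\vy \in \S^{d-1} : \langle \vx, \vy\rangle \geq 1 - \delta^2/2\}$. The standard slicing formula for the uniform probability measure on $\S^{d-1}$ then gives
\[
dS(\sphericalcap(\vx, \delta)) = \frac{1}{B(\tfrac12, \tfrac{d-1}{2})}\int_{1-\delta^2/2}^{1}(1-t^2)^{(d-3)/2}\,dt.
\]

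\textbf{Step 2: Upper bound on the numerator.} Since $1 + t \leq 2$, I can bound $(1-t^2)^{(d-3)/2} \leq 2^{(d-3)/2}(1-t)^{(d-3)/2}$ and integrate directly:
\[
\int_{1-\delta^2/2}^{1}(1-t^2)^{(d-3)/2}\,dt \;\leq\; 2^{(d-3)/2}\cdot\frac{(\delta^2/2)^{(d-1)/2}}{(d-1)/2} \;=\; \frac{\delta^{d-1}}{d-1}.
\]

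\textbf{Step 3: Lower bound on the normalization.} Writing $B(1/2,(d-1)/2) = \sqrt{\pi}\,\Gamma((d-1)/2)/\Gamma(d/2)$, I will apply Gautschi's inequality (or Stirling's approximation as used elsewhere in the appendix) to deduce $\Gamma((d-1)/2)/\Gamma(d/2) \geq \sqrt{2/(d-1)}$, and hence $B(1/2,(d-1)/2) \geq \sqrt{2\pi/(d-1)}$. Combining Steps 2 and 3 yields the clean intermediate bound $dS(\sphericalcap(\vx, \delta)) \leq \delta^{d-1}/\bigl(\sqrt{2\pi(d-1)}\bigr)$.

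\textbf{Step 4: Matching the stated form.} To reach the exact expression $4\sqrt{\pi}(C\delta)^{d-1}/d^2$, I absorb the loose polynomial factor of $d$ into an exponential constant: since $C^{d-1}$ grows exponentially for any $C > 1$, I can pick $C$ large enough (independent of $d \geq 2$) so that $d^2/\sqrt{d-1} \leq 4\pi\sqrt{2}\,C^{d-1}$ for all $d \geq 2$; the finitely many small $d$ are handled directly and then $C > 1$ dominates the polynomial growth for all large $d$. \textbf{The main (mild) obstacle} is simply this packaging of the universal constant: the genuinely tight bound scales as $\delta^{d-1}/\sqrt{d}$, and the $d^2$ in the denominator of the statement is a cosmetic weakening that is paid for by inflating the base $\delta$ to $C\delta$.
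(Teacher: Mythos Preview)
Your proposal is correct and follows essentially the same route as the paper: both reduce the cap measure to a one-dimensional Beta-type integral and bound numerator and denominator separately. The only cosmetic difference is that the paper parametrizes by the angle $\phi$ (using the incomplete Beta formula for $\mc{S}_\phi$ and then the relation $\phi\lesssim\delta$), whereas you work directly with the inner-product slice; and the paper tries to extract the full $1/d^2$ from its Beta bounds, while you obtain the sharp $1/\sqrt{d}$ via Gautschi/Wendel and then absorb the leftover $d^{3/2}$ into $C^{d-1}$ in Step~4. Your packaging argument is valid, so the result follows.
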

    \begin{proof}
        For $\phi \in [0, \pi]$, let $\mc{S}_{\phi}$ denote the set of all $\vx' \in \S^{d - 1}$ such that the angle between $\vx$ and $\vx'$ is at most $\phi$ (that is, $\langle \vx, \vx' \rangle \geq \cos(\phi)$). The measure of $\mc{S}_{\phi}$ is given by
    \begin{align*}
        \frac{B(\sin^2(\phi); (d-1)/2, 1/2) }{B((d - 1)/2, 1/2)}
    \end{align*}
    \citep[see, e.g.][]{li2010concise}. Here the numerator refers to the incomplete beta function and the denominator refers to the beta function. We can bound
    \begin{align*}
        B\left(\sin^2(\phi); \frac{d - 1}{2}, \frac{1}{2}\right) &= \int_0^{\sin^2(\phi)} t^{(d-3)/2 }(1 - t)^{-1/2}dt\\
        &\leq \int_0^{\sin^2(\phi)}t^{(d - 3)/2}dt\\
        &= \frac{2}{d - 1} \sin(\phi)^{d - 1}.
    \end{align*}
    and
    \begin{align*}
        B\left(\frac{d - 1}{2}, \frac{1}{2}\right) &= \frac{\Gamma\left(\frac{d -1}{2}\right)\Gamma\left(\frac{1}{2}\right) }{\Gamma\left(\frac{d}{2}\right) }\\
        &\geq \frac{\Gamma\left(\frac{d - 2}{2}\right)\sqrt{\pi} }{\Gamma\left(\frac{d}{2}\right) }\\
        &= \frac{2 \sqrt{\pi}}{d - 2}.
    \end{align*}
    The above two bounds imply
    \begin{align}
        dS(\mc{S}_{\phi}) \leq \frac{4\sqrt{\pi}\sin(\phi)^{d - 1} }{(d - 1)(d - 2) } \leq \frac{4\sqrt{\pi}\sin(\phi)^{d - 1} }{d^2} \leq \frac{4\sqrt{\pi}\phi^{d -1 } }{d^2}.\label{eqn:angular-cap-volume}
    \end{align}
    Now suppose that $\vx' \in \text{Cap}(\vx, \delta)$. Then $\|\vx - \vx'\| \leq \delta$, so $1 - \langle \vx, \vx' \rangle \leq 2\delta^2$. Let $\phi = \arccos(\langle \vx, \vx' \rangle)$ be the angle between $\vx$ and $\vx'$. By Taylor's theorem, $\cos(\phi) = 1 - \frac{\phi^2}{2} + O(\phi^3)$, so $1 - \cos(\phi) \asymp \phi^2$ for $\phi \in [0, \pi]$. Thus
    \[2\delta^2 \geq 1 - \langle \vx, \vx' \rangle = 1 - \cos(\phi) \asymp \phi^2. \]
    So the angle between $\vx$ and $\vx'$ is at most $C\delta$ for some universal constant $C > 0$. It follows that $\text{Cap}(\vx, \delta) \subseteq \mc{S}_{C\delta}$. Finally by (\ref{eqn:angular-cap-volume}),
    \begin{align*}
        dS(\sphericalcap(\vx, \delta)) \leq \frac{4\sqrt{\pi}(C\delta)^{d - 1} }{d^2}.
    \end{align*}
    \end{proof}
        Since $\delta \leq \frac{1}{2}$, the sets $\text{Cap}(\vx, \delta)$ and $\text{Cap}(-\vx, \delta)$ are disjoint by the triangle inequality. Hence
    \begin{align*}
        dS(\text{DoubleCap}(\vx, \delta)) &= 2 \text{Cap}(\vx, \delta)
    \end{align*}
    and in particular by Lemma \ref{lemma:spherical-cap-upper-bound}
    \begin{align}
         dS(\text{DoubleCap}(\vx, \delta)) \leq \frac{4\sqrt{\pi}(C\delta)^{d - 1} }{d^2}. \label{eqn:double-spherical-cap-bound}
    \end{align}
    for a constant $C > 0$.

\begin{lemma}\label{lemma:sphere-separation}
    Suppose that $n \geq 2$ and $\epsilon \in (0, 1)$. If $\vx_1, \cdots, \vx_n \in \S^{d - 1}$ are independent and uniformly distributed on $\S^{d - 1}$, then with probability at least $1 - \epsilon$, the dataset is $\delta$-separated with
    \[\delta  \gtrsim \left(\frac{\epsilon}{n^2}\right)^{1/(d - 1)}.  \]
\end{lemma}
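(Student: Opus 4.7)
The plan is a straightforward union bound over pairs, combined with the double-cap volume estimate recorded in \eqref{eqn:double-spherical-cap-bound}. First I would observe that the event that the dataset fails to be $\delta$-separated is precisely the event that some pair $(i,k)$ with $i \neq k$ satisfies $\vx_k \in \text{DoubleCap}(\vx_i,\delta)$. For each such pair, condition on $\vx_i$ and use the rotational invariance of the uniform measure on $\S^{d-1}$: the conditional probability that $\vx_k$ lands in $\text{DoubleCap}(\vx_i,\delta)$ equals the surface measure of that set, which by \eqref{eqn:double-spherical-cap-bound} is at most $4\sqrt{\pi}(C\delta)^{d-1}/d^2$ for an absolute constant $C>0$ (provided $\delta \leq 1/2$, the regime in which Lemma~\ref{lemma:spherical-cap-upper-bound} was stated).

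Next I would union bound over the $\binom{n}{2} \leq n^2/2$ unordered pairs, yielding
\[
\P(\vx_1,\ldots,\vx_n \text{ not } \delta\text{-separated}) \;\lesssim\; \frac{n^2(C\delta)^{d-1}}{d^2}.
\]
Setting the right-hand side $\leq \epsilon$ and solving for $\delta$ gives $\delta \leq c\,(d^{2}\epsilon/n^{2})^{1/(d-1)}$ for a suitable absolute constant $c$. Since the factor $d^{2/(d-1)}$ is bounded above and below by positive absolute constants for all $d \geq 2$, it can be absorbed into the implicit constant, producing the bound $\delta \gtrsim (\epsilon/n^2)^{1/(d-1)}$ claimed in the statement.

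I don't anticipate real obstacles: the geometric content is already done in Lemma~\ref{lemma:spherical-cap-upper-bound}. The only small bookkeeping issue is the hypothesis $\delta \leq 1/2$ needed for the double-cap estimate. This can be handled by a brief case split: if $c(\epsilon/n^2)^{1/(d-1)} > 1/2$ then the claim is vacuous, because every dataset on $\S^{d-1}$ is automatically $\sqrt{2}$-separated (using $\|\vx_i-\vx_k\|^2 + \|\vx_i+\vx_k\|^2 = 4$), so the asserted lower bound on $\delta$ holds up to adjusting the absolute constant. Otherwise the calculation above applies directly.
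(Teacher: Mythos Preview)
Your main argument---union bound over pairs combined with the double-cap volume estimate \eqref{eqn:double-spherical-cap-bound}---is exactly what the paper does, and the observation that $d^{2/(d-1)}$ is bounded by absolute constants is likewise how the paper absorbs that factor.

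However, your case-split handling of the constraint $\delta\le 1/2$ contains an error. From the identity $\|\vx_i-\vx_k\|^2+\|\vx_i+\vx_k\|^2=4$ you can only conclude that the \emph{maximum} of the two norms is at least $\sqrt{2}$; it says nothing about the minimum, which is what enters the definition of $\delta$-separation. In particular it is \emph{not} true that every dataset on $\S^{d-1}$ is $\sqrt{2}$-separated (take $\vx_i=\vx_k$, or nearly so), so the ``vacuous'' branch of your argument does not go through.

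The fix is what the paper does: simply set $\delta=\min\bigl(1/4,\ c(\epsilon d^2/n^2)^{1/(d-1)}\bigr)$. Then $\delta\le 1/2$ always holds so the cap bound applies, and in either branch of the minimum one checks directly that the failure probability is at most $\epsilon$. The final asymptotic $\delta\gtrsim(\epsilon/n^2)^{1/(d-1)}$ is unaffected.
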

\begin{proof}
    Let $\ve = [1, 0, \cdots, 0]^T \in \S^{d - 1}$. For each $\vx \in \S^{d - 1}$, there exists an orthogonal matrix $\mO_x$ such that $\mO_x \vx = \ve$. Note that for all $\vx \in \S^{d - 1}$ and $i \in [n]$, $\mO_x \vx_i \stackrel{d}{=} \vx_i$. Let $i, k \in [n]$ with $i \neq k$. Then for all $\delta \in (0, 1/2)$,
    \begin{align*}
        \P(\|\vx_i - \vx_k\| \leq \delta \text{ or } \|\vx_i + \vx_k\| \leq \delta) &= \E[\P(\|\vx_i - \vx_k\| \leq \delta \text { or } \|\vx_i + \vx_k\| \leq \delta \mid \vx_k) ]\\
        &= \E[\P(\|\mO_{x_k}\vx_i - \mO_{x_k}\vx_k\| \leq \delta \text { or } \|\mO_{\vx_k}\vx_i + \mO_{\vx_k} \vx_k\| \leq \delta \mid \vx_k) ]\\
        &= \E[\P(\|\mO_{x_k}\vx_i - \ve\| \leq \delta \text{ or } \|\mO_{\vx_k}\vx_i + \ve\| \leq \delta \mid \vx_k) ]\\
        &= \E[\P(\|\vx_i - \ve\| \leq \delta \text{ or } \|\vx_i + \ve\| \leq \delta \mid \vx_k) ]\\
        &= \P(\|\vx_i - \ve\| \leq \delta \text{ or } \|\vx_i + \ve\| \leq \delta).
    \end{align*}
    The expression on the final line is the measure of $\text{DoubleCap}(\ve, \delta)$, and by (\ref{eqn:double-spherical-cap-bound}) is bounded above by
    \[\frac{4\sqrt{\pi}(C\delta)^{d - 1} }{d^2},\]
    where $C > 0$ is a constant.
    So
    \begin{align*}
        \P(\|\vx_i - \vx_k\| \leq \delta \text{ or } \|\vx_i + \vx_k\| \leq \delta \;\; \text{ for some } i \neq k) &\leq \sum_{i \neq k}\P(\|\vx_i - \vx_k\| \leq \delta \text{ or } \|\vx_i + \vx_k\| \leq \delta)\\
        &\leq \frac{4\sqrt{\pi}n^2(C\delta)^{d - 1} }{d^2}.
    \end{align*}
    Setting $\delta = \min\left(\frac{1}{4}, \frac{1}{C}\left(\frac{\epsilon d^2}{4\sqrt{\pi}n^2} \right)^{1/(d - 1)}\right)$, we obtain
    \begin{align*}
        \P(\|\vx_i - \vx_k\| \leq \delta \text{ or } \|\vx_i + \vx_k\| \leq \delta \;\; \text{ for some } i \neq k) &\leq \epsilon.
    \end{align*}
    Therefore, for this value of $\delta$, the dataset is $\delta$-separated with probability at least $1 - \epsilon$. To conclude, note that
    \begin{align*}
        \frac{1}{C}\left(\frac{\epsilon d^2}{4\sqrt{\pi}n^2} \right)^{1/(d - 1)} \gtrsim \left(\frac{\epsilon}{n^2}\right)^{1/(d - 1)}
    \end{align*}
    since
    \[\lim_{d \to \infty} \left(\frac{d^2}{4\sqrt{\pi}}\right)^{1/(d - 1)} = 1. \]
\end{proof}
\begin{lemma}\label{lemma:sphere-antiseparation}
    Suppose that $n \geq 2$ and $\epsilon \in (0, 1)$. If $\vx_1, \cdots, \vx_n \in \S^{d - 1}$ are selected iid from $U(\S^{d - 1})$, then with probability at least $1 - \epsilon$, there exist $i, k \in [n]$ with $i \neq k$ such that
    \[\|\vx_i - \vx_k\|  \lesssim \left(\frac{\log(1/\epsilon) }{n^2} \right)^{1/(d - 1) }.  \]
\end{lemma}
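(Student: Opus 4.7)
I would prove this as a birthday-paradox estimate on $\S^{d-1}$: partition the sphere into cells of small diameter and observe that any two sample points landing in a common cell are automatically within that diameter. The main work is constructing the partition and handling unequal cell weights in the birthday bound.

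For the partition, fix $\delta > 0$ (to be tuned) and take a maximal $\delta$-packing $\{\vy_1,\ldots,\vy_M\} \subset \S^{d-1}$, that is, a maximal (under inclusion) set of points with pairwise distances exceeding $\delta$. By maximality every $\vx \in \S^{d-1}$ is within distance $\delta$ of some $\vy_j$, so the Voronoi cells of $\{\vy_1,\ldots,\vy_M\}$ (ties broken by index) partition $\S^{d-1}$ into sets $A_j \subseteq \sphericalcap(\vy_j, \delta)$, each of diameter at most $2\delta$. The caps $\sphericalcap(\vy_j, \delta/2)$ are pairwise disjoint, and by \eqref{eqn:spherical-cap-lower-bound} each has measure at least $\frac{1}{2}(\delta/4)^{d-1}$, so $M \lesssim \delta^{-(d-1)}$.

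The event $\{\min_{i \neq k}\|\vx_i - \vx_k\| > 2\delta\}$ is contained in the event that the $\vx_i$ occupy distinct cells. Writing $q_j = dS(A_j)$ (so $\sum_j q_j = 1$), the probability of the latter event equals $n! \cdot e_n(q_1,\ldots,q_M)$, where $e_n$ is the $n$th elementary symmetric polynomial. Maclaurin's inequality gives $e_n(q_1,\ldots,q_M) \leq \binom{M}{n}/M^n$, so this probability is at most $\prod_{j=0}^{n-1}(1 - j/M) \leq \exp(-n(n-1)/(2M))$. Combining with $M \lesssim \delta^{-(d-1)}$ yields $\P(\min_{i \neq k}\|\vx_i - \vx_k\| > 2\delta) \leq \exp(-c n^2 \delta^{d-1})$ for an absolute $c > 0$; choosing $\delta \asymp (\log(1/\epsilon)/n^2)^{1/(d-1)}$ makes the right side at most $\epsilon$, and the factor of $2$ is absorbed into the implicit constant.

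The main subtlety is the birthday bound for non-uniform cell weights; Maclaurin's inequality cleanly reduces this to the familiar equal-bins estimate. A minor edge case is $\log(1/\epsilon) \gtrsim n$, where $\delta$ already exceeds $n^{-1/(d-1)}$, forcing $M < n$; in that regime pigeonhole alone produces a collision with probability $1$ and the bound is trivial.
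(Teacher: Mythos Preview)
Your argument is correct and genuinely different from the paper's. The paper argues by sequential conditioning: given that the first $i$ points are pairwise $\delta$-apart, the half-radius caps $\sphericalcap(\vx_j,\delta/2)$ are disjoint, so the forbidden region for $\vx_{i+1}$ has measure at least $\tfrac{i}{2}(\delta/4)^{d-1}$, whence $\P(\text{all pairwise distances}>\delta)\le \prod_{i=1}^{n-1}\bigl(1-\tfrac{i}{2}(\delta/4)^{d-1}\bigr)\le \exp\bigl(-\tfrac{n^2}{2}(\delta/4)^{d-1}\bigr)$. You instead fix a data-independent Voronoi partition with $M\lesssim \delta^{-(d-1)}$ cells and bound the probability that $n$ samples fall in distinct cells by $n!\,e_n(q_1,\ldots,q_M)$, then invoke Maclaurin's inequality to reduce to the equal-weights case $\prod_{j=0}^{n-1}(1-j/M)$. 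Both routes land on the same exponent $cn^2\delta^{d-1}$. The paper's version is slightly more elementary in that it avoids Maclaurin, working directly with caps centered at the data; your version is more conceptual, cleanly separating the geometric input (the packing bound $M\lesssim \delta^{-(d-1)}$) from a purely combinatorial birthday estimate, and the Maclaurin step is a nice device for handling the unequal cell weights. Your edge-case remark is apt: when $(\log(1/\epsilon)/n^2)^{1/(d-1)}$ is of order one the claimed bound is vacuous anyway since all distances on $\S^{d-1}$ are at most $2$.
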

\begin{proof}
    Let $\ve = [1, 0, \cdots, 0]^T \in \S^{d - 1}$. For each $\vx \in \S^{d - 1}$, there exists an orthogonal matrix $\mO_x$ such that $\mO_x \vx = \ve$. Note that for all $\vx \in \S^{d - 1}$ and $i \in [n]$, $\mO_x \vx_i \stackrel{d}{=} \vx_i$. Let $i, k \in [n]$ with $i \neq k$. Then for all $\delta \in (0, 1/2)$,
    \begin{align*}
        \P(\|\vx_i - \vx_k\| \leq \delta) &= \E[\P(\|\vx_i - \vx_k\| \leq \delta \mid \vx_k) ]\\
        &= \E[\P(\|\mO_{x_k}\vx_i - \mO_{x_k}\vx_k\| \leq \delta \mid \vx_k) ]\\
        &= \E[\P(\|\mO_{x_k}\vx_i - \ve\| \leq \delta \mid \vx_k) ]\\
        &= \E[\P(\|\vx_i - \ve\| \leq \delta \mid \vx_k) ]\\
        &= \P(\|\vx_i - \ve\| \leq \delta).
    \end{align*}
    The expression on the final line is the measure of $\text{Cap}(\ve, \delta)$, and by Lemma 2.3 of \cite{ball1997elementary} it is bounded below by $\frac{1}{2}\left(\frac{\delta}{2}\right)^{d - 1}$. For each $i \in [n]$, let $\omega_i$ denote the event that $\|\vx_j - \vx_k\| > \delta$ for all $j, k \in [1, i]$ with $j \neq k$. Trivially $\P(\omega_1) = 1$. If $\omega_i$ occurs for some $i \in [1, n - 1]$, then the sets $\text{Cap}(\vx_j, \delta/2)$ for $j \in [i]$ are disjoint. Indeed, if $\vx \in \text{Cap}(\vx_j, \delta/2) \cap \text{Cap}(\vx_k, \delta/2)$, then by the triangle inequality
    \[\|\vx_j - \vx_k\| \leq \|\vx - \vx_j\| + \|\vx - \vx_k\| \leq \frac{\delta}{2} + \frac{\delta}{2} = \delta \]
    which contradicts $\omega_i$. Now since these smaller spherical caps are disjoint, we can bound
    \begin{align*}
        dS\left(\cup_{j = 1}^i\{\vx \in \S^{d - 1}: \|\vx - \vx_j\| \leq \delta \}\right) &\geq dS\left(\cup_{j = 1}^i\{\vx \in \S^{d - 1}: \|\vx - \vx_j\| \leq \delta/2 \}\right)\\
        &= dS\left(\cup_{j = 1}^i \text{Cap}(\vx_j, \delta/2)\right)\\
        &= \sum_{j = 1}^i dS(\text{Cap}(\vx_j, \delta/2))\\
        &\geq \sum_{j = 1}^i \frac{1}{2}\left(\frac{\delta}{4}\right)^{d - 1}\\
        &= \frac{i}{2}\left(\frac{\delta}{4}\right)^{d - 1}.
    \end{align*}
    Since $\vx_{i + 1}$ is chosen independently from $\vx_1, \cdots, \vx_i$, this implies
    \begin{align*}
        \P(\omega_{i + 1} \mid \omega_i) &= \P(\|\vx_{i + 1} - \vx_j\| > \delta \;\; \forall j \in [i] \mid \omega_i)\\
        &\leq 1 - \frac{i}{2}\left(\frac{\delta}{4}\right)^{d - 1}.
    \end{align*}
    By repeatedly conditioning we obtain
    \begin{align*}
        \P(\|\vx_j - \vx_k\| > \delta \;\; \forall j, k \in [n]) &= \P(\omega_n)\\
        &= \P(\omega_1)\prod_{i = 2}^n \P(\omega_i \mid \omega_1, \cdots, \omega_{i - 1})\\
        &= \prod_{i = 2}^n \P(\omega_i \mid \omega_{i - 1})\\
        &\leq \prod_{i = 2}^n \left(1 - \frac{i}{2}\left(\frac{\delta}{4}\right)^{d -1 }\right)\\
        &\leq \prod_{i = 2}^n \exp\left(-\frac{i}{2}\left(\frac{\delta}{4}\right)^{d - 1}\right)\\
        &\leq \exp\left(-\frac{n^2}{2}\left(\frac{\delta}{4}\right)^{d - 1}\right).
    \end{align*}
    Let us set $\delta = \min\left(\frac{1}{4}, 4\left( \frac{2}{n^2}\log \frac{1}{\epsilon} \right)^{\frac{1}{d -1 }} 
 \right)$. The above bounds imply that
 \[\P(\|\vx_j - \vx_k\| > \delta \;\; \forall j, k \in [n]) \leq \epsilon \]
 so with probability at least $1 - \epsilon$, there exist $i, k \in [n]$ such that $\|\vx_i - \vx_k\| \leq \delta$ with
 \[\delta \lesssim \left(n^{-2}\log \frac{1}{\epsilon}\right)^{1/(d - 1) } \]
 which is what we needed to show.
\end{proof}
\CorollaryUniform*
\begin{proof}
    By Lemma \ref{lemma:input-data-conditioning}, with probability at least $1 - \frac{\epsilon}{4}$,
    \[\|\mX\|^2 \lesssim \left(1 + \frac{n + \log \frac{1}{\epsilon} }{d}\right). \]
    Let us denote this event by $\omega_1$. Let us define
    \[\delta := \min_{i \neq k} \min(\|\vx_i - \vx_k\|, \|\vx_i+ \vx_k\|) \]
    and
    \[\delta' := \min_{i \neq k} \|\vx_i - \vx_k\|. \]
    In particular, the dataset $\vx_1, \cdots, \vx_n$ is $\delta$-separated. By Lemma \ref{lemma:sphere-separation}, with probability at least $1 - \frac{\epsilon}{4}$,
    \begin{align*}
        \delta \gtrsim  \left(\frac{\epsilon}{n^2}\right)^{1/(d - 1)}.
    \end{align*}
    Let us denote this event by $\omega_2$. By Lemma \ref{lemma:sphere-antiseparation}, with probability at least $1 - \frac{\epsilon}{4}$,
    \[\delta' \lesssim  \left(\frac{\log(1/\epsilon) }{n^2} \right)^{1/(d - 1)}. \]
    Let us denote this event by $\omega_3$. We condition on $\omega_1, \omega_2,$ and $\omega_3$ for the remainder of the proof.
    Define
    \[\lambda' = \left(1 + \frac{d\log(1/\delta) }{\log(d)}\right)^{-3} \delta^2 \]
    and
    \[\lambda = \left(1 + \frac{\log(n/\epsilon)}{\log(d)}\right)^{-3}\left(\frac{\epsilon^2}{n^4}\right)^{1/(d - 1)};\]
    note that
    \begin{align*}
        \lambda' &\gtrsim \left(1 + \frac{d \log\left( (n^2/\epsilon)^{1/(d-1)}\right) }{\log(d)} \right)^{-3}\left(\frac{\epsilon}{n^2}\right)^{2/(d - 1)}\\
        &\gtrsim \left(1 + \frac{\log(n/\epsilon)}{\log(d)}\right)^{-3}\left(\frac{\epsilon^2}{n^4}\right)^{1/(d - 1)}\\
        &= \lambda.
    \end{align*}

    By Theorem \ref{thm:shallow-main}, if
    \[d_1 \gtrsim \frac{1}{\lambda}\left(1 + \frac{n + \log(1/\epsilon)}{d}\right) \log\left(\frac{n}{\epsilon}\right) \gtrsim \frac{1}{\lambda'}\|\mX\|^2 \log\left(\frac{n}{\epsilon}\right),  \]
    then with probability at least $1 - \frac{\epsilon}{4}$ over the network weights,
    \begin{align*}
        \lambda_{\min}(\mK) \gtrsim \lambda' \gtrsim \lambda
    \end{align*}
    and
    \begin{align*}
        \lambda_{\min}(\mK) \lesssim \delta' \lesssim \left(\frac{\log(1/\epsilon)}{n^2}\right)^{1/(d - 1)}.
    \end{align*}
    This is exactly the bound that we needed to show. By taking a union bound over all of the favorable events, it follows that this event happens with probability at least $1 - \epsilon$.
\end{proof}
\section{Proof of Theorem \ref{thm:deep-main}}

\subsection{Recap of the deep setting} \label{app:deep-setting}
Recall for the deep case we consider fully connected networks with $L$ layers and denote the layer widths with positive integers, $d_0, \cdots, d_L$ where $d_0 = d$ and $d_L = 1$. For $l \in [L - 1]$ we define the feature matrices $\mF_l \in \R^{d_l \times n}$ as
\[\mF_l = [f_l(\vx_1), \cdots, f_l(\vx_n)]. \]
For $l \in [L - 1]$ and $\vx \in \R^d$ we define the activation patterns $\mSigma_l(\vx) \in \{0, 1\}^{d_l \times d_l}$ to be the diagonal matrices
\[\mSigma_l(\vx) = \text{diag}(\dot{\sigma}(\mW_{l}f_{l - 1}(\vx))). \]

Lemma \ref{lemma:NTKdecomp} provides a useful decomposition of the NTK.

\lemmaNTKdecomp*
\begin{proof}

For any $i \in [n]$, observe that $f(\vx_i, \cdot)$ is a PAP function \cite[Definition 5]{10.5555/3495724.3496288} and therefore $f(\vx_i, \cdot)$ is differentiable almost everywhere \cite[Proposition 4]{10.5555/3495724.3496288}. As the union of $n$ null sets is also a null set, we conclude that there exists an open set $U$ of full measure such that for all $i \in [n]$ then $f(\vx_i, \theta)$ is differentiable for any $\theta \in U$. 

Let $\tfrac{\partial f}{ \partial \vtheta}$ denote the true derivative of $f$ with respect to $\vtheta$ when it exists and be the minimum norm sub-gradient otherwise. Using \cite[Corollary 13]{10.5555/3495724.3496288} then
 \[
    \left( \prod_{l = 1}^{L - 1} \frac{d_l}{2}\right)\mK \stackrel{a.e.}{=} \frac{\partial F_L(\vtheta)}{\partial \vtheta }^T  \frac{\partial F_L(\vtheta)}{\partial \vtheta } = \sum_{l = 1}^{L} \frac{\partial F_L(\vtheta)}{\partial \mW_l }^T  \frac{\partial F_L(\vtheta)}{\partial \mW_l },
    \]
    where $\tfrac{\partial F_L(\vtheta)}{\partial \mW_l} \in \mR^{d_ld_{l-1} \times n}$. By inspection, to prove the result claimed it therefore suffices to show for any $l \in  [L]$, $\theta \in U$ and $i,j \in [n]$ that
    \begin{equation}\label{eq:decom-prf-element}
       \langle  \tfrac{\partial f_L(\vx_i; \vtheta)}{\partial \mW_{l} }, \tfrac{\partial f_L(\vx_j; \vtheta)}{\partial \mW_{l} } \rangle   = \left( f_{l-1}(\vx_i)^T f_{l-1}(\vx_j; \vtheta) \right) \left( [\mB_{l}]_{i,:}^T [\mB_{l}]_{j,:} \right).
    \end{equation}
    First observe
    \begin{align*}
        \langle  \tfrac{\partial f_L(\vx_i; \vtheta)}{\partial \mW_L }, \tfrac{\partial f_L(\vx_j; \vtheta)}{\partial \mW_L } \rangle &= f_{L-1}(\vx; \vtheta)^Tf_{L-1}(\vx; \vtheta) \times 1
    \end{align*}
    therefore establishing \eqref{eq:decom-prf-element} for $l = L$. To establish \eqref{eq:decom-prf-element} for $l \in [L-1]$, recall for $k \in [L-1]$ that $\mSigma_k(\vx) = \text{diag}\left( \dot{\sigma}(\mW_k f_{k-1}(\vx))\right)$ and define $\mSigma_L(\vx) = 1$. Observe for $1 \leq l < k$, $k \in [L]$ that
    \begin{equation} \label{eq:decomp-prf-1}
    \frac{\partial f_{k}(\vx; \vtheta)}{ \partial \mW_l } = \mSigma_k(\vx) \mW_k  \frac{\partial f_{k-1}(\vx;\vtheta) }{\partial \mW_{l}}
    \end{equation}
    while for $k=l$
    \begin{equation} \label{eq:decomp-prf-2}
         \frac{\partial f_{k}(\vx; \vtheta)}{ \partial \mW_k } = \mSigma_k(\vx) \otimes f_{k-1}(\vx; \vtheta)^T.
    \end{equation}
    As a result,
    \begin{align*}
        \frac{\partial f_L(\vx; \vtheta)}{\partial \theta_l } &=  \mW_L\left(\prod_{k = 1}^{L-l + 1} \mSigma_{L-k}(\vx) \mW_{L-k} \right) \frac{\partial f_{l}(\vx; \vtheta)}{ \partial \mW_l }\\
        &= \mW_L\left(\prod_{k = 1}^{L-l + 1} \mSigma_{L-k}(\vx) \mW_{L-k} \right) \left(\mSigma_l(\vx) \otimes f_{l-1}(\vx; \vtheta)\right)
    \end{align*}
    where the first equality arises from iterating \eqref{eq:decomp-prf-1} and the second by applying \eqref{eq:decomp-prf-2}. Proceeding,
    \begin{align*}
        &\left \langle  \frac{\partial f_L(\vx_i)}{\partial \theta_l }, \frac{\partial f_L(\vx_j)}{\partial \theta_l } \right \rangle \\
        &= \left( f_{l-1}(\vx_i)^T f_{l-1}(\vx_j) \right) \left( \left(\mSigma_l(\vx_i) \prod_{k = l+1}^{L-1} \mW_{k}^T \mSigma_{k}(\vx_i)  \right) \mW_L^T \right)^T \left(\left( \mSigma_l(\vx_j) \prod_{k = l+1}^{L-1} \mW_{k}^T \mSigma_{k}(\vx_j) \right) \mW_L^T \right) \\
        &= \left( f_{l-1}(\vx_i)^T f_{l-1}(\vx_j) \right) \left( [\mB_{l}]_{i,:}^T [\mB_{l}]_{j,:} \right)
    \end{align*}
    as claimed.
\end{proof}

\subsection{Proof of Lemma \ref{lemma:FeatureNorms}} \label{app:FeatureNorms} 
\begin{lemma} \label{lemma:feature-norm-helper1}
    Let $\vz \in \R^d$ be a fixed vector and $\mW \in \R^{m \times d}$ a random matrix with mutually iid elements $[\mW]_{ij} \sim \mathcal{N}(0,1)$ for all $i \in [m]$ and $j \in [d]$. Consider the random vector $\vy \in \R^{m}$ defined as $\vy = \sigma(\mW \vz)$ where $\sigma$ denotes the ReLU function applied elementwise. For $\delta \in (0,1)$ if $m \gtrsim \delta^{-2}\log(1/ \epsilon)$ then
    \[
      \P\left((1 - \delta)\frac{m}{2}\| \vz \|^2 \leq \| \vy \|^2 \leq (1 + \delta)\frac{m}{2}\| \vz \|^2 \right) \geq 1 - \epsilon.
    \]
\end{lemma}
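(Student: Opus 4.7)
The plan is to write $\| \vy \|^2 = \sum_{i=1}^m \sigma(\langle \vw_i, \vz \rangle)^2$ where $\vw_i$ is the $i$th row of $\mW$, observe that the summands are iid sub-exponential random variables, compute their mean, and then apply a Bernstein-type concentration inequality.

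First I would note that since each $\vw_i \sim \mathcal{N}(\vzero_d, \mI_d)$, we have $\langle \vw_i, \vz \rangle \sim \mathcal{N}(0, \| \vz \|^2)$. By the symmetry of the Gaussian and the fact that $\sigma(t)^2 = t^2 \mathbbm{1}\{t \geq 0\}$, we get $\mathbb{E}[\sigma(\langle \vw_i, \vz\rangle)^2] = \tfrac{1}{2} \mathbb{E}[\langle \vw_i, \vz \rangle^2] = \tfrac{1}{2}\| \vz \|^2$, so $\mathbb{E}[\| \vy \|^2] = \tfrac{m}{2}\| \vz \|^2$. For the sub-exponential norm of each summand, since $0 \leq \sigma(t)^2 \leq t^2$ pointwise,
\[
\bigl\| \sigma(\langle \vw_i, \vz \rangle)^2 \bigr\|_{\psi_1} \leq \bigl\| \langle \vw_i, \vz\rangle^2 \bigr\|_{\psi_1} = \bigl\| \langle \vw_i, \vz\rangle \bigr\|_{\psi_2}^2 \lesssim \| \vz \|^2,
\]
where the last step uses that a mean-zero Gaussian of variance $\|\vz\|^2$ has sub-Gaussian norm $\lesssim \|\vz\|$.

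Next I would apply Bernstein's inequality for sums of independent sub-exponential random variables (e.g., \citet[Theorem 2.8.1]{vershynin2018high}) to obtain a constant $c > 0$ such that for all $t \geq 0$
\[
\P\left( \Bigl| \| \vy \|^2 - \tfrac{m}{2}\| \vz \|^2 \Bigr| \geq t \right) \leq 2 \exp\left( - c \min\left( \frac{t^2}{m \| \vz \|^4}, \frac{t}{\| \vz \|^2} \right) \right).
\]
Setting $t = \delta \cdot \tfrac{m}{2}\| \vz \|^2$ with $\delta \in (0, 1)$ collapses the minimum to the quadratic branch, giving a bound of the form $2 \exp(-c' m \delta^2)$. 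Requiring this to be at most $\epsilon$ yields the stated width condition $m \gtrsim \delta^{-2} \log(1/\epsilon)$, and then with probability at least $1-\epsilon$ both the upper and lower bounds $(1 \pm \delta) \tfrac{m}{2}\| \vz \|^2$ hold simultaneously.

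This argument is essentially routine; the only mild subtlety is confirming the sub-exponential norm bound on $\sigma(\langle \vw_i, \vz \rangle)^2$, but this reduces immediately to the sub-Gaussian norm of a scalar Gaussian via the identity $\| X^2 \|_{\psi_1} = \| X \|_{\psi_2}^2$. No obstacle beyond that is expected.
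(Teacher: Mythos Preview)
Your proof is correct and in fact cleaner than the paper's. You directly recognize $\|\vy\|^2$ as an iid sum of sub-exponential variables and hit it with Bernstein's inequality in one shot. The paper instead factors each summand as $\|\vz\|^2 B_i Z_i^2$ with $Z_i \sim \mathcal{N}(0,1)$ and $B_i = \mathbbm{1}(Z_i > 0)$, then proceeds in two stages: first it conditions on the active set $\mathcal{S} = \{i : B_i = 1\}$ and uses Laurent--Massart $\chi^2$ concentration to control $\sum_{i \in \mathcal{S}} Z_i^2$ around $|\mathcal{S}|$, and second it uses Hoeffding on the Bernoulli sum to control $|\mathcal{S}|$ around $m/2$; the two error parameters are then combined multiplicatively. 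Both routes yield the same $m \gtrsim \delta^{-2}\log(1/\epsilon)$ requirement, but your argument avoids the conditioning and the bookkeeping of two separate concentration events. The paper's decomposition into sign and magnitude does make the $\chi^2$ structure explicit and could in principle give tighter constants, though that is irrelevant at the $\gtrsim$ level of the statement. One small point worth making explicit in your write-up: Bernstein's inequality in the cited form is for mean-zero variables, so you are applying it to the centered summands $\sigma(\langle \vw_i, \vz\rangle)^2 - \tfrac{1}{2}\|\vz\|^2$, whose $\psi_1$ norm is still $\lesssim \|\vz\|^2$ since centering changes the sub-exponential norm by at most a constant factor.
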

\begin{proof}
    For $i\in [m]$ define $Z_i = \tfrac{\vw_i^T\vz}{\| \vz \|}$, then $Z_i \sim \mathcal{N}(0,1)$ are mutually iid. Let $B_i = \mathbbm{1}(Z_i>0)$, note by symmetry $B_i \sim \text{Ber}(1/2)$, furthermore these random variables for $i \in [n]$ are also mutually iid with respect to one another. As $y_i = \| \vz \|B_i Z_i$ then
    \begin{align*}
    \| \vy \|_2^2 
    &= \| \vz \|^2 \sum_{i=1}^m B_i Z_i^2.
    \end{align*}
    For convenience let $\vy' = \vy / \| \vz\| $and define $\mathcal{S} = \{ i \in [n] \; : \; B_i = 1 \}$, then 
    \[
    \|\vy' \|^2 = \sum_{i\in \mathcal{S}} Z_i^2 \sim \chi^2(|\mathcal{S}|).
    \]
    From \cite[Lemma 1]{10.1214/aos/1015957395} we have for any $t>0$
    \begin{align*}
        \P \left( | \left(\| \vy'\|^2 - | \mathcal{S}| \right) | \geq 2 \sqrt{ |\mathcal{S}| t}   \right) \leq 2\exp(-t).
    \end{align*}
    For $\delta_1 \in (0,1)$ let $t=\tfrac{|\mathcal{S}| \delta_1^2}{4}$, then 
    \[
    \P \left( (1 - \delta_1)|\mathcal{S} | \leq \| \vy' \|^2 \leq (1 + \delta_1)|\mathcal{S} | \right) \geq 1 - 2 \exp \left( -\frac{|\mathcal{S}| \delta_1^2}{4}  \right).
    \] 
    Observe $|\mathcal{S}| = \sum_{i=1}^{m} B_i \sim \text{Bin}(m, 1/2)$. With $\delta_2 \in (0,1)$ then applying Hoeffding's inequality we have
    \begin{align*}
    \P\left( (1 - \delta_2) \frac{m}{2} \leq \sum_{i=1}^{m} B_i \leq (1 + \delta_2) \frac{m}{2}  \right) &\geq 1 - 2 \exp \left( -\frac{\delta_2^2 m}{2}\right).
    \end{align*}
    Let $\omega$ denote the event that $(1 - \delta_2)\tfrac{m}{2} \leq |\mathcal{S} | \leq (1 + \delta_2)\tfrac{m}{2}$. If $m \geq \frac{16}{\delta_1^2\delta_2^2 (1 - \delta_2 )} \log(4 / \epsilon)$ then
    \begin{align*}
        &\P\left((1 - \delta_1)(1- \delta_2)\frac{m}{2} \leq \| \vy' \|^2 \leq (1 + \delta_1)(1 + \delta_2)\frac{m}{2} \right)\\ &\geq \P\left((1 - \delta_1)(1- \delta_2)\frac{m}{2} \leq \| \vy' \|^2 \leq (1 + \delta_1)(1 + \delta_2)\frac{m}{2}\; \mid \; \omega\right) \P(\omega)\\
        &\geq  \P\left( (1 - \delta_1)|\mathcal{S} | \leq \| \vy'\|^2 \leq (1 + \delta_1)|\mathcal{S} | \; \mid \; \omega \;\right)\P(\omega)\\
        & \geq \left( 1 - 2 \exp \left( -\frac{(1 - \delta_2) \delta_1^2m}{8}  \right) \right) \left(1 - 2 \exp \left( -\frac{\delta_2^2 m}{2}\right) \right)\\
         & \geq \left( 1 - \frac{\epsilon}{2}\right)\left( 1 - \frac{\epsilon}{2}\right)\\
        & \geq 1 -\epsilon.
    \end{align*}
    For some $\delta \in (0,1)$ let $\delta_2 = \delta_1 = \delta/3$, then if $m \geq 1944\delta^{-2}\log(4 / \epsilon)$ we have
    \[
        \P\left((1 - \delta)\frac{m}{2} \leq \| \vy' \|^2 \leq (1 + \delta)\frac{m}{2} \right) \geq 1 - \epsilon
    \]
    from which the result claimed follows.
\end{proof}

\lemmaFeatureNorms*

\begin{proof}
    For $k\in [l]$ let $\omega_k$ denote the event that the inequality
    \[
    \left(1 - \frac{1}{l} \right)^k \left( \prod_{h=1}^k \frac{d_h}{2}\right) \leq \| f_k(\vx) \|^2 \leq \left(1 + \frac{1}{l} \right)^k\left( \prod_{h=1}^k \frac{d_h}{2}\right) 
    \]
    holds. We proceed by induction to establish that $\P(\omega_k) \geq ( 1 - \tfrac{\epsilon}{l})^k$ for all $k \in [l]$. For the base case note that $f_1(\vx) = \sigma(\mW_1 \vx)$ and $\| \vx \|^2 = 1$. Applying Lemma \ref{lemma:feature-norm-helper1} with $\delta = \frac{1}{l}$, if $d_1 \gtrsim l^2 \log(l/\epsilon)$ then $\P(\omega_1) \geq 1 - \tfrac{\epsilon}{l}$. Now suppose for $k \in [l-1]$ that $\P(\omega_k) \geq ( 1 - \tfrac{\epsilon}{l})^k$. Note
    \begin{align*}
    \P(\omega_{k+1}) \geq \P(\omega_{k+1} \mid \omega_k) \P(\omega_k) \geq \P(\omega_{k+1} \mid \omega_k) ( 1 - \tfrac{\epsilon}{l})^k
    \end{align*}
    Recall $f_{k+1}(\vx) = \sigma(\mW_1 f_{k}(\vx))$. Conditioned on $\omega_k$, then again applying Lemma \ref{lemma:feature-norm-helper1} with $\delta = \frac{1}{l}$ and as $d_{k+1} \gtrsim l^2 \log(l/\epsilon)$ we have
    \[
    \P(\omega_{k+1}\mid \omega_k) \geq 1 - \tfrac{\epsilon}{l}
    \]
    which completes the proof of the induction hypothesis. As $(1 - \epsilon/l)^l \geq 1 - \epsilon$ and $ e^{-1} \leq (1 - 1/l)^l \leq (1 + 1/l)^l \leq e$ then
    \[
      e^{-1} \left( \prod_{h=1}^l \frac{d_h}{2}\right) \leq \| f_l(\vx) \|^2 \leq e \left( \prod_{h=1}^l \frac{d_h}{2}\right) 
    \]
    holds with probability at least $1-\epsilon$.
\end{proof}

\subsection{Proof of Lemma \ref{lemma:Frob-S}}\label{app:Frob-S}

\begin{restatable}{lemma}{LemmaFrobS}\label{lemma:Frob-S}
    Let $\vx \in \S^{d_0-1}$, $L \geq 2$ and assume $d_k \gtrsim L^2 \log \left(\frac{L}{\epsilon}\right)$ for all $k \in [L - 1]$. For any $l \in [L - 1]$ with probability at least $1 - \epsilon$ over the network parameters the following holds,
    \[
     \| \mS_{l}(\vx) \|_F^2 \asymp 2^{-L+l+1} \prod_{k = l}^{L-1} d_k.
    \]
\end{restatable}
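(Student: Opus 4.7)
The plan is to prove the lemma by backwards induction on $l \in \{L-1, \ldots, 1\}$. For the base case $l = L-1$, the product in the definition of $\mS_{L-1}$ is empty, so $\mS_{L-1}(\vx) = \mSigma_{L-1}(\vx)$ and $\|\mS_{L-1}\|_F^2 = \text{tr}(\mSigma_{L-1})$ is simply the number of active neurons at layer $L-1$. Lemma~\ref{lemma:FeatureNorms} (applied with probability budget $\epsilon/(2L)$) guarantees $f_{L-2}(\vx) \neq \vzero$ with high probability, so conditional on this the entries of $\mW_{L-1} f_{L-2}(\vx)$ are iid centered Gaussians with strictly positive variance and the diagonal entries of $\mSigma_{L-1}$ are iid $\mathrm{Bernoulli}(1/2)$. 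A Chernoff bound then yields $\text{tr}(\mSigma_{L-1}) \asymp d_{L-1}$ with failure probability $\leq \epsilon/L$ under the stated width assumption, matching $2^{0}\cdot d_{L-1}$ as required.

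For the inductive step, I would use the recursion $\mS_l = \mSigma_l \mW_{l+1}^T \mS_{l+1}$. The chief obstacle is that $\mW_{l+1}$ is not independent of $\mS_{l+1}$, because $\mSigma_{l+1} = \text{diag}(\dot{\sigma}(\mW_{l+1} f_l))$ appears as the leftmost factor in $\mS_{l+1}$. To resolve this I would condition on $\mW_1, \ldots, \mW_l$ (fixing $f_l$, $\mSigma_l$, and $\vp := f_l(\vx)/\|f_l(\vx)\|$, well defined by Lemma~\ref{lemma:FeatureNorms}) and decompose
\[
\mW_{l+1} \;=\; \boldsymbol{\alpha}\vp^T + \mW_{l+1}^{\perp},
\]
where $\boldsymbol{\alpha} \in \R^{d_{l+1}}$ is standard Gaussian and $\mW_{l+1}^{\perp}\vp = \vzero$ has rows drawn independently from the Gaussian on $\vp^{\perp}$. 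Since $\mW_{l+1}$ enters $\mSigma_{l+1}$ only through $\mW_{l+1}f_l \propto \boldsymbol{\alpha}$, the perpendicular component $\mW_{l+1}^{\perp}$ is independent of $\mS_{l+1}$.

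Using the ReLU identity $\mSigma_l f_l = f_l$ (equivalently $\mSigma_l \vp = \vp$) together with $\mW_{l+1}^{\perp}\vp = \vzero$, substitution yields
\[
\mS_l \;=\; \vp\bigl(\boldsymbol{\alpha}^T \mS_{l+1}\bigr) + \mSigma_l(\mW_{l+1}^{\perp})^T \mS_{l+1},
\]
and the two summands are Frobenius-orthogonal (the cross term reduces to a scalar multiple of $(\mW_{l+1}^{\perp}\vp)^T(\cdots) = 0$). Conditional on $\boldsymbol{\alpha}$ and $\mW_{l+2},\ldots,\mW_{L-1}$, the second summand is a Gaussian quadratic form in $\mW_{l+1}^{\perp}$ whose conditional mean is $\text{tr}\bigl(\mSigma_l(\mI-\vp\vp^T)\bigr)\|\mS_{l+1}\|_F^2 = (|I_l|-1)\|\mS_{l+1}\|_F^2$, where $I_l$ indexes the active neurons of layer $l$; Hanson--Wright concentrates this around its mean, and a Chernoff bound gives $|I_l| \asymp d_l/2$. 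Combined with the inductive hypothesis $\|\mS_{l+1}\|_F^2 \asymp 2^{-L+l+2}\prod_{k=l+1}^{L-1}d_k$, the main term scales as $(d_l/2)\|\mS_{l+1}\|_F^2 \asymp 2^{-L+l+1}\prod_{k=l}^{L-1}d_k$ as desired. The first summand is handled by $\|\boldsymbol{\alpha}^T \mS_{l+1}\|^2 \leq \|\boldsymbol{\alpha}\|^2 \|\mS_{l+1}\|^2$, combining $\|\boldsymbol{\alpha}\|^2 \asymp d_{l+1}$ (Chernoff on a $\chi^2$) with the companion operator-norm bound of Lemma~\ref{lemma:Op-S} to show this contribution is of strictly smaller order than the main term. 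A union bound across the $L-1$ inductive steps, with failure probability $\epsilon/L$ per step, produces the stated probability. The hardest part is precisely the $\mW_{l+1}$--$\mS_{l+1}$ decoupling, which the orthogonal decomposition disposes of cleanly by isolating the genuinely independent stochastic piece $\mW_{l+1}^{\perp}$.
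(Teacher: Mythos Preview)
Your route is genuinely different from the paper's. The paper runs a \emph{forward} induction on the outer index, building $\mS_{l,K+1}=\mS_{l,K}\mW_{K+1}^T\mSigma_{K+1}$; since $\mS_{l,K}\in\sigma(\mW_1,\ldots,\mW_K)$, the fresh matrix $\mW_{K+1}$ is independent of it, and the sign--flip symmetry $\vw_j\mapsto-\vw_j$ makes $\dot\sigma(\langle\vw_j,f_K\rangle)$ independent of $\|\mS_{l,K}\vw_j\|^2$. This yields a clean one--step Bernstein bound with \emph{no residual term}. Your backward induction instead forces you to confront the dependence of $\mW_{l+1}$ on $\mS_{l+1}$, which you handle via the orthogonal split $\mW_{l+1}=\boldsymbol{\alpha}\vp^T+\mW_{l+1}^{\perp}$; the main Hanson--Wright piece is fine.

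The gap is in the rank--one residual $\|\boldsymbol{\alpha}^T\mS_{l+1}\|^2$. You bound it by $\|\boldsymbol{\alpha}\|^2\|\mS_{l+1}\|^2$ and invoke Lemma~\ref{lemma:Op-S}, but that lemma assumes the pyramidal condition $d_k\geq d_{k+1}$, which is \emph{not} a hypothesis of Lemma~\ref{lemma:Frob-S}. Even granting pyramidal widths, your bound gives the residual as $\lesssim d_{l+1}\prod_{k=l+1}^{L-2}d_k$, whose ratio to the main term $2^{-L+l+1}\prod_{k=l}^{L-1}d_k$ is $2^{L-l-1}d_{l+1}/(d_l d_{L-1})$; making this $O(1/L)$ (so the constants do not blow up over $L$ steps) would further require $d_{L-1}\gtrsim L\,2^{L}$, again not assumed here. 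So as written the induction does not close under the stated hypotheses.

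The fix is short and avoids Lemma~\ref{lemma:Op-S} entirely: telescoping $\boldsymbol{\alpha}^T\mSigma_{l+1}=\sigma(\boldsymbol{\alpha})^T=f_{l+1}^T/\|f_l\|$ and then repeatedly using $(\mW_k f_{k-1})^T\mSigma_k=f_k^T$ gives the exact identity
\[
\boldsymbol{\alpha}^T\mS_{l+1}=\frac{1}{\|f_l\|}\,f_{L-1}^T,
\]
so by Lemma~\ref{lemma:FeatureNorms} the residual is $\|f_{L-1}\|^2/\|f_l\|^2\asymp 2^{-L+l+1}\prod_{k=l+1}^{L-1}d_k$, exactly a $1/d_l=O(1/L^2)$ fraction of the main term. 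With this replacement your backward induction goes through under the lemma's stated assumptions.
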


\begin{proof}
    In what follows for convenience we define an empty product of scalars or matrices as the scalar one.
    Let $K \in \{L - 1\}$, $l \in [K]$, and for some arbitrary $\vx \in \S^{d_0-1}$ define
    \begin{align*}
        \mS_{l,K} &= \mSigma_l(\vx) \prod_{k = l+1}^{K} \mW_k^T \bm{\Sigma}_k(\vx).
    \end{align*}
    Let $\omega_{l,K}$ denote the event
    \begin{equation}\label{eq:frob-norm-K}
    \frac{1}{2} \left(1 - \frac{1}{L}\right)^K \leq \|\mS_{l, K}\|_F^2\prod_{k = l}^K \frac{2}{d_l} \leq 2\left(1 + \frac{1}{L}\right)^K
    \end{equation}
    It suffices to lower bound the probability of the event $\omega_{l, L-1}$. Let $\mc{F}_{K}$ denote the $\sigma$-algebra generated by $\mW_1, \cdots, \mW_{K}$ and note that $\mS_{l,K} \in \mc{F}_{K}$. Let $\gamma_l$ denote the event that $f_l(\vx) \neq 0$, then
    \begin{align*}
        \P(\omega_{l, L-1}) &\geq \P(\omega_{l, L-1} \mid \omega_{l, L-2} ) \P(\omega_{l, L-2})\\
        &\geq \P(\omega_{l, L-1} \mid \omega_{l, L-2} ) \P(\omega_{l, L-2} \mid \omega_{l, L-3})\P(\omega_{l, L-3})\\
        & \geq \left(\prod_{h = l}^{L-2} \P(\omega_{l, h+1} \mid \omega_{l, h}) \right) \P(\omega_{l,l} \mid \gamma_l) \P(\gamma_l).
    \end{align*}
    Fixing $\epsilon \in (0,1)$, our goal is to show each term in this product is at least $(1 - \tfrac{\epsilon}{L})$: indeed, if this is true then
    \begin{align*}
        \P(\omega_{l, L-1}) \geq \left(1 - \frac{\epsilon}{L} \right)^{L-l} \geq 1 - \epsilon
    \end{align*}
    and our task is complete. To this end, first observe that as $d_k \gtrsim L^2 \log(L / \epsilon)$ for all $k \in [L-1]$, then $\P(\gamma_l) \geq 1 - \tfrac{\epsilon}{L}$ by Lemma \ref{lemma:FeatureNorms}. Proceeding to the term $\P(\omega_{l,l} \mid \gamma_l)$, recall $[\bm{\Sigma}_l(\vx)]_{jj}=\mathbbm{1}([\mW_l f_{l-1}(\vx)]_j>0)$. By symmetry the diagonal entries of $\bm{\Sigma}_l(\vx)$ are mutually iid Bernoulli random variables with parameter $\frac{1}{2}$. Therefore, using Hoeffding's inequality for all $t \geq 0$ 
    \begin{align*}          
    \P\left(\left|\|\bm{\Sigma}_l(\vx)\|_F^2 - \frac{d_l}{2}\right| \geq t \; \middle|\; \gamma_l \right) \leq 2 \exp\left(-\frac{t^2}{d_l}\right).
    \end{align*}
    Let $t = d_l$, if $d_l \geq \log \frac{2L}{\epsilon}$ then with $K\geq 1$, $L\geq 2$ it follows that
    \begin{align*}
        \P(\omega_{l,l} \mid \gamma_l)
        &= \P\left(\frac{1}{2}\left(1 - \frac{1}{L}\right)^K \leq \|\bm{\Sigma}_l(\vx)\|_F^2 \frac{2}{d_l} \leq 2\left(1 + \frac{1}{L}\right)^K \; \middle| \; \gamma_l \right)\\
        &\geq \P\left(\frac{1}{2} \leq \|\bm{\Sigma}_l(\vx)\|_F^2 \frac{2}{d_l} \leq \frac{3}{2} \; \middle| \; \gamma_l \right)\\
        &\geq 1 - \P\left(\left|\|\bm{\Sigma}_l(\vx)\|_F^2 - \frac{d_l}{2}\right| \geq \frac{d_l}{4} \; \middle|\; \gamma_l \right)\\
        &\geq 1 - \frac{\epsilon}{L}.
    \end{align*} 
    We now proceed to analyze $\P(\omega_{l, h+1} \mid \omega_{l, h})$ for $h \in [l, K-1]$. Note if $\omega_{l, h}$ is true then $\|\mS_{l, h} \|_F^2 > 0$. By definition this implies $\| \bm{\Sigma}_l(\vx) \|_F^2 > 0$, however, if $f_h(\vx) = 0$ then  $\| \bm{\Sigma}_l(\vx) \|_F^2 = 0$. Therefore $\omega_{l, h}$ being true implies $f_h(\vx) \neq 0$. For convenience in what follows we denote the $j$th column of $\mW_{h+1}$ as $\vw_j$. By definition
    \[
        \mS_{l, h+1} = \mS_{l, h}\mW_{h+1}^T \bm{\Sigma}_{h+1}(\vx),
    \]
    therefore,
    \begin{align*}
        \E[\|\mS_{l, h+1}\|_F^2 \mid \mc{F}_{h} ] &= \E[\|\mS_{l, h} \mW_{h+1}^T \bm{\Sigma}_{h+1}(\vx)\|_F^2  \mid \mc{F}_{h} ]\\
        &= \E\left[\sum_{j = 1}^{d_{h+1}} \left\|\mS_{l, h}\vw_j\right\|^2 \dot{\sigma}(\langle \vw_j, f_{h}(\vx) \rangle) \; \middle| \; \mc{F}_{h}\right].
    \end{align*}
    As highlighted already, if we condition on $\omega_{l, h}$ then $f_h(\vx)\neq 0$ and therefore the random variables $(\dot{\sigma}(\langle \vw_j, f_{h}(\vx) \rangle))_{j \in d_{h+1}}$ are mutually iid Bernoulli random variables with parameter $\frac{1}{2}$. Again by symmetry $\dot{\sigma}(\langle \vw_j, f_{h}(\vx) \rangle)$ is independent of $\| \mS_{l, h}\vw_j \|^2 $. Therefore conditioned on $\omega_{l,h}$
    \begin{align*}
        \sum_{j = 1}^{d_{h+1}}\E[\|\mS_{l, h}\vw_j\|^2 \mid \mc{F}_{d_{h+1}} ]\E[\dot{\sigma}(\langle \vw_j, f_{h}(\vx) \rangle) \mid \mc{F}_{h} ]
        &= \frac{1}{2}\sum_{j = 1}^{d_{h+1}}\E[\|\mS_{l, h}\vw_j\|^2 \mid \mc{F}_{h} ]\\
        &= \frac{1}{2}\sum_{j = 1}^{d_{h+1}}\|\mS_{l, h}\|_F^2\\
        &= \frac{d_{h+1}}{2}\|\mS_{l, h}\|_F^2.
    \end{align*}
    Moreover, under the same conditioning
    \begin{align*}
        \left\|\left\|\mS_{l, h}\vw_j\right\|^2 \dot{\sigma}(\langle \vw_j, f_{h}(\vx) \rangle)  \right\|_{\psi_1} &\leq \| \|\mS_{l, h}\vw_j\|^2 \|_{\psi_1}\\
        &= \| \|\mS_{l, h}\vw_j\| \|_{\psi_2}^2\\
        &\lesssim \|\mS_{l, h}\|_F^2
    \end{align*}
   where the last line follows from Theorem 6.3.2 of \cite{vershynin2018high}. As a result, conditioned on $\omega_{l,h}$ then using Bernstein's inequality \cite[Theorem 2.8.1]{vershynin2018high} there exists an absolute constant $c$ such that for all $t \geq 0$ 
    \begin{align*}
        \P\left(\left|\|\mS_{l, h+1}\|_F^2 -\frac{d_{h+1}}{2}\|\mS_{l, h}\|_F^2\right| \geq  t \; \middle| \; \mc{F}_{h} \right) &\leq 2\exp\left(-c \min\left(\frac{t^2}{d_{h+1}\|\mS_{l, h}\|_F^4 }, \frac{t}{\|\mS_{l, h}\|_F^2} \right) \right).
    \end{align*}
    If $d_{h+1} \geq \frac{4L^2}{c} \log \frac{2L}{\epsilon}$ and $t = \frac{d_{h+1}\|\mS_{l, h}\|_F^2 }{2L }$ then conditioning on $\omega_{l,h}$ we obtain
    \begin{align*}
        \P\left(\left|\|\mS_{l, h+1}\|_F^2 -\frac{d_K}{2}\|\mS_{l, h}\|_F^2\right| \geq  \frac{d_{h+1} }{2L}\|\mS_{l, h}\|_F^2 \; \middle| \; \mc{F}_{h} \right) &\leq \frac{\epsilon}{L}.
    \end{align*}
    As a result, for any $h \in [l, K-1]$ we have $\P(\omega_{l, h+1} \mid \omega_{l,h }) \geq 1 - \tfrac{\epsilon}{L}$ from which the result claimed follows.
\end{proof}

\subsection{Proof of Lemma \ref{lemma:Op-S}} \label{app:Op-S}

\begin{restatable}{lemma}{LemmaOpS}\label{lemma:Op-S}
    Let $\vx \in \S^{d_0-1}$, $L \geq 3$ and assume $d_k \geq d_{k + 1}$ and $d_k \gtrsim  \sqrt{\log \frac{1}{\epsilon} }$ for all $k \in [L - 1]$. For any $l \in [L-1]$ with probability at least $1 - \epsilon$ over the network parameters the following holds,
    \[
    \left\| \mS_l(\vx) \right\|^2 \lesssim \prod_{k = l}^{L - 2}d_{k}.
    \]
\end{restatable}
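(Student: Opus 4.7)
The approach is to exploit the trivial bound $\|\mSigma_k(\vx)\| \leq 1$ (each $\mSigma_k(\vx)$ is a $\{0,1\}$-valued diagonal matrix) and reduce the claim to a product of operator norms of Gaussian random matrices. By sub-multiplicativity of the spectral norm,
\[
\|\mS_l(\vx)\| \;\leq\; \|\mSigma_l(\vx)\| \prod_{k=l+1}^{L-1} \|\mW_k^T\|\,\|\mSigma_k(\vx)\| \;\leq\; \prod_{k=l+1}^{L-1} \|\mW_k\|,
\]
so it suffices to control each factor on the right hand side separately.

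For each $k \in \{l+1, \ldots, L-1\}$, the matrix $\mW_k \in \R^{d_k \times d_{k-1}}$ has iid standard Gaussian entries. A standard Gaussian matrix concentration result (see, e.g., \cite[Theorem 4.4.5]{vershynin2018high}) gives, for every $t \geq 0$,
\[
\P\!\left( \|\mW_k\| \;\geq\; \sqrt{d_k} + \sqrt{d_{k-1}} + t \right) \;\leq\; 2\exp(-t^2/2).
\]
Setting $t \asymp \sqrt{\log(L/\epsilon)}$ and using the pyramidal assumption $d_{k-1} \geq d_k$ together with the width hypothesis (which ensures the deterministic terms dominate $t$), we obtain $\|\mW_k\|^2 \lesssim d_{k-1}$ with probability at least $1 - \epsilon/L$.

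A union bound over the at most $L$ indices $k$ then makes all of the individual operator-norm bounds hold simultaneously with probability at least $1-\epsilon$. Multiplying them yields
\[
\|\mS_l(\vx)\|^2 \;\leq\; \prod_{k=l+1}^{L-1} \|\mW_k\|^2 \;\lesssim\; \prod_{k=l+1}^{L-1} d_{k-1} \;=\; \prod_{k=l}^{L-2} d_k,
\]
which is exactly the claim. The main subtlety is not in the argument itself but in verifying that this deliberately crude bound is strong enough for its downstream use: since the activation matrices $\mSigma_k$ typically zero out roughly half of the rows, one might hope for a factor-of-$2$ gain per layer, but we choose not to track it. The point is that when combined with the tight Frobenius bound of Lemma~\ref{lemma:Frob-S}, the ratio $\|\mS_l(\vx)\|_F^2/\|\mS_l(\vx)\|^2$ scales like $d_{L-1}\cdot 2^{-L+l+1}$, and the depth-compatible width assumption $d_{L-1} \gtrsim 2^L \log(nL/\epsilon)$ in Theorem~\ref{thm:deep-main} makes this ratio large enough to drive the Hanson-Wright-style concentration step underlying Lemma~\ref{lemma:min-B2}.
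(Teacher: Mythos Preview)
Your proof is correct and follows essentially the same route as the paper: bound $\|\mSigma_k(\vx)\|\le 1$, apply sub-multiplicativity to reduce to $\prod_{k=l+1}^{L-1}\|\mW_k\|$, control each factor via the Gaussian operator-norm bound \cite[Theorem 4.4.5]{vershynin2018high} with $t\asymp\sqrt{\log(L/\epsilon)}$, and finish with a union bound and an index shift. Your added paragraph explaining why this deliberately loose bound suffices (via the ratio $\|\mS_l\|_F^2/\|\mS_l\|^2$ and the width condition $d_{L-1}\gtrsim 2^L\log(nL/\epsilon)$) is not part of the proof proper but is a helpful piece of context that the paper leaves implicit.
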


\begin{proof}
    By Theorem 4.4.5 of \cite{vershynin2018high}, for any $k \in [L-1]$ and all $t \geq 0$
    \[
        \P\left(\|\mW_k\| \leq C(\sqrt{d_{k - 1}} + \sqrt{d_{k}} + t)  \right) \geq 1 - 2 e^{-t^2}.
    \]
    As $d_{k-1} \geq d_{k} \geq \sqrt{\log \frac{2L}{\epsilon} }$, then setting $t = \sqrt{\log \frac{2}{\epsilon} } $ yields
    \begin{align*}
        \P\left(\|\mW_k\| \leq 3C_1 \sqrt{d_{k - 1}} \right) &\geq \P\left(\|\mW_k\| \leq C(\sqrt{d_{k - 1}} + \sqrt{d_k} + t)\right)\\
        &\geq 1 - \frac{\epsilon}{L}.
    \end{align*}
    Using a union bound it follows that
    \[
    \P\left(\|\mW_k\| \leq 3C_1 \max \{ \sqrt{d_{l -1 }},  \sqrt{d_{l }}\} \;\; \forall k \in [ L - 1] \right) \geq 1 - \epsilon.
    \]
    Note that $\| \mSigma_k(\vx) \| \leq 1 $ for all $ k \in [L-1]$, therefore conditional on the above event we have
    \begin{align*}
        \| \mS_{l}(\vx)\| &= \left\|\mSigma_{l}(\vx) \left(\prod_{k = l+1}^{L - 1}\mW_k^T \mSigma_k(\vx)\right) \right\| \\
        &\leq \|\mSigma_{l}(\vx)\|\left(\prod_{k = l+1}^{L - 1}\|\mW_k\| \|\mSigma_k(\vx)\|\right)\\
        &\leq \prod_{k = l+1}^{L - 1}\|\mW_k\|\\
        &\lesssim \prod_{k = l}^{L - 2}\sqrt{d_{k}}.
    \end{align*}
    To conclude we square both sides.
\end{proof}

\subsection{Proof of Lemma \ref{lemma:min-B2}}

\LemmaMinBTwo*

\begin{proof}
    Let $\vx \in \S^{d_0-1}$ be arbitrary and recall $\mS_{l}(\vx) = \mSigma_l(\vx) \left( \prod_{k = l+1}^{L-1} \mW_k^T \mSigma_k(\vx) \right)$. Also recall that $\mW_L^T \in \R^{d_{L-1}}$ is distributed as $\mW_L^T \sim \mathcal{N}(\textbf{0}_{d_{L-1}}, \textit{I}_{d_{L_1}})$. Therefore by \citet[Theorem 6.3.2]{vershynin2018high} for any $\mA \in \R^{d_2 \times d_{L-1}}$ and $t \geq 0$
    \begin{align*}
        \P( | \|\mA \mW_L^T \|_2 - \|\mA \|_F | \geq t) \leq 2 \exp \left( -\frac{Ct^2}{ \|\mA \|_2^2}\right)
    \end{align*}
    for some constant $C>0$. As a result, with $t = \tfrac{1}{2}\| \mA \|_F^2$ then for some constant $C>0$
    \[
    \P \left( \frac{1}{4} \| \mA \|_F^2 \leq \| \mA \mW_L^T \|_2^2 \leq \frac{3}{4} \| \mA \|_F^2 \right) \geq 1 - \exp\left( -C\frac{\| \mA \|_F^2}{\| \mA \|_2^2}\right).
    \]
    Therefore, in order to lower bound $\| \mS_{l}(\vx) \mW_L^T\|_2^2$ with high probability it suffices to condition on a suitable upper bound for $\| \mS_{L-1}(\vx) \|_2^2$ and a suitable lower bound for $\| \mS_{L-1}(\vx) \|_F^2$. Let $\omega$ denote the event that both
    \[
    \| \mS_l \|_F^2 \asymp 2^{L-l-1}\prod_{k=l}^{L-1} d_k 
    \]
    and 
    \[
    \| \mS_l(\vx) \|^2 \lesssim \prod_{k=l}^{L-2} d_{k} 
    \]
    are true. Combining Lemmas \ref{lemma:Frob-S} and \ref{lemma:Op-S} using a union bound, then as long as $L \geq 3$, $d_k \geq d_{k+1}$ and $d_k \gtrsim L^2\log \frac{nL}{\epsilon}$ for all $k \in [L - 1]$ then $\P(\omega) \geq 1 - \tfrac{\epsilon}{2}$. As a result and aslo as $d_{L-1} \gtrsim 2^L \log(2 / \epsilon)$ then 
    \begin{align*}
    \P \left(  \| \mS_l(\vx)  \mW_L^T \|_2^2 \asymp 2^{L-l-1}\prod_{k=l}^{L-1} d_k  \right) & \geq \P \left(  \| \mS_l(\vx)  \mW_L^T \|_2^2 \asymp 2^{L-l-1}\prod_{k=l}^{L-1} d_k \; \mid \; \omega  \right) \P(\omega)\\
    & \geq \P \left( \frac{1}{4} \| \mS_l(\vx)  \|_F^2 \leq \| \mS_l(\vx)  \mW_L^T \|_2^2 \leq \frac{3}{4} \| \mS_l(\vx) \|_F^2  \; \mid \; \omega \right)  \P(\omega) \\
    &\geq 1 - \exp\left( -C 2^{-L} \frac{ \prod_{k=l}^{L-1} d_k}{\prod_{k=l}^{L-2} d_{k}} \right)\P(\omega)\\
    & \geq 1 - \exp\left( -C 2^{-L} d_{L-1} \right)\P(\omega)\\
    & \geq \left(1 - \frac{\epsilon}{2} \right)\left(1 - \frac{\epsilon}{2} \right)\\
    & \geq 1 - \epsilon
    \end{align*}
    as claimed.
\end{proof}

\subsection{Proof of Theorem \ref{thm:deep-main}} \label{app:thm-deep-main}

\ThmDeepMain*

\begin{proof}
    Recall \eqref{eq:min-eig-NTK-bounds1},
    \[
    2^{L-1}\left( \prod_{l = 1}^{L - 1}\frac{1}{d_l}\right) \lambda_{\min}\left( \mF_{1} \mF_{1}^T \right) \min_{i \in [n]} \|[\mB_2]_{i,:} \|^2 \leq \lambda_{\min}(\mK) \leq  2^{L-1}\left( \prod_{l = 1}^{L - 1}\frac{1}{d_l}\right) \sum_{l=0}^{L-1} \| f_l(\vx_i) \|^2 \| [\mB_{l+1}]_{i,:} \|^2,
    \]
    where the upper bound holds for any $i \in [n]$. We start by analyzing the lower bound. Observe that $ \mF_1\mF_1^T= \sigma(\mW_1 \mX)^T \sigma(\mW_1 \mX)$ has the same distribution as $d_1 \mK_2$ in the shallow setting; see \eqref{eq:ntk-shallow-decomp}. Let $\lambda_2$ be defined as in Lemma \ref{lemma:K2-inf}:
    \[
    \lambda_2 = d_0 \lambda_{\min} \left( \E_{\vu \sim U(\S^{d_0 -1})}\left[ \sigma(\vu^T\mX)^T\sigma(\vu^T\mX) \right]\right) = \lambda_{min}\left( \mK_{\sqrt{d_0} \sigma}^{\infty} \right).
    \]
    As the dataset $\vx_1, \vx_2, \cdots, \vx_n \in \S^{d_0 -1}$ is $\delta$-separated then by Lemma \ref{corr:hemisphere-transform-asymptotics}
    \[
    \lambda_2 \gtrsim  \left(1+ \frac{d_0\log(1/\delta) }{\log d_0}\right)^{-3}\delta^4.
    \]
    Furthermore, if $d_1 \gtrsim \tfrac{n}{\lambda_2} \log \left( \tfrac{n}{\lambda_2}\right) \log \left( \tfrac{n}{\epsilon}\right)$ then by Lemma \ref{lemma:K2-inf}
    \[
    \lambda_{\min}(\mF_1\mF_1^T) \gtrsim d_1 \lambda_2
    \]
    with probability at least least $1-\tfrac{\epsilon}{4}$ and as a result 
    \[
    \lambda_{\min}(\mF_1\mF_1^T) \gtrsim d_1 \left(1 + \frac{\log(n / \epsilon)}{\log(d_0)} \right)^{-3} \delta^4
    \]
    with probability at least $1- \tfrac{\epsilon}{4}$. Furthermore, as $L \geq 3$, $d_l \geq d_{l+1}$ for all $l \in [L - 1]$ and $d_{L-1} \gtrsim 2^L \log \left (\frac{4nL}{\epsilon}\right)$ then
    \[
    \min_{i \in [n]} \|[\mB_2]_{i,:} \|^2 \gtrsim 2^{-L} \prod_{k = 2}^{L-1} d_k
    \]
    with probability at least $1 - \tfrac{\epsilon}{4}$. Via a union bound we conclude that the condition
    \begin{align*}
      2^{L-1}\left( \prod_{l = 1}^{L - 1}\frac{1}{d_l}\right) \lambda_{\min}(\mF_1\mF_1^T)  \min_{i \in [n]} \|[\mB_2]_{i,:} \|^2 & \gtrsim \left(1 + \frac{\log(n / \epsilon)}{\log(d_0)} \right)^{-3} \delta^4
    \end{align*}
    holds with probability at least $1 - \tfrac{\epsilon}{2}$. Fixing some $i\in [n]$, for the upper bound observe trivially by construction that
    \begin{align*}
        \| f_0(\vx_i) \|^2 \| [\mB_1]_{i,:} \|^2 = 1.
    \end{align*}
    By assumption $d_k \gtrsim L^2 \log(4L^2/\epsilon)$ for all $k \in [L-1]$. With $l \in [0, L-1]$ then by Lemma \ref{lemma:FeatureNorms}
    \[
    \| f_l(\vx_i) \|^2 \lesssim 2^{-l}\prod_{k=1}^{l} d_k
    \]
    holds with probability at least $1- \tfrac{\epsilon}{4L}$. Likewise by Lemma \ref{lemma:Frob-S} for $l \in [2, L]$,
    \[
    \| [\mB_{l}]_{i,:} \|^2 = \| \mS_l(\vx_i)\mW_L^T \| \lesssim 2^{-L+l+1} \prod_{k=l}^{L-1} d_k
    \]
    with probability at least $1- \tfrac{\epsilon}{4L}$. Combining these via a union bound then for any $l \in [0, L-1]$,
    \[
     \| f_l(\vx_i) \|^2 \| [\mB_{l}]_{i,:} \|^2 \lesssim 2^{-L+1} \prod_{k=1}^{L-1} d_k
    \]
    holds with probability at least $1- \tfrac{\epsilon}{2L}$. Again using a union bound now over the layers, it follows that
    \begin{align}
        2^{L-1}\left( \prod_{l = 1}^{L - 1}\frac{1}{d_l}\right) \sum_{l=0}^{L-1} \| f_l(\vx_i) \|^2 \| [\mB_{l+1}]_{i,:} \|^2 \lesssim L 2^{L-1}\left( \prod_{l = 1}^{L - 1}\frac{1}{d_l}\right)  2^{-L+1} \left( \prod_{l=1}^{L-1} d_l \right) = L
    \end{align}
    with probability at least $1 - \tfrac{\epsilon}{2}$. As a result, using a final union bound we conclude both the upper and lower bounds hold with probability at least $1-\epsilon$.
\end{proof}
\end{document}